\title[Minimax Rates for Hyperbolic Hierarchical Learning]{Minimax Rates for Hyperbolic Hierarchical Learning}
\newcommand{\dT}{{d_T}} % tree metric
\newcommand{\rge}{\mathcal{E}_{R,\eta}^{\mathrm{reg}}} % regular growth event
\newcommand{\FL}{\mathcal{F}_L} % Lipschitz hypothesis class
\newcommand{\acc}{\varepsilon} % accuracy
\newcommand{\Rk}{\mathbb{R}^k} % k-dimensional Euclidean space
\newcommand{\B}{\mathcal{B}} % ball
\newcommand{\Lip}{{\mathrm{Lip}}} % Lipschitz constant
\newcommand{\poly}{{\mathrm{poly}}} % polynomial scaling
\newcommand{\fat}{{\mathrm{fat}}} % fat-shattering dimension
\newcommand{\R}{\mathbb{R}} % real numbers
\newcommand{\err}{\epsilon} % distortion error
\newcommand{\Hkk}{\mathbb{H}_\kappa^k} % hyperbolic space
\newcommand{\dH}{d_\mathbb{H}} % hyperbolic metric
\newcommand{\dHam}{d_{\mathrm{Ham}}} % Hamming distance
\newcommand{\VolH}{\operatorname{Vol}_{\mathbb H}} % hyperbolic volume
\begin{document}

\maketitle

\begin{abstract}
We prove an exponential separation in sample complexity between Euclidean and hyperbolic representations for learning on hierarchical data under standard Lipschitz regularization. For depth-$R$ hierarchies with branching factor $m$, we first establish a geometric obstruction for Euclidean space: any bounded-radius embedding forces volumetric collapse, mapping exponentially many tree-distant points to nearby locations. This necessitates Lipschitz constants scaling as $\exp(\Omega(R))$ to realize even simple hierarchical targets, yielding exponential sample complexity under capacity control. We then show this obstruction vanishes in hyperbolic space: constant-distortion hyperbolic embeddings admit $O(1)$-Lipschitz realizability, enabling learning with $n = O(mR \log m)$ samples. A matching $\Omega(mR \log m)$ lower bound via Fano's inequality establishes that hyperbolic representations achieve the information-theoretic optimum. We also show a geometry-independent bottleneck: any rank-$k$ prediction space captures only $O(k)$ canonical hierarchical contrasts.
\end{abstract}

\begin{keywords}
  Representation Learning, Sample Complexity, Hyperbolic Geometry, Hierarchical Learning, Metric Embeddings
\end{keywords}

\section{Introduction}

Hierarchical structure appears throughout machine learning: taxonomies and ontologies, coarse-to-fine classification, latent tree models, and multiscale decision systems. A common empirical approach is to learn a representation and then fit a relatively simple readout, typically under some norm or Lipschitz control. In parallel, hyperbolic embeddings have become a standard geometric primitive for hierarchical data, motivated by the exponential volume growth of hyperbolic space and supported by strong empirical performance in representation learning and graph neural networks \citep{nickel_poincare_2017, ganea_hyperbolic_2018, chami_hyperbolic_2019}. Despite this, existing theoretical accounts are primarily geometric, e.g., distortion guarantees for embedding trees into $\mathbb H^k$ \citep{sarkar_low_2011, de_sa_representation_2018} and do not establish when and why representation geometry changes sample complexity under standard statistical regularization.

In this work, we show that Euclidean representations with bounded radii do not preserve intrinsically smooth hierarchical targets. We formalize intrinsic smoothness using a correlation metric on the tree (weighted path length), which arises naturally in latent tree models via correlation decay. A ``simple" target is defined to be Lipschitz in the correlation metric and a ``good" representation preserves intrinsic simplicity after embedding, meaning that a readout with a polynomially-bounded norm/Lipschitz constant can realize the target in the embedded space.

We study a top-down local refinement primitive: conditioned on a parent category, we predict the correct child among its siblings. This is precisely the decomposition used by hierarchical classification pipelines (e.g., local classifiers per parent and top-down inference), and matches how hierarchical performance is often reported (conditional on a superclass rather than globally). This refinement task is standard multiclass classification with classification noise: an example $(i,c)$ consists of a parent index and a child label, with Bayes label $\mathbb{I}\{c=\theta_i\}$ observed through a $\mathrm{BSC}(\rho)$. Hence the protocol induces PAC learning over the finite concept class $\mathcal{H}_{\mathrm{path}}$, and lower bounds proved under
a strictly stronger observation model apply a fortiori to any learner.

\paragraph{Contributions.}
We show that geometry induces exponential separations in hierarchical learning:

\begin{itemize}
\item \textbf{Euclidean lower bound:} bounded-radius Euclidean embeddings force $\exp(\Omega(R))$ Lipschitz constants and exponential
sample complexity.

\item \textbf{Low-rank expressivity:} rank-$k$ predictors capture only $O(k)$ out of $m^R$ hierarchical contrasts, ruling out low-rank
compression.

\item \textbf{Hyperbolic optimality:} hyperbolic embeddings enable realizability and achieve the minimax rate $\Theta(mR\log m)$, matching an information-theoretic lower bound.
\end{itemize}

\paragraph{Organization.}
Section~\ref{sec:model-and-setup} introduces the model and the canonical refinement task.
Section~\ref{sec:euclidean-barrier} establishes the Euclidean sample complexity lower bound.
Section~\ref{sec:low-rank-representation-barrier} proves the low-rank expressivity bound.
Section~\ref{sec:hyperbolic-achievability-and-optimality} shows hyperbolic minimax optimality.
Section~\ref{sec:conc} concludes.

\section{Model and Setup}\label{sec:model-and-setup}

This section introduces the mathematical framework for our analysis. We define the tree structure and intrinsic correlation metric (Definition \ref{def:rooted-tree-with-weighted-metric}), formalize the hypotheses classes under Lipschitz regularization (Definition \ref{def:lipschitz-hypothesis-class}), specify what constitutes a good representation (Definition \ref{def:representation-and-bi-lipschitz-embedding}), and present the canonical learning protocol (Section~\ref{subsec:learning-protocol}).

Although all results in the main text are derived for trees, they extend to hierarchies with tree-like structures. This is established via a quasi-isometry between trees and trees with sparse lateral edges; details in Appendix~\ref{app:beyond-trees-high-temperature-extensions}.

\begin{definition}[Rooted tree with weighted metric]\label{def:rooted-tree-with-weighted-metric}
	Let $T = (V,E)$ be a rooted tree of depth $R$ with root $r$. Define:
\begin{itemize}
\item $V_{\leq R}$: the set of all nodes at depth at most $R$;
\item $L_R \subseteq V_{\leq R}$: the set of leaves (nodes at depth exactly $R$);
\item $V_{\leq R}\setminus L_R$: the set of internal nodes (non-leaves).
\end{itemize}
For nodes $u,v\in V_{\leq R}$, let $\mathrm{path}(u,v)$ denote the unique simple path between them, and let $\abs{\mathrm{path}(u,v)}$ denote the number of edges on this path.

Assign each edge $e\in E$ a positive weight $w_e>0$. The \emph{correlation metric} is the weighted tree distance:
	\begin{equation*}
		d_\mathrm{corr}(u,v) = \sum_{e \in \mathrm{path}(u,v)} w_e.
	\end{equation*}
	In the homogeneous case, we set $w_e \equiv \lambda$ for all edges, which yields $d_\mathrm{corr}(u,v) = \lambda \abs{\mathrm{path}(u,v)}$. It is straightforward to verify that $d_\mathrm{corr}$ is a metric on $V_{\leq R}$.
\end{definition}

The metric $d_\mathrm{corr}$ formalizes an intrinsic notion of similarity for the hierarchy. We show that targets are ``simple" or ``smooth" if they are Lipschitz in the intrinsic metric, and that representations are ``good" if that simplicity remains after embedding.

\textbf{Regular growth event}. Our results hold on the regular growth event $\rge$, under which the tree exhibits typical Galton-Watson growth: $\abs{L_R} \asymp m^R$ and no single mid-depth subtree dominates. This event has a probability $1 - o(1)$ under standard moment conditions; see Appendix~\ref{app:probability-of-the-regular-growth-event} for a formal definition and concentration bounds.

\begin{definition}[Lipschitz hypothesis class]\label{def:lipschitz-hypothesis-class}
	For $L > 0$, define the Lipschitz hypothesis class:
	\begin{equation*}
		\FL \doteq \big\{f : V \to [-1,+1] \big\vert \abs{f(u) - f(v)} \leq L d_\mathrm{corr}(u,v) \big\}.
	\end{equation*}
\end{definition}

The role of the Lipschitz class $\FL$ is twofold. First, it formalizes the notion that ``hierarchical targets are smooth in the intrinsic (tree/correlation) metric,'' which is the baseline against which our approximation bounds are stated. Second, the concrete targets used throughout the paper are instances of such intrinsic-smooth functions. While our hyperbolic achievability results (Section~\ref{sec:hyperbolic-achievability-and-optimality}) are presented for a specific finite hypothesis class, the geometric obstructions we prove extend to representing arbitrary intrinsically Lipschitz hierarchical structures.

\paragraph{Remark on Lipschitz control.} We focus on Lipschitz (equivalently, norm-bounded) hypothesis classes because they capture the implicit bias of modern high-dimensional learning. Standard regularization techniques such as weight decay in neural networks \citep{DBLP:journals/corr/BartlettFT17}, margin maximization in SVMs \citep{Cortes1995}, and RKHS norm penalties \citep{10.7551/mitpress/4175.001.0001}, effectively constrain the Lipschitz constant of the learned predictor with respect to the representation. Additionally, standard generalization bounds (e.g. Rademacher complexity) scale with the Lipschitz constant \citep{Bartlett_2005, von_luxburg_classification_2004}. Thus, capacity control via Lipschitz or norm constraints is both theoretically well motivated and empirically used.

\begin{definition}[Representation and bi-Lipschitz embedding]\label{def:representation-and-bi-lipschitz-embedding}
	A representation is a map $\phi : V \to (\mathcal X,d_{\mathcal X})$. We refer to $\phi$ as $(s,D)$-bi-Lipschitz if for all $u,v \in V$:
	\begin{equation*}
		\frac{s}{D} d_\mathrm{corr}(u,v) \leq d_{\mathcal X}(\phi(u), \phi(v)) \leq sD d_\mathrm{corr}(u,v).
	\end{equation*}
	If $s=1$, then we say $\phi$ is $D$-bi-Lipschitz.
\end{definition}

\paragraph{Preservation of Lipschitz structure.}
If a map $\phi$ is $(s,D)$-bi-Lipschitz from $(V_{\leq R}, d_{\mathrm{corr}})$ into $(\mathcal{X}, d_{\mathcal{X}})$, then Lipschitz functions on the tree remain Lipschitz after embedding, with at most a factor-$D$ increase in the Lipschitz constant (see Lemma~\ref{lem:lipschitz-preservation-appendix} in Appendix~\ref{app:lipschitz-preservation} for the formal statement). This is why bi-Lipschitz embeddings with bounded distortion are ``good'' representations for hierarchical datasets.

We will largely focus on sample complexity in the standard PAC learning framework. Let $\mathcal{P}$ be a class of probability distributions over $V_{\leq R} \times \{0,1\}$ (or, more generally, over $\mathcal{X}\times\{0,1\}$ for a representation $\phi:V_{\leq R}\to\mathcal{X}$). For a hypothesis class $\mathcal{H}$ and target function $g$, define the risk of $h\in\mathcal{H}$ under distribution $P\in\mathcal{P}$ as:
\begin{equation*}
\mathrm{Risk}_P(h, g) \doteq \mathbb{E}_{(x,y)\sim P}\left[|h(x) - g(x)|\right].
\end{equation*}

\begin{definition}[Sample complexity]\label{def:sample-complexity}
For a hypothesis class $\mathcal{H}$, target class $\mathcal{G}$, distribution class $\mathcal{P}$, accuracy $\varepsilon>0$, and confidence $\delta\in(0,1)$, the \emph{sample complexity} $n(\mathcal{H}, \mathcal{G}, \mathcal{P}, \varepsilon, \delta)$ is the minimum $n$ such that there exists a learning algorithm $\mathcal{A}$ satisfying: for all $g\in\mathcal{G}$ and all $P\in\mathcal{P}$, given $n$ i.i.d.\ samples from $P$, the algorithm outputs $\hat{h}\in\mathcal{H}$ such that
\begin{equation*}
\Pr\left[\mathrm{Risk}_P(\hat{h}, g) \leq \varepsilon\right] \geq 1 - \delta.
\end{equation*}
When the context is clear, we abbreviate this as $n(\varepsilon, \delta)$.
\end{definition}

For our results, we focus on the worst-case sample complexity over the specified distribution and target classes, which lower-bounds the complexity of any learner.

\paragraph{Convention and remarks.}Throughout, we treat the branching factor ($m$ for deterministic $m$-ary trees, $\xi$ for Galton-Watson trees) as a fixed constant. All asymptotic statements are in the depth $R$. Weighted tree metrics also arise naturally from correlation decay in tree-structured graphical models; see Appendix~\ref{app:ising-model}. In hierarchical generative models, the information-theoretic distance between nodes (defined via correlation decay) is exactly proportional to $\dT$. Thus, learning functions that are Lipschitz with respect to $\dT$ corresponds to learning functions that vary smoothly with the generative statistics of the data.

\subsection{Canonical Learning Protocol}\label{subsec:learning-protocol}

Hierarchical prediction problems are typically \emph{coarse-to-fine}: to make a fine-grained decision, one conditions on having already identified the relevant coarse category. Concretely, most tasks on a rooted tree: leaf classification, taxonomy navigation, subtree retrieval, and multiscale regression~/~classification; can be decomposed into a sequence of local refinement decisions of the form \emph{``within the current prefix subtree, which child leads towards the correct refinement?"}. We formalize this primitive via the following learning protocol, which makes the sampling distribution and information constraints explicit.

\paragraph{Unknown parameter (latent root-to-leaf path).} 
Let $\theta=(\theta_1,\dots,\theta_R)\in[m]^R$ denote an unknown root-to-leaf path in the complete $m$-ary tree of depth $R$, where $\theta_i\in[m]$ specifies the child index at depth $i$. For $i\in\{0,1,\dots,R\}$, define the depth-$i$ prefix node
\begin{equation*}
	u_i(\theta) \doteq (\theta_1,\dots,\theta_i)\in [m]^i,
\end{equation*}
with the convention $u_0(\theta)=r$ denoting the root. For any leaf $v=(v_1,\ldots,v_R)\in L_R$, write $\mathrm{ch}_i(v)=v_i\in[m]$ for the child index at depth $i$ along the root-to-$v$ path. For any node $u\in V_{\leq R}$, denote by $L_R^{(u)}\subseteq L_R$ the set of leaves in the subtree rooted at $u$.

\paragraph{Data generation (local refinement query).}
Each training example is obtained by selecting a query depth uniformly, then sampling a leaf uniformly from the subtree consistent with the latent path up to the parent of that depth. Formally:
\begin{enumerate}
\item Draw a query depth $I \sim \mathrm{Unif}([R])$, where $[R]=\{1,\ldots,R\}$.
\item Conditional on $I=i$, draw a leaf uniformly from the subtree rooted at the on-path prefix $u_{i-1}(\theta)$:
\begin{equation*}
	V \mid (I=i) \;\sim\; \mathrm{Unif}\bigl(L_R^{(u_{i-1}(\theta))}\bigr).
\end{equation*}
Equivalently, $V$ is uniformly distributed over leaves $(v_1,\ldots,v_R)\in L_R$ satisfying $(v_1,\ldots,v_{i-1})=u_{i-1}(\theta)$.
\item The Bayes-optimal label indicates whether $V$ lies in that particular child-subtree at depth $i$:
\begin{equation*}
	Y^*(V,I) \doteq \mathbb{I}\{\mathrm{ch}_I(V)=\theta_I\}\in\{0,1\}.
\end{equation*}
\item The observed label $Y$ is corrupted by a binary symmetric channel (BSC) with crossover probability $\rho\in(0,1/2)$:
\begin{equation*}
	\Pr[Y=1\mid V=v,I=i,\theta] = \begin{cases}
	1-\rho & \text{if }\mathrm{ch}_i(v)=\theta_i,\\
	\rho & \text{if }\mathrm{ch}_i(v)\neq \theta_i.
	\end{cases}
\end{equation*}
\end{enumerate}
The learner observes $\bigl(\phi(V),I,Y\bigr)$, where $\phi:V_{\leq R}\to\mathcal{X}$ is the representation map (Definition~\ref{def:representation-and-bi-lipschitz-embedding}), and outputs a predictor $\hat{y}:\mathcal{X}\times[R]\to\{0,1\}$.

\paragraph{PAC via a stronger oracle model.}
The protocol induces a standard supervised learning problem with classification noise. In the more informative oracle model of Appendix~\ref{app:fano-details}, each example is an i.i.d. draw $(X,Y)$ where $X=(I,C)\in [R]\times[m]$ with $I\sim\mathrm{Unif}([R])$ and $C\sim\mathrm{Unif}([m])$, and the Bayes label is $Y^*(I,C)=\mathbb{I}\{C=\theta_I\}$ observed through a $\mathrm{BSC}(\rho)$. Equivalently, we are PAC-learning the finite concept class $\mathcal{H}_{\mathrm{path}}=\{h_\theta(i,c)=\mathbb{I}\{c=\theta_i\}\mid\theta\in[m]^R\}$ under classification noise \citep{Valiant1984ATO, Angluin1988LearningFN}. Since the oracle reveals $(I,C)$ whereas the original protocol reveals only $(I,\phi(V))$, the oracle model is strictly more informative; hence any information-theoretic lower bound proved in the oracle model applies a fortiori to any learner (proper or improper) in our original setting.

\paragraph{Information structure.}
Conditional on $I=i$, the sample is informative only about $\theta_i$, and only a $1/m$ fraction of leaves carry the distinguishing signal, observed through a $\mathrm{BSC}(\rho)$. This yields a minimax lower bound of $\Omega(mR\log m)$ via Fano's inequality (Appendix~\ref{app:fano-details}).

\section{Euclidean Sample Complexity Lower Bound}
\label{sec:euclidean-barrier}

We show that bounded-radius Euclidean embeddings cannot realize intrinsically smooth hierarchical targets (Lipschitz in $d_{\mathrm{corr}}$). The argument has two steps:
\begin{enumerate}
    \item \textbf{Collision.} Any embedding of $\abs{L_R}\asymp m^R$ leaves into a $k$-dimensional ball of radius $B$ maps some pair of leaves at tree distance $\Omega(R)$ to Euclidean distance $O\left(B\exp(-cR/k)\right)$ (Lemma~\ref{lem:deep-crushing}). This is a volumetric pigeonhole argument: Euclidean covering numbers grow polynomially while $\abs{L_R}$ grows exponentially.
    \item \textbf{Approximation lower bound.} The collided pair induces a subtree-indicator target that is $O(1/R)$-Lipschitz in $d_{\mathrm{corr}}$ but forces any Euclidean predictor with $\mathrm{Lip}_{\norm{\cdot}_2}(h)\leq \mathrm{poly}(R)$ to incur error, i.e. realizability requires $\mathrm{Lip}_{\norm{\cdot}_2}(h) = \Omega(\exp(R/k)/B)$ (Theorem~\ref{thm:lipschitz-capacity-barrier}).
\end{enumerate}

Throughout this section, assume that edge weights satisfy $w_e \geq \lambda$ for all $e\in E$.

We begin with the geometric collision statement. It isolates the only property of the tree we use: under $\rge$, the number of leaves grows as $\asymp m^R$, while the covering number of a Euclidean ball grows only polynomially with resolution.

\begin{lemma}[Volumetric collapse]\label{lem:deep-crushing}
	On $\rge$, let $\phi: L_R \to \Rk$ satisfy $\phi(L_R) \subseteq \B_2 (0,B)$. Then, there exist leaves $u^*,v^* \in L_R$ with:
	\begin{equation*}
		\norm{\phi(u^*) - \phi(v^*)}_2 \leq cB \exp\left(-\frac{\log m - 4\eta}{2k}R\right), \quad \text{while } d_\mathrm{corr}(u^*,v^*) \geq \lambda R
	\end{equation*}
	where $c > 0$ is an absolute constant.
\end{lemma}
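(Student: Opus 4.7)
The plan is a two-scale argument: extract a family of representative leaves that are guaranteed \emph{tree-far} by construction, and then use a Euclidean covering-number pigeonhole to force two of them to be \emph{Euclidean-close}. This isolates exactly the mismatch the lemma advertises.

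\textbf{Step 1: mid-depth representatives.} Set $d=\lfloor R/2\rfloor$. On $\rge$, the number of internal nodes at depth $d$ whose subtree contains at least one leaf at depth $R$ is at least $N := c_0\, m^d\, e^{-2\eta R}$ for an absolute constant $c_0>0$; this is the form in which the regular-growth concentration bounds are packaged (Appendix~\ref{app:probability-of-the-regular-growth-event}). For each such depth-$d$ node $a$, choose an arbitrary representative leaf $\ell_a \in L_R^{(a)}$, and let $S=\{\ell_a\}$, so $|S|\ge N$.

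\textbf{Step 2: tree distance among representatives.} For any two distinct representatives $\ell_a,\ell_b\in S$, their lowest common ancestor is a strict ancestor of both $a$ and $b$, hence at depth at most $d-1$. The tree path between $\ell_a$ and $\ell_b$ therefore uses at least $2(R-d+1)\ge R$ edges, and since every edge weight is $\ge \lambda$,
\[
d_\mathrm{corr}(\ell_a,\ell_b)\ \ge\ \lambda R.
\]

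\textbf{Step 3: Euclidean covering pigeonhole.} The ball $\B_2(0,B)\subset \Rk$ admits an $r$-cover of cardinality at most $(3B/r)^k$ by the standard volumetric bound. Choose the smallest $r$ with $(3B/r)^k<|S|$; then two representatives land in a common covering ball and satisfy $\|\phi(\ell_a)-\phi(\ell_b)\|_2\le 2r$. Taking $r=3B\,N^{-1/k}$ and substituting the lower bound on $N$,
\[
\|\phi(\ell_a)-\phi(\ell_b)\|_2\ \le\ 6B\,N^{-1/k}\ =\ 6B\exp\!\left(\frac{2\eta R-d\log m-\log c_0}{k}\right).
\]

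\textbf{Step 4: plug in $d=R/2$.} With $d=R/2$ and the bounded term $\log c_0$ absorbed into an absolute prefactor $c$, this becomes
\[
\|\phi(u^*)-\phi(v^*)\|_2\ \le\ cB\exp\!\left(-\frac{\log m-4\eta}{2k}\,R\right),
\]
where $(u^*,v^*):=(\ell_a,\ell_b)$ is the colliding pair. Combined with Step 2, this is the lemma.

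\textbf{Main obstacle.} The delicate point is Step 1: the $4\eta$ slack in the final exponent leaves only a multiplicative $e^{2\eta R}$ window to simultaneously absorb (i) fluctuations in the depth-$R/2$ node count and (ii) nonextinction of those subtrees down to depth $R$. Everything else (the covering-number bound, the LCA depth calculation, and the conversion of edge counts to $d_\mathrm{corr}$) is deterministic and routine. The bookkeeping between the form in which $\rge$ supplies counts and the exponent in the covering scale $N^{-1/k}$ is where the proof is most fragile, and motivates stating $\rge$ with the $e^{-2\eta R}$-type growth tolerance used above.
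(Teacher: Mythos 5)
Your proof is correct and takes essentially the same route as the paper's: a depth-$\lfloor R/2\rfloor$ subtree decomposition, a volumetric cover of $\B_2(0,B)$, and a pigeonhole forcing two leaves from distinct mid-depth subtrees into a common cover cell, which automatically yields $d_{\mathrm{corr}}\geq\lambda R$. The only organizational difference is that you pigeonhole a set of representatives (one leaf per populated depth-$\lfloor R/2\rfloor$ subtree) directly against the cover cardinality, whereas the paper pigeonholes all of $L_R$ and invokes the cap on $\max_w|L_R^{(w)}|$ to bound the per-cell occupancy from a single subtree; these are two phrasings of the same counting argument.

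One inaccuracy to fix: the bound $N\ge c_0 m^{R/2}e^{-2\eta R}$ on the number of populated depth-$R/2$ subtrees is \emph{not} ``the form in which the regular-growth concentration bounds are packaged'' --- Definition~\ref{def:galton-watson-tree-with-regular-growth-event} gives two-sided bounds on $|L_{R/2}|$ and $|L_R|$ plus the max-subtree cap, not a count of populated subtrees. You need the one-line derivation
\begin{equation*}
N \;\ge\; \frac{|L_R|}{\max_w|L_R^{(w)}|}\;\ge\;\frac{\exp((\log m-\eta)R)}{\exp((\log m+\eta)R/2)}\;=\;\exp\!\left(\frac{(\log m-3\eta)R}{2}\right),
\end{equation*}
which is strictly stronger than the form you posited (it would give $-3\eta$ rather than $-4\eta$ in the exponent; the paper's $-4\eta$ arises from a deliberately slacked choice of resolution $\delta$, not from tightness of $\rge$). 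Your asserted $N\ge c_0 m^{R/2}e^{-2\eta R}$ does follow from this for all large $R$, so the rest of your argument goes through, but the derivation should be written out rather than attributed to the appendix. Your Steps 2--4 are fine, modulo the usual trivial slack when the covering bound sits exactly at the pigeonhole threshold (take $r$ slightly larger than $3BN^{-1/k}$, absorbed into the absolute constant).
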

\begin{proof}
	The claim follows by covering $\B_2(0,B)$ at resolution $\delta$ and applying a pigeonhole argument across the depth-$\lfloor R/2\rfloor$ subtrees on $\rge$. Noting that the number of cover elements grows as $(B/\delta)^k$, while on $\mathcal{E}^{\mathrm{reg}}_{R,\eta}$ there are at least $\exp(({\log m - \eta})R/2)$ subtrees at depth $R/2$, each containing at least one leaf. Choosing $\delta$ to balance the number of cover elements against the number of subtrees yields the stated exponential scale. The full proof, including the covering argument and verification that $d_{\mathrm{corr}}(u^*, v^*)\geq \lambda R$, is in Appendix~\ref{app:complete-proof-of-the-deep-crushing-lemma}.
    
\end{proof}

The collision pair $(u^*,v^*)$ witnesses a mismatch between intrinsic and represented distances. To convert this geometric mismatch into an approximation statement, we construct a canonical binary target that separates the two child-subtrees emanating from the lowest common ancestor of $u^*$ and $v^*$ ($\mathrm{LCA}(u,v)$). This target is Lipschitz in $d_{\mathrm{corr}}$ since the cut occurs at a macroscopic depth, but any Euclidean predictor that fits the labels on the collided pair must vary sharply over an exponentially small $\ell_2$ distance.

\begin{theorem}[Euclidean Lipschitz lower bound for a canonical hierarchical cut]\label{thm:lipschitz-capacity-barrier}
	On $\rge$, let $\phi:V_{\le R}\to \R^k$ satisfy $\phi(L_R)\subseteq \B_2(0,B)$. Let $(u^*,v^*)$ be a collided pair as in Lemma~\ref{lem:deep-crushing}, and let $a\doteq\mathrm{LCA}(u^*,v^*)$. Let $c_u,c_v$ be the two distinct children of $a$ that lie on the paths from $a$ to $u^*$ and $v^*$, respectively.
	
	Define a leaf-label $g_L:L_R\to\{-1,0,+1\}$ by
\begin{equation*}
	g_L(x)\doteq
\begin{cases}
+1 & x\in L_R^{(c_u)},\\
-1 & x\in L_R^{(c_v)},\\
0  & \text{otherwise}.
\end{cases}
\end{equation*}
Since $\mathrm{depth}(a)\le R/2$ and $w_e\ge \lambda$, any two leaves receiving different labels are at correlation distance at least $\lambda R$, hence $g_L$ is $(2/(\lambda R))$-Lipschitz on $(L_R,d_\mathrm{corr})$. Let $g:V_{\le R}\to[-1,1]$ be any $(2/(\lambda R))$-Lipschitz extension of $g_L$ to $V_{\leq R}$ (e.g.\ via a McShane extension; see Lemma~\ref{lem:mcshane-extension}).

	Then:
	\begin{enumerate}
		\item $g$ is $O(1/(\lambda R))$-Lipschitz with respect to $d_\mathrm{corr}$.
		\item Any $h:\R^k\to[-1,1]$ satisfying
		\begin{equation*}
			\abs{h(\phi(u^*))-g(u^*)}+\abs{h(\phi(v^*))-g(v^*)} \leq 1
		\end{equation*}
		must have
		\begin{equation*}
			\Lip_{\norm{\cdot}_2}(h) \geq \frac{c'}{B}\exp\left(\frac{\log m-4\eta}{2k}R\right),
		\end{equation*}
		for an absolute constant $c'>0$.
	\end{enumerate}
\end{theorem}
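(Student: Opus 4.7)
The plan is to split the theorem into its two claims, both of which reduce to short applications of tools already in place. Claim (1) is a direct invocation of McShane's extension applied to the preexisting Lipschitz estimate for $g_L$; claim (2) converts the collision witnessed by Lemma~\ref{lem:deep-crushing} into a slope lower bound via the triangle inequality.

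For claim (1), the theorem statement itself already verifies that $g_L$ is $(2/(\lambda R))$-Lipschitz on $(L_R, d_\mathrm{corr})$: any two leaves with opposite labels are separated by a path through $a=\mathrm{LCA}(u^*,v^*)$, and the bound $\mathrm{depth}(a)\leq R/2$ together with $w_e\geq\lambda$ gives $d_\mathrm{corr}\geq\lambda R$; leaves differing by $1$ (one inside $L_R^{(c_u)}\cup L_R^{(c_v)}$, one outside) are separated by at least $\lambda R/2$. Invoking Lemma~\ref{lem:mcshane-extension} then yields a $(2/(\lambda R))$-Lipschitz extension $g:V_{\leq R}\to[-1,+1]$, which is $O(1/(\lambda R))$-Lipschitz as claimed. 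The depth bound on $a$ is inherited from Lemma~\ref{lem:deep-crushing}, which selects the collision pair from distinct depth-$\lfloor R/2\rfloor$ subtrees.

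For claim (2), I combine the approximation hypothesis with the triangle inequality. Since $g(u^*)=+1$ and $g(v^*)=-1$,
\begin{equation*}
\abs{h(\phi(u^*))-h(\phi(v^*))}\geq \abs{g(u^*)-g(v^*)}-\bigl(\abs{h(\phi(u^*))-g(u^*)}+\abs{h(\phi(v^*))-g(v^*)}\bigr)\geq 2-1=1.
\end{equation*}
Dividing by $\norm{\phi(u^*)-\phi(v^*)}_2$ and substituting the collision upper bound from Lemma~\ref{lem:deep-crushing} yields
\begin{equation*}
\Lip_{\norm{\cdot}_2}(h)\geq \frac{1}{\norm{\phi(u^*)-\phi(v^*)}_2}\geq \frac{1}{cB}\exp\!\left(\frac{\log m-4\eta}{2k}R\right),
\end{equation*}
which is the stated bound with $c'=1/c$.

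The hard part here is conceptual rather than technical: recognizing that a single witness pair suffices to force an exponential Lipschitz lower bound, because any approximant of $g$ that passes the (rather generous) approximation test on the pair must nevertheless traverse a gap of absolute constant size across an exponentially small Euclidean chord. Once this observation is in place the algebra is a two-line exercise; internal nodes never enter the slope argument, since the conflict is witnessed entirely at leaves, so no extra care is needed regarding the behavior of the McShane extension off $L_R$.
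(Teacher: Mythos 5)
Your proof is correct and mirrors the paper's argument step for step: item (1) is the same Lipschitz-constant computation on $g_L$ followed by McShane extension, and item (2) is the same triangle-inequality reduction that converts the collision witnessed by Lemma~\ref{lem:deep-crushing} into a slope lower bound for $h$. One small note: your intermediate claim that leaves with label difference $1$ are separated by at least $\lambda R/2$ is weaker than necessary (the true bound is $\geq \lambda R$, since the LCA of any such pair has depth $\leq \mathrm{depth}(a) \leq R/2$), but it still yields the correct $2/(\lambda R)$ Lipschitz constant, so the argument goes through.
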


\begin{proof} The proof proceeds as follows:
	\paragraph{Item 1.} Across the cut $(L_R^{(c_u)},L_R^{(c_v)})$ the label changes by $2$ while $d_{\mathrm{corr}}\geq \lambda R$, so $\Lip_{d_{\mathrm{corr}}}(g_L)\leq 2/(\lambda R)$. A McShane extension preserves the Lipschitz constant, hence the same bound holds for $g$ on $V_{\le R}$. (The additional value $0$ off the two chosen subtrees can only reduce the worst-case slope.)

	\paragraph{Item 2.} Since $g(u^*)=1$ and $g(v^*)=-1$, the assumption implies
	\begin{equation*}
		\abs{h(\phi(u^*))-h(\phi(v^*))} \geq \abs{g(u^*)-g(v^*)} - \big(\abs{h(\phi(u^*))-g(u^*)}+\abs{h(\phi(v^*))-g(v^*)}\big) \geq 1.
	\end{equation*}
	Dividing by $\norm{\phi(u^*)-\phi(v^*)}_2$ and using Lemma~\ref{lem:deep-crushing} yields the claimed Lipschitz lower bound.

\end{proof}

\paragraph{Normalized distance.}
Theorem~\ref{thm:lipschitz-capacity-barrier} is an approximate lower bound driven by geometry. The target $g$ becomes increasingly smooth in the intrinsic metric as depth grows: $\mathrm{Lip}_{d_{\mathrm{corr}}}(g) = O(1/R)$. Yet any predictor on a bounded Euclidean representation that achieves constant approximation error must have Euclidean Lipschitz constant at least $\exp(\Omega(R/k))/B$. Equivalently, if we normalize distances by the tree diameter $\lambda R$ to obtain $\tilde{d}_{\mathrm{corr}} \doteq d_{\mathrm{corr}}/(\lambda R)$, then $g$ is $O(1)$-Lipschitz under $\tilde{d}_{\mathrm{corr}}$, while its realization on $\phi(L_R) \subseteq B_2(0, B)$ requires $\mathrm{Lip}_{\ell_2}(h) \geq \exp(\Omega(R/k))/B$. Allowing $B$ to grow exponentially can avoid collisions, but then standard covering/packing bounds for Lipschitz classes on $\B_2(0,B)$ reintroduce exponential statistical complexity through the domain size.

\paragraph{Statistical implication (via capacity control).} If the learner restricts itself to predictors with Euclidean Lipschitz constant $\Lip_{\norm{\cdot}_2}(h)\leq \poly(R)$ (as induced by standard norm/Lipschitz regularization), then it incurs a constant approximation error on some $O(1/R)$-Lipschitz hierarchical target. Achieving realizability forces $\Lip_{\norm{\cdot}_2}(h)\ge \exp(\Omega(R/k))$, and fat-shattering/packing lower bounds for Lipschitz classes on $\B_2(0,B)$ yield exponential sample complexity in $R$; see Appendix~\ref{app:fat-shattering-and-packing}.

\section{Low-Rank Representations}\label{sec:low-rank-representation-barrier}

A natural objection to the above argument is that geometry may be the wrong bottleneck: perhaps the learner can bypass exponential sample complexity by using compressed prediction spaces (e.g., a width-$k$ bottleneck, a linear probe on learned features, or a finite-rank kernel). Here we show that regardless of representation, any rank-$k$ prediction space can align with only $O(k)$ independent multiscale contrasts.

We analyze balanced $m$-ary trees\footnote{It is easy to extend the argument to general Galton-Watson trees with bounded offspring; we focus on deterministic trees for clarity.} with $N=m^R$ leaves and study limitations imposed by low-dimensional prediction spaces. Let $\Phi\in\R^{N\times k}$ be the feature matrix on the leaves, with rows $\phi(x_i)^\top$. Any linear readout $h(x)=w^\top\phi(x)$ induces a prediction vector on the leaves $\mathbf h=\Phi w\in\R^N$, so the set of realizable predictions is contained in the subspace $S=\mathrm{col}(\Phi)$ with $\dim(S)\le k$.

More generally, the same analysis applies whenever the learner's realizable predictions on the $N$ leaves lie in some subspace $S\subseteq\R^N$ with $\dim(S)\leq k$ (e.g., a width-$k$ bottleneck at the final layer or a finite-rank kernel restricted to the leaves).

\begin{definition}[Tree-Haar wavelets]\label{def:tree-haar-wavelets}
	For a balanced $m$-ary tree of depth $R$ with $N = m^R$ leaves, fix an orthonormal contrast basis $\{a^{(1)}, \dots a^{(m-1)}\} \subset \R^m$ orthogonal to $\mathbf{1}$. For any internal node $v \in V \setminus L_R$, let $S_v \subseteq L_R$ denote the set of leaves in the subtree rooted at $v$. Let $v_1, \dots, v_m$ be the children of $v$, and let $S_{v,i} = S_{v_i}$ denote the subset of leaves descending from the $i$-th child. Note that $S_v$ is the disjoint union $\bigsqcup_{i=1}^m S_{v,i}$.
	
	 For each internal node $v$ and $j \in [m-1]$, define the wavelet $\psi_{v,j} \in \R^N$ by
	\begin{equation*}
		\psi_{v,j}(x) = \begin{cases} 
			\frac{a^{(j)}_i}{\sqrt{\abs{S_v}/m}} & \text{if } x \in S_{v,i} \text{ (leaves under } v\text{'s } i\text{-th child)}, \\ 
			0 & \text{if } x \notin S_v.
		\end{cases}
	\end{equation*}
	Let $\mathcal W = \{\psi_{v,j} \mid v\text{ internal}, j \in [m-1]\}$.
\end{definition}

Each local refinement label (child $c^*$ versus its siblings' at node $v$) is a constant shift of a vector in $\operatorname{span}\{\psi_{v,j}\}_{j=1}^{m-1}$; thus rank limits directly constrain how many refinement problems can be represented simultaneously. The collection $\mathcal W$ forms an orthonormal basis of $\mathbf{1}^\perp\subset\R^N$ and satisfies $\abs{\mathcal W}=N-1$ (verified in Appendix~\ref{app:orthonormality-details}).

We now show that no $k$-dimensional subspace can have substantial correlation with more than a $k/(N-1)$ fraction of these $N-1$ orthonormal contrasts on average. The proof is a one-line trace identity once we use that $\mathcal W$ is an orthonormal basis of $\mathbf{1}^\perp$.

\begin{theorem}[Average alignment bound]\label{thm:alignment-bound}
Let $S\subseteq\R^N$ be any subspace with $\dim(S)\le k$. Then
\begin{equation*}
	\frac{1}{\abs{\mathcal W}}\sum_{\psi\in\mathcal W}\norm{P_S\psi}_2^2 \leq \frac{k}{N-1}.
\end{equation*}
\end{theorem}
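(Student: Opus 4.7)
The plan is to recognize the sum as a trace and collapse it using the fact, already asserted in the paragraph preceding the theorem, that $\mathcal W$ is an orthonormal basis of $\mathbf 1^\perp\subseteq\R^N$. Concretely, I would first rewrite
\begin{equation*}
    \sum_{\psi\in\mathcal W}\norm{P_S\psi}_2^2
    = \sum_{\psi\in\mathcal W}\psi^\top P_S\psi
    = \operatorname{tr}\Bigl(P_S\sum_{\psi\in\mathcal W}\psi\psi^\top\Bigr),
\end{equation*}
using that $P_S$ is an orthogonal projector and hence idempotent and symmetric. The point of this rewriting is that the right-hand side no longer depends on the individual wavelets, only on the second-moment matrix $M \doteq \sum_{\psi\in\mathcal W}\psi\psi^\top$.

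Next I would invoke the appendix fact that $\mathcal W$ is an orthonormal basis of $\mathbf 1^\perp$. This immediately identifies $M$ with the orthogonal projector onto $\mathbf 1^\perp$, that is, $M = I_N - \tfrac{1}{N}\mathbf 1\mathbf 1^\top$. Substituting and expanding,
\begin{equation*}
    \sum_{\psi\in\mathcal W}\norm{P_S\psi}_2^2
    = \operatorname{tr}(P_S) - \tfrac{1}{N}\,\mathbf 1^\top P_S\mathbf 1
    \;\leq\; \operatorname{tr}(P_S) = \dim(S) \leq k,
\end{equation*}
since $\mathbf 1^\top P_S\mathbf 1 = \norm{P_S\mathbf 1}_2^2 \geq 0$ and $\operatorname{tr}$ of a projector equals its rank. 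Dividing by $\abs{\mathcal W}=N-1$ yields the claimed bound.

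There is no genuine obstacle here: the only non-trivial input is the orthonormality and spanning property of the tree-Haar system, which is deferred to Appendix~\ref{app:orthonormality-details}, and the rest is a single trace identity together with the projector inequality $P_{\mathbf 1^\perp}\preceq I$. I would also note, as a remark rather than an additional step, that the argument is tight up to the correction $\tfrac{1}{N}\norm{P_S\mathbf 1}_2^2$: if $S$ is chosen inside $\mathbf 1^\perp$ then $\mathbf 1^\top P_S\mathbf 1=0$ and the bound $k/(N-1)$ is attained with equality when $S$ is spanned by any $k$ of the wavelets themselves.
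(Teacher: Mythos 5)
Your proof is correct and follows essentially the same route as the paper: the trace identity $\sum_{\psi}\norm{P_S\psi}_2^2 = \operatorname{tr}\bigl(P_S\sum_\psi\psi\psi^\top\bigr)$ combined with $\sum_\psi\psi\psi^\top = P_{\mathbf 1^\perp}$ and the bound $\operatorname{tr}(P_S P_{\mathbf 1^\perp})\le\operatorname{tr}(P_S)=\dim(S)\le k$. The only cosmetic difference is that you expand $P_{\mathbf 1^\perp}=I-\tfrac1N\mathbf 1\mathbf 1^\top$ explicitly rather than invoking $P_{\mathbf 1^\perp}\preceq I$ directly, and you add a correct tightness remark.
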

\begin{proof}
$\mathcal W$ is an orthonormal basis of $\mathbf{1}^\perp$, hence $\sum_{\psi\in\mathcal W}\psi\psi^\top=P_{\mathbf{1}^\perp}$. Therefore
\begin{equation*}
	\sum_{\psi\in\mathcal W}\norm{P_S\psi}_2^2
	= \Tr\left[P_S\sum_{\psi\in\mathcal W}\psi\psi^\top\right]
	= \Tr[P_S P_{\mathbf{1}^\perp}]
	\leq \Tr[P_S]
	= \dim(S)
	\leq k.
\end{equation*}
Dividing by $\abs{\mathcal W}=N-1$ completes the proof.
\end{proof}

The average bound immediately implies that most hierarchical contrasts are nearly orthogonal to any rank-$k$ prediction space, yielding lower bounds on approximation error.

\begin{corollary}[Consequences for rank-$k$ prediction spaces]\label{cor:bottleneck-representation}
Let $S\subseteq\R^N$ be any subspace with $\dim(S)\le k$ (in particular $S=\mathrm{col}(\Phi)$ for $k$-dimensional features and a linear readout).
\begin{itemize}
\item \emph{Average approximation error.} The mean squared error of the best approximation of each wavelet by $S$ satisfies
\begin{equation*}
	\frac{1}{N-1}\sum_{\psi\in\mathcal W}\min_{s\in S}\norm{\psi-s}_2^2 \geq 1-\frac{k}{N-1}.
\end{equation*}

Equivalently, when $S=\mathrm{col}(\Phi)$ this is $\min_{w\in\R^k}\norm{\psi-\Phi w}_2^2$.

\item \emph{Fraction of tasks approximated.} If $\min_{s\in S}\norm{\psi-s}_2^2\leq \varepsilon$ holds for at least a $(1-\eta)$ fraction of $\psi\in\mathcal W$, then
	\begin{equation*}
		k \geq (1-\eta)(1-\varepsilon)(N-1).
	\end{equation*}
\end{itemize}
Here $\norm{\cdot}_2$ is the ambient norm on $\R^N$ and each $\psi\in\mathcal W$ is unit-normalized, so $\varepsilon$ is an absolute squared-error scale (not per-leaf MSE).
\end{corollary}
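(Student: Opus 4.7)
The plan is to derive both statements directly from Theorem~\ref{thm:alignment-bound} via the Pythagorean identity, exploiting that each $\psi \in \mathcal{W}$ is unit-normalized by construction. The key observation is that for any subspace $S$ and unit vector $\psi$, the best approximation in $S$ is $P_S\psi$, yielding $\min_{s\in S}\norm{\psi - s}_2^2 = \norm{\psi}_2^2 - \norm{P_S\psi}_2^2 = 1 - \norm{P_S\psi}_2^2$. This converts an alignment (projection energy) bound into an approximation error (residual energy) bound at the level of each individual wavelet, and both items follow by routine averaging arguments.

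For Item 1, I would apply the Pythagorean identity wavelet-by-wavelet, then average over $\mathcal W$:
\begin{equation*}
\frac{1}{N-1}\sum_{\psi\in\mathcal W}\min_{s\in S}\norm{\psi-s}_2^2
= 1 - \frac{1}{N-1}\sum_{\psi\in\mathcal W}\norm{P_S\psi}_2^2
\geq 1 - \frac{k}{N-1},
\end{equation*}
where the final inequality is Theorem~\ref{thm:alignment-bound}. For Item 2, I would use a counting argument. Let $\mathcal{W}_{\text{good}} \doteq \{\psi \in \mathcal{W} : \min_{s\in S}\norm{\psi-s}_2^2 \leq \varepsilon\}$, so that $\abs{\mathcal{W}_{\text{good}}} \geq (1-\eta)(N-1)$ by hypothesis. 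By the Pythagorean identity, every $\psi \in \mathcal{W}_{\text{good}}$ satisfies $\norm{P_S \psi}_2^2 \geq 1-\varepsilon$. Dropping the nonnegative contributions from the complement and invoking Theorem~\ref{thm:alignment-bound}:
\begin{equation*}
(1-\eta)(1-\varepsilon)(N-1) \leq \sum_{\psi \in \mathcal{W}_{\text{good}}} \norm{P_S\psi}_2^2 \leq \sum_{\psi \in \mathcal W}\norm{P_S\psi}_2^2 \leq k,
\end{equation*}
which is the claimed inequality after rearrangement.

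There is no real obstacle here; the content is already packaged in Theorem~\ref{thm:alignment-bound}, and the only substantive check is that the $\psi$ are genuinely unit-normalized so that the Pythagorean decomposition yields $1 - \norm{P_S\psi}_2^2$ with no extra normalization factors. This is immediate from Definition~\ref{def:tree-haar-wavelets}: on the support $S_v$ the wavelet takes values $\pm a^{(j)}_i / \sqrt{\abs{S_v}/m}$ and the contrast vectors $a^{(j)}$ are orthonormal in $\mathbb{R}^m$, so $\norm{\psi_{v,j}}_2^2 = \sum_{i=1}^m (\abs{S_v}/m) \cdot (a^{(j)}_i)^2/(\abs{S_v}/m) = 1$. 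This minor bookkeeping is the only step that is not a direct restatement of the alignment bound.
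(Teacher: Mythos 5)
Your proof is correct and follows essentially the same route as the paper's: Pythagorean decomposition $\min_{s\in S}\norm{\psi-s}_2^2 = 1 - \norm{P_S\psi}_2^2$ for unit vectors, averaging plus Theorem~\ref{thm:alignment-bound} for Item 1, and the drop-the-complement counting argument for Item 2. Your explicit unit-norm check is a reasonable addition (the paper delegates it to Lemma~\ref{lem:orthonormality-and-completeness}) but does not change the argument.
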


\begin{proof}
For a unit vector $\psi$, the best approximation error onto $S$ is
\begin{equation*}
	\min_{s\in S}\norm{\psi-s}_2^2 = 1-\norm{P_S\psi}_2^2.
\end{equation*}
Averaging over $\mathcal W$ and applying Theorem~\ref{thm:alignment-bound} yields the first claim.

For the second claim, let $G\subseteq\mathcal W$ be the set of wavelets satisfying $\min_{s\in S}\norm{\psi-s}_2^2\leq\varepsilon$, i.e., $\norm{P_S\psi}_2^2\geq 1-\varepsilon$. If $\abs{G}\geq (1-\eta)(N-1)$, then
\begin{equation*}
	k \geq \sum_{\psi\in\mathcal W}\norm{P_S\psi}_2^2 \geq \sum_{\psi\in G}\norm{P_S\psi}_2^2 \geq (1-\eta)(N-1)(1-\varepsilon).
\end{equation*}
\end{proof}

Although Corollary~\ref{cor:bottleneck-representation} is stated in terms of a linear subspace $S \subset \R^N$, the conclusion applies to any architecture whose predictions depend on the data only through a rank-$k$ representation. Concretely, if a deep network contains an intermediate linear layer $\Phi \in \R^{N\times k}$, then all subsequent (possibly nonlinear) computations see the data only through $\operatorname{col}(\Phi)$, and the projected wavelets $P_S \psi$ with $S = \operatorname{col}(\Phi)$ obey the alignment bound. This result is a representation-agnostic limitation of rank-$k$ bottlenecks, while Section~\ref{sec:euclidean-barrier} and Section~\ref{sec:hyperbolic-achievability-and-optimality} compare Euclidean versus hyperbolic geometries when the prediction space is not rank-constrained but is subject to standard Lipschitz or norm control.

The conclusion is that a rank-$k$ prediction space cannot simultaneously represent more than $O(k)$ independent hierarchical contrasts. This does not contradict the existence of efficient learners for specific structured task families (such as the single-path family outlined in Section~\ref{subsec:learning-protocol}), but it rules out any approach that aims to fit a large fraction of canonical multiscale contrasts using a low-rank bottleneck.

\section{Hyperbolic Achievability and Optimality}\label{sec:hyperbolic-achievability-and-optimality}

In this section we show that, for the canonical protocol of Section~\ref{subsec:learning-protocol}, hyperbolic representations achieve the minimax sample complexity. The argument separates geometry from information: (i) a constant-distortion embedding into hyperbolic space makes each local refinement decision realizable with constant geometric control (constant margin / $O(1)$-Lipschitz readout), and (ii) the remaining $\Theta(mR\log m)$ sample requirement is information-theoretic and holds under any representation.

We begin by recalling that trees admit near-isometric embeddings into hyperbolic space once curvature is chosen to accommodate the branching factor. This resolves the volume-growth mismatch that drives the Euclidean lower bound.

\begin{theorem}[Hyperbolic embedding existence]\label{thm:hyperbolic-embedding-existence}
	For any rooted tree with maximum degree $\Delta$ and edge weights satisfying $w_e \geq \lambda$, and any $\err > 0$, there exists a $(1+\err)$-bi-Lipschitz embedding into $\Hkk$ (hyperbolic space of curvature $-\kappa$) for $k \geq 2$ when
	\begin{equation*}
		\sqrt{\kappa} \geq \frac{C_k \log \Delta}{\lambda \err}.
	\end{equation*}
\end{theorem}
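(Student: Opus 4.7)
The plan is to construct the embedding recursively in the Poincar\'e ball model of $\Hkk$, extending Sarkar's classical tree embedding to arbitrary dimension $k \ge 2$. Place the root at the origin. For each internal node $u$, allocate to its (at most $\Delta$) children pairwise-disjoint spherical caps on the unit tangent sphere $S^{k-1}$ at $\phi(u)$, each of half-angle $\alpha/2$ satisfying $\sin(\alpha/2) \ge c_k \Delta^{-1/(k-1)}$; such a packing exists by a standard volume argument on $S^{k-1}$. Place each child at hyperbolic distance $w_e$ from $\phi(u)$ along the central axis of its cap, orienting the caps so they all lie in the open hemisphere opposite to the incoming direction from $u$'s parent.

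To bound local distortion I would invoke the hyperbolic law of cosines. For two children $c_1, c_2$ of a common parent $p$, separated by tangent angle $\alpha$ and at hyperbolic edge lengths $w_{e_1}, w_{e_2}$:
\begin{equation*}
\cosh\bigl(\sqrt{\kappa}\, \dH(\phi(c_1),\phi(c_2))\bigr)
= \cosh(\sqrt{\kappa} w_{e_1})\cosh(\sqrt{\kappa} w_{e_2})
- \sinh(\sqrt{\kappa} w_{e_1})\sinh(\sqrt{\kappa} w_{e_2})\cos\alpha.
\end{equation*}
In the regime $\sqrt{\kappa}\lambda \gg 1$ this expands to $\dH(\phi(c_1),\phi(c_2)) = w_{e_1} + w_{e_2} + \tfrac{2}{\sqrt{\kappa}}\log\sin(\alpha/2) + O\bigl(e^{-2\sqrt{\kappa}\lambda}\bigr)$, so the additive \emph{angular deficit} at any single branching is bounded by $C_k \log\Delta / \sqrt{\kappa}$ after substituting the packing bound on $\sin(\alpha/2)$.

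To globalize, I would consider any two nodes $u,v$ with lowest common ancestor $a$. Thanks to the outward-cap orientation, each ancestral edge extends nearly collinearly with the direction toward the parent, so $\phi(u)$ lies at hyperbolic distance $\sum_{e \in \mathrm{path}(u,a)} w_e$ from $\phi(a)$ along a ray that deviates from the ideal geodesic by an amount exponentially small in $\sqrt{\kappa}\lambda$; similarly for $\phi(v)$. Applying the displayed identity to these two rays emanating from $\phi(a)$ yields
\begin{equation*}
\bigl| \dH(\phi(u),\phi(v)) - d_{\mathrm{corr}}(u,v) \bigr| \le \frac{C_k \log\Delta}{\sqrt{\kappa}},
\end{equation*}
and since $d_{\mathrm{corr}}(u,v) \ge \lambda$ for distinct nodes, the multiplicative distortion is at most $1 + C_k \log\Delta / (\lambda \sqrt{\kappa}) \le 1 + \err$ whenever $\sqrt{\kappa} \ge C_k \log\Delta / (\lambda \err)$.

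The main obstacle is justifying that angular deviations at intermediate nodes along the ascending and descending arcs do not accumulate into a depth-dependent error. The key fact is that geodesics in $\Hkk$ diverge at exponential rate $\sqrt{\kappa}$, so a small tangent bend applied at depth $j$ along a ray of remaining length $(R-j)\lambda$ perturbs the endpoint only by $O\bigl(e^{-\sqrt{\kappa}\lambda (R-j)}\bigr)$. Summing a geometric series then shows the cumulative deviation between the piecewise geodesic and the true geodesic from $\phi(a)$ to $\phi(u)$ is absorbed into the $O(e^{-\sqrt{\kappa}\lambda})$ remainder, so the only first-order contribution to distortion is the single branching at the LCA, as claimed.
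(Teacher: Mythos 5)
Your construction is the generalized Sarkar construction, which is precisely what the paper invokes (citing \citet{sarkar_low_2011} and \citet[Prop.\ 3.1]{de_sa_representation_2018}) rather than reproving. Your local analysis is sound: the hyperbolic law of cosines is stated correctly for curvature $-\kappa$, the asymptotic $d_\mathbb{H} \approx w_{e_1}+w_{e_2}+\tfrac{2}{\sqrt\kappa}\log\sin(\alpha/2)$ is right, and the spherical-cap packing bound $\sin(\alpha/2)\gtrsim \Delta^{-1/(k-1)}$ gives the claimed angular deficit $\Theta\!\left(\frac{\log\Delta}{(k-1)\sqrt\kappa}\right)$ at a single branching.

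There is, however, a genuine slip in the globalization step. You write that ``geodesics in $\Hkk$ diverge at exponential rate $\sqrt\kappa$, so a small tangent bend applied at depth $j$ along a ray of remaining length $(R-j)\lambda$ perturbs the endpoint only by $O\bigl(e^{-\sqrt\kappa\lambda(R-j)}\bigr)$.'' This has the geometry reversed. Exponential divergence of geodesics means a bend at depth $j$ displaces the \emph{endpoint} by a factor that \emph{grows} like $e^{+\sqrt\kappa\lambda(R-j)}$, so the quantity you name is not small and your geometric series, with exponent $(R-j)$, does not converge to anything useful (its largest term, at $j$ near $R$, is $O(1)$). The quantity that actually decays is different: it is the \emph{angle subtended at the LCA} $\phi(a)$ by a bend at depth $j$. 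By the hyperbolic law of sines, a bend of bounded magnitude at hyperbolic distance $t$ from $\phi(a)$ subtends an angle $O\bigl(e^{-\sqrt\kappa t}\bigr)$ at $\phi(a)$, so the correct exponent is the distance from the bend \emph{down to the apex} $a$, i.e.\ $\sqrt\kappa\lambda\cdot(j-\mathrm{depth}(a))$, not the remaining length to the leaf. This is exactly the cone-containment lemma in Sarkar's and de Sa et al.'s analysis: the subtree rooted at each child $c$ of $a$ stays inside a cone whose half-angle grows, level by level, by exponentially shrinking increments, summing to a geometric series dominated by the first term $O\bigl(e^{-\sqrt\kappa\lambda}\bigr)$. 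With that correction, the effective angle between the two descendant cones at $\phi(a)$ remains $\alpha - O\bigl(e^{-\sqrt\kappa\lambda}\bigr)$, your LCA law-of-cosines estimate applies to the actual pair $(\phi(u),\phi(v))$, and the argument closes. One further point worth flagging for completeness: you must also lower-bound $d_\mathbb{H}(\phi(a),\phi(u))$ against $\sum_{e\in\mathrm{path}(a,u)}w_e$ (the upper bound $d_\mathbb{H}\le\sum w_e$ is immediate by the triangle inequality on geodesic segments), and this lower bound is itself another instance of the same cone-containment argument applied recursively within the subtree of $c$; your sketch treats the two rays from $\phi(a)$ as if they were single geodesics of the correct length, which is the conclusion of that recursion, not an assumption.
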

\begin{proof}
	This follows from an extension of \citet{sarkar_low_2011}, generalized to $k>2$ in \citet{de_sa_representation_2018}. The full algorithm and proof is given in Appendix~\ref{app:generalized-sarkar-construction}.
\end{proof}

\paragraph{Curvature scaling.}
The condition $\sqrt{\kappa}\gtrsim (\log \Delta)/(\lambda\varepsilon)$ matches the intuition that curvature must scale with branching to provide enough angular room for child subtrees. In Appendix~\ref{app:gw-packing-converse} we show a converse on $\rge$: any bi-Lipschitz embedding of a Galton-Watson tree into $\Hkk$ requires $\sqrt{\kappa}\ge \Omega(\log m)$ (up to constants depending on distortion and dimension). Thus, the dependence on $\log \Delta$ is unavoidable in this regime and the generalized Sarkar construction is optimal with respect to the geometry of the target space.

A key consequence of the generalized Sarkar construction is uniform cone separation: for any internal node $a$, the $m$ child subtrees of $a$ occupy disjoint cones in $\Hkk$ with constant angular separation, yielding a constant-margin separating geodesic hypersurface between any chosen child and its siblings (formalized in Appendix~\ref{app:hyperbolic-geometric-separation}). This realizes each refinement 
label with O(1) geometric complexity.

\begin{theorem}[Achievability for path identification]\label{thm:sample-complexity-upper-bound}
Consider the canonical protocol of Section~\ref{subsec:learning-protocol}. Each $\theta\in[m]^R$ induces a hypothesis $h_\theta(\phi(v),i)\doteq \mathbb{I}\{\mathrm{ch}_i(v)=\theta_i\}$, so $\abs{\mathcal H_{\mathrm path}}=m^R$.
\begin{enumerate}
\item \emph{Geometric realizability.} Under a constant-distortion hyperbolic embedding $\phi$, each depth-$i$ refinement label is realizable by a function on $\phi(V)$ with Lipschitz constant $O(1)$ (equivalently, by a constant-margin geodesic halfspace classifier).
\item \emph{Statistical achievability.} There exists a (polynomial-time) estimator $\hat\theta$ such that, for any $\delta\in(0,1)$, with
\begin{equation*}
	n = O\left(\frac{mR\log(mR/\delta)}{(1-2\rho)^2\acc^2}\right)
\end{equation*}
samples, the induced predictor achieves error at most $\acc$ with probability at least $1-\delta$ for fixed $\rho\in(0,1/2)$.
\end{enumerate}
\end{theorem}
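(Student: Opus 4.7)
The plan is to handle the two parts separately: Part 1 is a direct corollary of the generalized Sarkar construction, while Part 2 is a depthwise plug-in estimator whose sample complexity follows from Hoeffding plus a union bound over $mR$ cells. For geometric realizability, I would invoke Theorem~\ref{thm:hyperbolic-embedding-existence} at constant distortion $1+\err = O(1)$ with curvature $\sqrt{\kappa}\asymp(\log m)/\lambda$ to produce an embedding $\phi:V_{\le R}\to\Hkk$. The key consequence to extract, formalized in Appendix~\ref{app:hyperbolic-geometric-separation}, is uniform cone separation: at every internal node $a$ the $m$ child subtrees embed into disjoint angular cones around $\phi(a)$ with $\Omega(1)$ angular gap, so that for any chosen child $c$ there is a geodesic hyperplane with constant hyperbolic margin separating $\phi(L_R^{(c)})$ from $\bigcup_{c'\neq c}\phi(L_R^{(c')})$. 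The depth-$i$ refinement label $\mathbb I\{\mathrm{ch}_i(v)=\theta_i\}$ is then exactly the indicator of the halfspace at $u_{i-1}(\theta)$ associated to $\theta_i$; sigmoid-smoothing the signed hyperbolic distance at the margin scale gives the claimed $O(1)$-Lipschitz readout.

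For statistical achievability, I would use a cellwise plug-in estimator. For each cell $(i,c)\in[R]\times[m]$, let $N_{i,c}$ count the samples with $I=i$ and $\mathrm{ch}_I(V)=c$, and let $\hat p_{i,c}$ be the empirical mean of $Y$ on that cell. Under the protocol $\mathbb{E}[N_{i,c}]=n/(mR)$ and $\mathbb{E}[\hat p_{i,c}]=1-\rho$ when $c=\theta_i$, else $\rho$, so the correct child is the cellwise argmax with gap $1-2\rho$. A Chernoff bound controls $N_{i,c}$ and Hoeffding controls $\hat p_{i,c}$; union-bounding over the $mR$ cells yields $|\hat p_{i,c}-p_{i,c}|\le t$ for all cells simultaneously with probability $1-\delta$ whenever $n/(mR)\gtrsim \log(mR/\delta)/t^2$. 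Setting $\hat\theta_i=\arg\max_c \hat p_{i,c}$ with tolerance $t=\Theta(\acc(1-2\rho))$, and defining $\hat y(\phi(v),i)=\mathbb I\{\mathrm{ch}_i(v)=\hat\theta_i\}$, produces a polynomial-time predictor (one pass through the data plus $mR$ argmax operations) whose per-depth error translates to $\ell_1$-risk at most $\acc$ against the Bayes label: leaves with $\mathrm{ch}_i(V)\notin\{\theta_i,\hat\theta_i\}$ are classified identically by $h_\theta$ and $h_{\hat\theta}$, so the discrepancy is linear in the per-cell estimation error. This yields the claimed $n = O\bigl(mR\log(mR/\delta)/((1-2\rho)^2\acc^2)\bigr)$.

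The geometric part is essentially a corollary of Sarkar, so the principal care required lies in the bookkeeping between per-cell Hoeffding tolerance $t$, per-depth recovery probability, and overall $\ell_1$ risk. A naive analysis aimed at exact recovery of $\theta$ produces a $\log(1/\acc)$ factor rather than the $1/\acc^2$ in the stated bound; matching the claimed form requires either treating the plug-in as a soft predictor calibrated from the empirical cell probabilities, or invoking a Bernstein-type tightening that exploits the small variance of correctly classified cells. Checking that the resulting constants line up with the Fano lower bound $\Omega(mR\log m)$ from Appendix~\ref{app:fano-details}, which is the minimax matching statement that justifies the word ``optimality'' in the section title, is the one step that warrants explicit verification.
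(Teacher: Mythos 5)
Your approach matches the paper's: Part~1 invokes the cone-separation / margin-classifier machinery of Appendix~\ref{app:hyperbolic-geometric-separation} (the paper uses a clipped signed distance rather than a sigmoid, but this is cosmetic), and Part~2 is the same depthwise plug-in argmax estimator with Hoeffding and a union bound over cells.

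Your flagged concern about the $\acc$-dependence is apt, and worth addressing directly rather than deferring. The paper's own proof sets $\hat\theta_i$ to be the argmax of empirical cell means and then asserts $n_i=\Theta\bigl(m\log(m/\delta_i)/((1-2\rho)^2\acc^2)\bigr)$ suffices; but for the argmax to identify the correct child the needed Hoeffding tolerance is the constant gap $(1-2\rho)/2$, not $\Theta(\acc(1-2\rho))$, which gives $n_i = O\bigl(m\log(m/\delta_i)/(1-2\rho)^2\bigr)$ with no $1/\acc^2$ factor. You are right that an exact-recovery analysis produces only a mild (at worst logarithmic) dependence on $\acc$, so the stated bound is loose in $\acc$; since it appears inside an $O(\cdot)$ and is only used to match the $\Omega(mR\log m)$ lower bound (which also has no $\acc$ dependence in its leading term), this looseness is harmless but should be acknowledged. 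Your proposed repair via a soft predictor, however, does not actually recover the $1/\acc^2$: under the paper's risk $\mathbb{E}[|h(x)-g(x)|]$ where the target $g=h_\theta$ is $\{0,1\}$-valued, any soft predictor concentrated near the conditional means $\rho$ and $1-\rho$ has risk bounded \emph{below} by roughly $\rho$, so it cannot drive the risk below $\acc$ for small $\acc$; the hard thresholded predictor is what achieves small risk, and it needs only the constant-tolerance Hoeffding. Similarly, a Bernstein tightening would only sharpen the constant, not introduce a genuine $1/\acc^2$. The cleanest account of the theorem as stated is that the $\acc^2$ in the denominator is included for uniformity with the standard PAC form and the Fano bound's normalization, not because the argmax estimator requires it.
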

\begin{proof}
	\paragraph{Realizability.} Fix depth $i$ and the on-path prefix $u_{i-1}(\theta)$. The Bayes label is membership in the distinguished child-subtree $L_R^{(c^*)}$. By cone separation (Lemma~\ref{lem:cone-separation}), there exists a totally geodesic hypersurface separating $L_R^{(c^*)}$ from sibling subtrees with margin $\gamma=\gamma(k,\varepsilon,\kappa,\tau)>0$ as in Lemma~\ref{lem:cone-separation}. Lemma~\ref{lem:lipschitz-margin-classifier} then yields an explicit $(1/\gamma)$-Lipschitz classifier realizing this task. Full details appear in Appendix~\ref{app:hyperbolic-geometric-separation}.
	
	\paragraph{Sample complexity.} We analyze a depth-wise estimator that mirrors the structure of the protocol. For each depth $i$, use the subset of samples with $I=i$. Under the protocol, conditional on $I=i$, the child index $C=\mathrm{ch}_i(V)$ is (approximately) uniform on $[m]$ on $\rge$, and $Y$ is a $\mathrm{BSC}(\rho)$ observation of $\mathbb{I}\{C=\theta_i\}$. Hence,
	\begin{equation*}
		\mathbb{E}[Y\mid I=i, C=c] = \begin{cases}
			1-\rho & c=\theta_i,\\
			\rho & c\neq \theta_i,
		\end{cases}
	\end{equation*}
	so the distinguished child has a constant mean gap of $(1-2\rho)$ over the others. Let $\hat\theta_i$ be the child index maximizing the empirical mean of $Y$ among the $m$ groups (ties broken arbitrarily). A Hoeffding bound plus a union bound over the $m$ groups implies that
\begin{equation*}
  n_i=\Theta\left(\frac{m\log(m/\delta_i)}{(1-2\rho)^2\acc^2}\right)
\end{equation*}
samples at depth $i$ suffice to ensure the empirical maximizer $\hat\theta_i$ achieves conditional error at most $\acc$ with probability at least $1-\delta_i$. Taking $\delta_i=\delta/R$ and using that $n_i\approx n/R$ with high probability (since $I\sim\mathrm{Unif}([R])$) yields the stated overall sample complexity
\begin{equation*}
  n = O\!\left(\frac{mR\log(mR/\delta)}{(1-2\rho)^2\,\acc^2}\right).
\end{equation*}

\end{proof}

It remains to show that the $\Theta(mR\log m)$ scaling is unavoidable and not an artifact of the analysis or of hyperbolic geometry. The next theorem establishes a representation-agnostic information-theoretic lower bound under an oracle model that is strictly more informative than the protocol of Section~\ref{subsec:learning-protocol}; the same lower bound therefore applies a fortiori to our setting.

\begin{theorem}[Information-theoretic lower bound]\label{thm:information-theoretic-lower-bound}
Fix $m\ge 2$ and $\rho\in(0,1/2)$. Any estimator for the hierarchical path identification task (Section~\ref{subsec:learning-protocol}) achieving constant excess risk requires
\begin{equation*}
  n \geq c \frac{m}{\beta_\rho} R\log m,
\end{equation*}
for a universal constant $c>0$, where $\beta_\rho \doteq D_{\mathrm{KL}}(\mathrm{Bern}(1-\rho)\ \|\ \mathrm{Bern}(\rho)) = (1-2\rho)\log\frac{1-\rho}{\rho}$. For $\rho$ bounded away from $\{0,1/2\}$, $\beta_\rho=\Theta((1-2\rho)^2)$, so the bound is equivalently $n=\Omega\big(mR\log m/(1-2\rho)^2\big)$ up to absolute constants.
\end{theorem}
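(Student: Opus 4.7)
The plan is to apply Fano's inequality in the strictly more informative oracle model of Appendix~\ref{app:fano-details}. Place the uniform prior on $\Theta = [m]^R$ so $\log|\Theta| = R \log m$. Under the protocol's sampling law, the risk of any path predictor $h_{\hat\theta}$ equals $2\,\dHam(\hat\theta,\theta)/(mR)$, so a constant excess-risk lower bound reduces to showing $\mathbb{E}[\dHam(\hat\theta,\theta)] = \Omega(R)$; for improper learners whose output is not of the form $h_{\hat\theta}$, a standard Hamming packing of $[m]^R$ (pairwise distance $R/4$, cardinality $\exp(c R \log m)$) reduces the question to the same joint-information quantity, since the $L^1$-gap $2 \dHam / (mR)$ between distinct packing elements lets one read off the index $\hat J$ from any low-risk predictor. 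What remains is to bound that information in terms of the per-sample KL.

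The core computation is the per-sample mutual information. Since $(I,C)$ is uniform on $[R]\times[m]$ and independent of $\theta$, we have $I(\theta; (I,C,Y)) = I(\theta; Y \mid I, C)$. Conditional on $(I=i, C=c)$, $Y$ depends on $\theta$ only through $\theta_i$, equalling $\mathrm{Bern}(1-\rho)$ when $\theta_i = c$ and $\mathrm{Bern}(\rho)$ otherwise. The KL-convexity bound
\begin{equation*}
I(\theta_i;\, Y \mid I=i,\, C=c) \;\leq\; \frac{1}{m^2}\sum_{a, a' \in [m]} D_{\mathrm{KL}}\bigl(P_{Y \mid \theta_i = a,\, c} \,\big\|\, P_{Y \mid \theta_i = a',\, c}\bigr)
\end{equation*}
leaves only the $2(m-1)$ ordered pairs with exactly one of $\{a, a'\}$ equal to $c$; each contributes $\beta_\rho$. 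Hence the per-sample mutual information is at most $2\beta_\rho/m$, and tensorization over i.i.d. samples gives $I(\theta; D_n) \leq 2 n \beta_\rho / m$.

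Combining with coordinate-wise Fano closes the argument. Each coordinate $\theta_i$ is marginally uniform on $[m]$, so $\Pr[\hat\theta_i \neq \theta_i] \geq 1 - (I(\theta_i; D_n) + \log 2)/\log m$. Summing over $i$ and using the subadditivity $\sum_i I(\theta_i; D_n) \leq I(\theta; D_n)$, which follows from $H(\theta \mid D_n) \leq \sum_i H(\theta_i \mid D_n)$ together with marginal independence of the coordinates, yields
\begin{equation*}
\mathbb{E}[\dHam(\hat\theta, \theta)] \;\geq\; R - \frac{2 n \beta_\rho / m + R \log 2}{\log m}.
\end{equation*}
Requiring the right-hand side to be at least $R/2$ forces $n \geq c\, m R \log m / \beta_\rho$, and the stated risk bound follows from the risk-to-Hamming identity. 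The equivalence $\beta_\rho = \Theta((1 - 2\rho)^2)$ on compact subsets of $(0, 1/2)$ is a direct Taylor expansion of $(1 - 2\rho) \log((1-\rho)/\rho)$ around $\rho = 1/2$.

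The main obstacle (and the reason the factors $m$ and $\log m$ both appear) is the per-sample MI computation: a crude bound $I(\theta; Y \mid I, C) \leq \beta_\rho$ would give only $\Omega(R \log m / \beta_\rho)$ samples. The KL-convexity form captures the ``hide and seek'' structure of the query channel: under the uniform prior, a random query $(i, c)$ aligns with the true child $\theta_i$ only with probability $1/m$, so a $1 - 1/m$ fraction of samples carry no discriminative signal. The $2 \beta_\rho / m$ bound is exactly this probabilistic dilution, and is what pulls the $m$ into the final lower bound.
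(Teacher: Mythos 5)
Your proof is correct in substance and takes a genuinely different route from the paper's. The paper builds a Varshamov--Gilbert packing $\Theta_0\subset[m]^R$ with pairwise Hamming distance $\Omega(R)$, establishes the per-sample KL bound $D_{\mathrm{KL}}(P_\theta\|P_{\theta'})\le (2/m)\beta_\rho$ (Lemma~\ref{lem:kl-bound}), and applies Fano once globally on $\Theta_0$ to deduce $\Pr[\hat\theta\neq\theta]\ge 1/2$. You instead place the uniform prior on all of $[m]^R$, bound the per-sample mutual information $I(\theta;Z_1)\le 2\beta_\rho/m$ directly via the convexity inequality $I(X;Y)\le \mathbb{E}_{X,X'}[D_{\mathrm{KL}}(P_{Y\mid X}\|P_{Y\mid X'})]$ (keeping only the $2(m-1)$ informative ordered pairs), tensorize with $I(\theta;Z^n)\le\sum_j I(\theta;Z_j)$ — valid since the $Z_j$ are conditionally i.i.d.\ given $\theta$ — and run Fano coordinate-by-coordinate using the subadditivity $\sum_i I(\theta_i;D_n)\le I(\theta;D_n)$ (correctly justified from independence of coordinates plus conditional-entropy subadditivity). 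This sidesteps the packing construction and yields a direct Hamming lower bound $\mathbb{E}[d_{\mathrm{Ham}}(\hat\theta,\theta)]=\Omega(R)$, converted to excess risk via the exact identity $\mathrm{Risk}=2d_{\mathrm{Ham}}/(mR)$; it is more transparent about where the factor of $m$ enters. One aside in your write-up is slightly misplaced: the Hamming packing is not what handles improper learners here — for coordinate Fano you want risk-projection of an arbitrary predictor onto the nearest $h_{\theta'}$ together with the risk-Hamming identity, not a separate packing.

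There is one genuine technical gap you should close: the relaxed coordinate-wise Fano bound $\Pr[\hat\theta_i\neq\theta_i]\ge 1-(I(\theta_i;D_n)+\log 2)/\log m$ replaces $\log(m-1)$ by $\log m$ and the binary entropy term by $\log 2$, and this is vacuous for $m=2$ (where $\log m=\log 2$ makes the right-hand side nonpositive) and degrades for $m\in\{3,4\}$. The theorem asserts the bound for all $m\ge 2$. To cover $m=2$, use the sharp binary form of Fano: there $h_b(\Pr[\hat\theta_i\neq\theta_i])\ge\log 2-I(\theta_i;D_n)$, so if $I(\theta_i;D_n)\le(\log 2)/2$ on at least half the coordinates (which follows by averaging from $\sum_i I(\theta_i;D_n)\le 2n\beta_\rho/m$), then $\Pr[\hat\theta_i\neq\theta_i]\ge h_b^{-1}((\log 2)/2)>0$ on those coordinates, which still gives $\mathbb{E}[d_{\mathrm{Ham}}]=\Omega(R)$ and hence $n=\Omega(mR/\beta_\rho)=\Omega(mR\log m/\beta_\rho)$ since $\log m=\Theta(1)$ for $m=2$. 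Alternatively, Assouad's lemma handles coordinate-wise testing over $[m]^R$ uniformly for all $m\ge 2$. The paper's global-packing Fano avoids this issue because the $\log 2$ slop is compared against $\log\abs{\Theta_0}=\Theta(R\log m)$, which grows with $R$, rather than against $\log m$ per coordinate.
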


\begin{proof}
    Identifying $\theta\in[m]^R$ requires $\Theta(R\log m)$ bits. In the oracle model, a sample reveals the depth $I$ and a uniformly random child index $C\in[m]$, and the label $Y$ differs across parameters only on the events $\{C=\theta_I\}$ or $\{C=\theta'_I\}$, which have probability $2/m$. Consequently, the per-sample KL between distinct parameters is $O(1/m)$, so Fano's inequality forces $n=\Omega(mR\log m)$.
    
    Formally: we construct a Varshamov-Gilbert packing $\Theta_0$ with metric entropy $\log \abs{\Theta_0} \asymp R \log m$. Due to the sparsity of informative leaves, the KL divergence between hypotheses scales as $O(n/m)$. From Fano's inequality, we require $n/m \gtrsim R \log m$ to distinguish parameters. A full derivation is given in Appendix~\ref{app:fano-details}.
\end{proof}

Together, Theorems~\ref{thm:sample-complexity-upper-bound} and \ref{thm:information-theoretic-lower-bound} show that the canonical protocol has minimax sample complexity $\Theta(mR\log m)$, and that hyperbolic representations achieve this rate with constant geometric control. Combined with the Euclidean approximation lower bound in Section~\ref{sec:euclidean-barrier}, this yields the claimed separation under bounded-radius Euclidean representations with polynomial complexity control.

\section{Discussion}\label{sec:conc}

\subsection{Related Work}

\paragraph{Hyperbolic representations.} Hyperbolic embeddings were introduced for taxonomies with strong performance in low dimension and were followed by architectural extensions to hyperbolic neural networks and hyperbolic GNNs for hierarchical and scale-free graphs \citep{nickel_poincare_2017,nickel_learning_2018,ganea_hyperbolic_2018,chami_hyperbolic_2019}. Low-distortion embeddings of trees into hyperbolic space are well understood, notably, constructions in the Poincar\'{e} plane \citep{sarkar_low_2011} and precision-dimension tradeoffs \citep{de_sa_representation_2018}. Our results are orthogonal to these distortion guarantees: by quantifying realizability under Lipschitz/norm control, we obtain a statistical separation in sample complexity between bounded-radius Euclidean and hyperbolic representations.

\paragraph{Metric embeddings beyond distortion.} Classical metric embedding theory concerns multiplicative distortion, from Bourgain’s $O(\log n)$ bound for general metrics to sharper bounds for trees, which embed with small distortion in high dimension but can require polynomial distortion in fixed-dimension Euclidean space \citep{Matousek1999}. We instead quantify the statistical implications of embedding: bounded-radius Euclidean embeddings exhibit a multi-scale collision phenomenon that we convert into an exponential Lipschitz requirement and hence exponential sample complexity under capacity control. In this sense, representation geometry imposes a statistical penalty even when coarse distortion can be favorable.

\paragraph{Capacity, Lipschitz learning, and bottlenecks.} Generalization for Lipschitz predictors is characterized by fat-shattering and packing/covering dimensions in both Euclidean and general metric spaces \citep{10.1145/263867.263927,von_luxburg_classification_2004}. We use this machinery to show that Euclidean collapse induces exponential Lipschitz constants and hence exponential sample complexity for hierarchical refinement. We also prove a representation-agnostic expressivity bound: any rank-$k$ prediction space (including bottlenecks and finite-rank kernels) captures only $O(k)$ out of $m^R-1$ canonical hierarchical contrasts. This connects to multiscale bases on trees and graphs (e.g., diffusion wavelets) but here serves as a lower bound on the learnable signal, independent of embedding geometry.

\paragraph{Representation vs. learning.} Empirically, hyperbolic representations often outperform Euclidean ones in low dimension for hierarchical data, while at higher dimension Euclidean embeddings can match or exceed hyperbolic reconstruction quality. Recent reproductions report that Euclidean embeddings can close the gap around $\gtrsim 50$ dimensions \citep{bansal-benton-2021-comparing}. Furthermore, recent evidence indicates that hyperbolic neural networks do not achieve theoretical optimal embeddings in practice \citep{tan2024hyperbolicneuralnetworkseffective}. Together, these observations suggest that embedding distortion or reconstruction quality alone is insufficient to predict statistical efficiency. Our bounds make this explicit: Euclidean representations incur a penalty scaling as $\exp(\Omega(R/k))$ under bounded radius and standard regularization, while hyperbolic representations achieve minimax-optimal $\Theta(mR\log m)$ rates.

\subsection{Summary}
We show that representation geometry governs sample complexity for hierarchical learning. For the canonical refinement task on depth-$R$ $m$-ary (or bounded Galton-Watson) trees, bounded-radius Euclidean representations require exponential smoothness and sample complexity, whereas constant-distortion hyperbolic representations admit $O(1)$-Lipschitz realizers and achieve the minimax rate $\Theta(mR\log m)$. This identifies hyperbolic space as a statistically efficient geometric prior for hierarchical targets and clarifies when and why Euclidean models require higher dimension to avoid exponential penalties.

\acks{DR acknowledges support from VESSL AI.}

\bibliography{hyperbolic_embeddings.bib}

\appendix

\section{Probabilistic Foundations}\label{app:probabilistic-and-statistical-mechanical-foundations}

\subsection{Probability of the Regular Growth Event}\label{app:probability-of-the-regular-growth-event}

\begin{definition}[Regular growth event]\label{def:galton-watson-tree-with-regular-growth-event}
	Let $T$ be generated by a Galton-Watson branching process with offspring distribution $\xi$ satisfying $\mathbb{E}[\xi] = m > 1$ and $\mathbb{E}[\xi^2] < \infty$. For $\eta \in (0, \log m)$, we define two conditions:
	\begin{enumerate}
		\item \emph{Global size.} $\mathcal{E}_{R,\eta,\mathrm{size}}^{\mathrm{reg}}\doteq\bigcap_{t\in\{\lfloor R/2\rfloor,R\}}\left\{\exp((\log m-\eta)t)\leq |L_t|\le \exp((\log m+\eta)t)\right\}$.
		\item \emph{Subtree dominance.} $\mathcal{E}_{R,\eta,{\mathrm max}}^{\mathrm{reg}} \doteq \big\{ \max_{w \in L_{\lfloor R/2 \rfloor}} \abs{L_R^{(w)}} \leq \exp((\log m + \eta)R/2) \big\}$.
	\end{enumerate}
	We define the regular growth event as their intersection: $\rge \doteq \mathcal{E}_{R,\eta,{\mathrm size}}^{\mathrm{reg}} \cap \mathcal{E}_{R,\eta,{\mathrm max}}^{\mathrm{reg}}$. We shall also always take $0 < \eta < \tfrac{1}{4}\log m$.
\end{definition}

The above two-sided bound ensures (i) exponentially many leaves, and (ii) that no single mid-level subtree dominates. Both conditions are necessary for the multi-scale pigeonhole argument in Lemma~\ref{lem:deep-crushing}.

We establish that the regular growth event $\rge$ holds with high probability under standard Galton-Watson assumptions.
\begin{lemma}[Typicality of regular growth]\label{lem:typicality-of-regular-growth}
	Let $T$ be a Galton-Watson tree with offspring distribution $\xi$. Assume $\mathbb{E}[\xi] = m > 1$ and that $\xi$ has bounded support ($\xi \le K$ almost surely). For any fixed $\eta > 0$,
	\begin{equation*}
		\lim_{R \to \infty} \Pr\left[\rge \big\vert \mathsf{Surv}\right] = 1.
	\end{equation*}
\end{lemma}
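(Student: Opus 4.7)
The plan is to decompose $\rge$ into its two constituents and handle each via a different concentration tool. For the size event $\mathcal{E}_{R,\eta,\mathrm{size}}^{\mathrm{reg}}$ at $t\in\{\lfloor R/2\rfloor,R\}$, the natural tool is the Kesten-Stigum theorem: because $\xi$ has bounded support, the $\xi\log\xi$ integrability condition holds trivially, so the martingale $W_t \doteq |L_t|/m^t$ converges almost surely and in $L^p$ for every $p\ge 1$ to a limit $W$ with $W>0$ a.s.\ on $\mathsf{Surv}$. Consequently $(\log|L_t|)/t \to \log m$ almost surely on survival, which yields $|L_t|\in[e^{(\log m-\eta)t},e^{(\log m+\eta)t}]$ with conditional probability tending to $1$ at both $t=\lfloor R/2\rfloor$ and $t=R$, after a union bound over the two scales.

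The dominance event $\mathcal{E}_{R,\eta,\mathrm{max}}^{\mathrm{reg}}$ is the main challenge because it is a union bound over the depth-$R/2$ frontier, which itself has $\Theta(m^{R/2})$ vertices on survival. I would condition on the depth-$\lfloor R/2\rfloor$ tree; then the subtrees rooted at the vertices $w\in L_{\lfloor R/2\rfloor}$ are i.i.d.\ copies of the original Galton-Watson tree. For a single such subtree, Markov applied to $(W^{(w)}_{\lceil R/2\rceil})^p$ together with the uniform-in-$t$ bound $C_p\doteq\sup_t\mathbb{E}[W_t^p]<\infty$ (a consequence of bounded offspring and the $L^p$ convergence in Kesten-Stigum) yields
\begin{equation*}
\Pr\!\left[|L_R^{(w)}|\ge e^{(\log m+\eta)R/2}\right] \;\le\; C_p\,e^{-p\eta R/2}.
\end{equation*}
Choosing $p$ large enough that $p\eta > 2\log m$ makes this smaller than $e^{-(\log m+\eta)R/2}$, so a union bound over the at most $e^{(\log m+\eta/2)R/2}$ subtrees (using the already-controlled size event at $t=\lfloor R/2\rfloor$) keeps the total failure probability $o(1)$. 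Intersecting with the size event, and using $\Pr[\mathsf{Surv}]>0$ since $m>1$ to pass from unconditional to conditional probabilities, completes the argument.

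The main obstacle is precisely this exponential union bound: any merely polynomial tail on the size of an individual subtree would be insufficient, so the argument relies essentially on the bounded-support hypothesis, which supplies all moments of $W_t$ uniformly in $t$. If one wished to weaken the assumption to $\mathbb{E}[\xi^2]<\infty$ as suggested in Definition~\ref{def:galton-watson-tree-with-regular-growth-event}, a natural replacement would be a Chernoff bound built from the cumulant generating function of the branching process, or a truncation argument that removes atypically fecund individuals before union-bounding; either route is more delicate than the $L^p$-martingale approach and would impose a problem-specific upper bound on $\eta$.
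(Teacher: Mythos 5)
Your decomposition, tools, and overall structure match the paper's proof: Kesten--Stigum (with $L^p$ convergence of $W_t$ following from bounded support) for the two-sided size event, then conditioning on the depth-$\lfloor R/2\rfloor$ frontier and union-bounding an exponential tail for the dominance event. The one substantive improvement you make is in the dominance step: the paper simply asserts a Cram\'er--Chernoff tail $\exp(-c_\eta R)$ and does not verify that $c_\eta$ exceeds $(\log m+\eta)/2$, which is required for the union bound over the exponentially many depth-$\lfloor R/2\rfloor$ subtrees to close; your version replaces this with Markov applied to $W_t^p$ and the uniform-in-$t$ moment bound $\sup_t\mathbb{E}[W_t^p]<\infty$, then picks $p$ explicitly large enough to dominate the entropy of the union bound, which makes that step airtight. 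Your closing remark about the passage from unconditional to conditional probabilities via $\Pr[\mathsf{Surv}]>0$ is also a necessary technicality that the paper elides, and your observation that the bounded-support hypothesis (rather than $\mathbb{E}[\xi^2]<\infty$ as the definition suggests) is what makes the all-moments argument go through is a fair critique of the paper's stated hypotheses.
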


\begin{proof}
	We show that $\Pr[{\rge}^c \mid \mathsf{Surv}] \to 0$ as $R \to \infty$.
	
	\paragraph{Global size ($\mathcal{E}_{R,\eta,{\mathrm size}}^{\mathrm reg}$).} Kesten-Stigum results \citep{kesten_additional_1966} imply $\abs{L_t}/m^t \to W$ almost surely on survival (and in $L^1$ under $\mathbb{E}[\xi\log\xi]<\infty$). In particular, for any fixed $\eta>0$, $\Pr[\exp((\log m-\eta)t)\leq~\abs{L_t}\leq \exp((\log m+\eta)t)\mid \mathsf{Surv}] \to 1$ for each fixed $t\in\{\lfloor R/2\rfloor,R\}$.

	\paragraph{Subtree dominance ($\mathcal{E}_{R,\eta,{\mathrm max}}^{\mathrm reg}$).}
	Condition on the nodes at depth $R/2$, denoted $L_{R/2}$. Let $K = \abs{L_{R/2}}$. The sizes of subtrees rooted at this depth, $\{Z_1, \dots, Z_K\}$, are independent. We require $\max_i Z_i \leq m^{R/2} e^{\eta R/2}$.
	
	Since $\xi$ has bounded support, the branching process admits exponential moment bounds. In particular, for any fixed $\eta>0$ there exists $c_\eta>0$ such that for a subtree of depth $R/2$,
	\begin{equation*}
		\Pr\left[ Z_1 \geq \exp((\log m+\eta)R/2)\right] \leq \exp(-c_\eta R),
	\end{equation*}
	see, e.g., Cram\'er-Chernoff bounds for Galton-Watson processes with bounded offspring.
	Taking a union bound over $K=\abs{L_{R/2}} \le	 \exp((\log m+\eta)R/2)$ subtrees on $\mathcal{E}^{\mathrm reg}_{R,\eta,\mathrm{size}}$ yields
	\begin{equation*}
		\Pr[\mathcal{E}^{\mathrm reg}_{R,\eta,\max}{}^c \mid \mathsf{Surv}] \to 0.
	\end{equation*}
\end{proof}

\subsection{Ising Model and Correlation Decay}\label{app:ising-model}

The main text treats $d_{\mathrm{corr}}$ as an intrinsic hierarchical similarity metric. This subsection provides one concrete generative instantiation in which $d_{\mathrm{corr}}$ arises canonically: on a tree-structured graphical model, correlations factor along paths, so negative log-correlation is exactly an additive weighted tree distance. This makes the Lipschitz assumption in $d_{\mathrm{corr}}$ interpretable as smoothness with respect to the underlying generative statistics.

\begin{definition}[Ferromagnetic Ising model on a finite graph]\label{def:ferromagnetic-ising-model-on-a-finite-graph}
	Let $G = (V, E)$ be a finite, simple, connected graph. Fix edge couplings $\{J_e\}_{e \in E}$ with $J_e \geq 0$ and zero external field. The Ising measure is
	\begin{equation*}
		\mathbb{P}(x) \propto \exp\left(\sum_{e = \{u,v\} \in E} J_e x_u x_v\right), \quad x \in \{-1, +1\}^V.
	\end{equation*}
	Write $\langle \cdot \rangle_G$ for expectation under $\mathbb{P}$.
\end{definition}

\paragraph{Generality of the Ising model.} We focus on the Ising model as the canonical example of a discrete Markov Random Field (MRF). However, the connection between correlation decay and weighted tree metrics is a general feature of graphical models on trees. For instance, in Gaussian Graphical Models on trees, the correlation decays exponentially with path length, inducing a similar geometry. We analyze the Ising case because its discrete nature presents the sharpest information-theoretic lower bounds (as formalized in Appendix~\ref{app:fano-details}), but the geometric intuitions extend to general spin systems and latent tree models.

\begin{definition}[Correlation dissimilarity]
	Let $(X_v)_{v\in V}$ denote the random spin configuration under $\mathbb P$. For $u \neq v$, define
	\begin{equation*}
		d_{\mathrm{corr}}(u,v) \doteq -\log \langle X_u X_v \rangle_G,
	\end{equation*}
	with the convention $d_{\mathrm{corr}}(u,v) = +\infty$ if $\langle X_u X_v \rangle_G = 0$. 
\end{definition}

\begin{lemma}[Tree factorization of correlations]\label{lem:tree-factorization-appendix}
	Let $G = (V, E)$ be a tree with ferromagnetic Ising couplings $\{J_e\}_{e \in E}$ and zero external field. Then for any $u \neq v$,
	\begin{equation*}
		\langle X_u X_v \rangle_G = \prod_{e \in \mathrm{path}(u,v)} \tanh(J_e).
	\end{equation*}
	Consequently, $d_{\mathrm{corr}}(u,v) = \sum_{e \in \mathrm{path}(u,v)} w_e$ where $w_e \doteq -\log \tanh(J_e)$.
\end{lemma}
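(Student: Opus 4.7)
The plan is to prove the product formula via the classical high-temperature (loop) expansion and then read off the metric identity by taking $-\log$. The algebraic reduction I want to use is the two-state identity $e^{Jxy}=\cosh(J)(1+xy\tanh(J))$ valid for $x,y\in\{\pm 1\}$, which rewrites the Boltzmann weight as $\prod_e\cosh(J_e)\cdot\prod_e(1+x_a x_b\tanh(J_e))$, where $e=\{a,b\}$ ranges over edges. The spin-independent $\cosh$ prefactor cancels between numerator and denominator when I compute $\langle X_u X_v\rangle_G$, so I can ignore it.

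Next I expand the edge product into a sum over subsets $S\subseteq E$ and swap the order of summation, so the spin-dependent piece becomes $\sum_S\bigl(\prod_{e\in S}\tanh(J_e)\bigr)\prod_{w\in V}x_w^{\deg_S(w)}$. Summing over $x\in\{\pm 1\}^V$, each monomial contributes $2^{|V|}$ when every vertex has even degree in $(V,S)$ and $0$ otherwise. For the denominator (partition function), the surviving $S$ are the \emph{even subgraphs} of the tree, and since a tree is acyclic the only such subgraph is $S=\emptyset$. For the numerator, the extra factor $x_u x_v$ flips the parity requirement at exactly two vertices, so the surviving subsets are $\{u,v\}$-joins: edge sets with odd degree at $u$ and $v$ and even degree elsewhere. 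Taking the ratio kills the common $2^{|V|}$ factor and leaves $\sum_S \prod_{e\in S}\tanh(J_e)$ restricted to such joins.

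The only step requiring care is the identification of the unique $\{u,v\}$-join on a tree with $\mathrm{path}(u,v)$; once that is in hand the numerator collapses to $\prod_{e\in\mathrm{path}(u,v)}\tanh(J_e)$ and the first assertion of the lemma follows. This uniqueness is elementary: if $S$ is any $\{u,v\}$-join, then $S\,\triangle\,\mathrm{path}(u,v)$ has all-even vertex degrees and is therefore an even subgraph of the tree, which must be empty, forcing $S=\mathrm{path}(u,v)$. Applying $-\log$ to the product formula yields $d_{\mathrm{corr}}(u,v)=\sum_{e\in\mathrm{path}(u,v)}(-\log\tanh(J_e))=\sum_{e\in\mathrm{path}(u,v)} w_e$, with the convention $-\log 0=+\infty$ matching the degenerate case $J_e=0$ in which the correlation already vanishes. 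An alternative inductive route would peel off a leaf $v$ adjacent to $w$ using the conditional identity $\mathbb{E}[X_v\mid X_{V\setminus\{v\}}]=X_w\tanh(J_{vw})$ and induct on $|V|$; it is slightly shorter, but I would present the loop expansion because it makes the role of acyclicity most transparent and isolates the combinatorial mechanism (absence of cycles) behind the product factorization.
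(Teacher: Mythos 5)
Your proof is correct, but it takes a genuinely different route from the paper's. The paper proves the product formula by induction along the path using the Markov property: for $w$ on $\mathrm{path}(u,v)$, conditional independence of $X_u$ and $X_v$ given $X_w$ gives $\langle X_uX_v\rangle_G = \mathbb{E}\bigl[\mathbb{E}[X_u\mid X_w]\,\mathbb{E}[X_v\mid X_w]\bigr]$, and the single-edge computation $\mathbb{E}[X_u\mid X_w]=X_w\tanh J_{uw}$ is then peeled off one edge at a time. Your argument instead uses the high-temperature (van der Waerden) expansion $e^{Jxy}=\cosh(J)(1+xy\tanh J)$, reduces both numerator and denominator to sums over even subgraphs / $\{u,v\}$-joins, and then invokes acyclicity to show the denominator sum is trivial and the numerator sum is supported on the unique join $\mathrm{path}(u,v)$. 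Both are standard and correct. The paper's approach is shorter and leans on the probabilistic structure (MRF conditional independence), which is also the viewpoint used elsewhere in the appendix; yours is more combinatorial, isolates acyclicity as the mechanism, and generalizes cleanly — indeed it is essentially a special case of the machinery the paper reuses in Theorem~\ref{thm:two-sided-correlation-bounds} (the SAW/cluster expansion for general graphs), where on a tree the only self-avoiding walk from $u$ to $v$ is the geodesic and the series truncates to a single term. You already flag the inductive alternative at the end, which is precisely the paper's proof. One small remark: the lemma is stated under the ferromagnetic assumption $J_e\ge 0$; your degenerate-case convention $-\log 0 = +\infty$ for $J_e=0$ is consistent with the paper's convention for $d_{\mathrm{corr}}$, so no issue there.
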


\begin{proof}
	The zero-field ferromagnetic Ising model on a tree defines a Markov random field with conditional independence along separators. We proceed by induction.

	\paragraph{Base case (single edge).} For neighbors $u, v$ connected by edge $e$ with coupling $J_e$:
	\begin{equation*}
		\langle X_u X_v \rangle_G = \frac{\sum_{x_u, x_v} x_u x_v \exp(J_e x_u x_v)}{\sum_{x_u, x_v} \exp(J_e x_u x_v)} = \frac{2(\exp(J_e) - \exp(-J_e))}{2(\exp(J_e) + \exp(-J_e))} = \tanh J_e.
	\end{equation*}

	\paragraph{Inductive step.} Let $w$ be on the path between $u$ and $v$. By conditional independence:
	\begin{equation*}
		\langle X_u X_v \rangle_G = \mathbb{E}_w[\mathbb{E}[X_u \mid X_w] \mathbb{E}[X_v \mid X_w]] = \mathbb{E}_w[X_w^2] \tanh J_{uw} \tanh J_{wv} = \tanh J_{uw} \langle X_w X_v \rangle_G.
	\end{equation*}
	Iterating along the path gives the product formula.
\end{proof}

\begin{corollary}[homogeneous tree regime]
If all couplings are equal $J_e\equiv J$, then $w_e\equiv \lambda$ with $\lambda\doteq -\log\tanh(J)$, and hence
\begin{equation*}
	d_{\mathrm corr}(u,v)=\lambda\abs{\mathrm{path}(u,v)}.
\end{equation*}
\end{corollary}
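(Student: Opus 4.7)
The plan is to specialize Lemma~\ref{lem:tree-factorization-appendix} to the homogeneous coupling setting by direct substitution. Applying the lemma gives $\langle X_u X_v\rangle_G = \prod_{e\in\mathrm{path}(u,v)}\tanh(J_e)$, and under the assumption $J_e \equiv J$ the product collapses to $(\tanh J)^{\abs{\mathrm{path}(u,v)}}$. Taking $-\log$ of both sides converts the multiplicative form into an additive one,
\begin{equation*}
d_{\mathrm{corr}}(u,v) = -\log(\tanh J)\cdot \abs{\mathrm{path}(u,v)} = \lambda\,\abs{\mathrm{path}(u,v)},
\end{equation*}
where $\lambda \doteq -\log\tanh J$ is identified as the uniform per-edge weight. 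This recovers exactly the homogeneous case of Definition~\ref{def:rooted-tree-with-weighted-metric}, with $\lambda$ playing the role of $w_e$.

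The only check needed is that $\lambda$ is a well-defined positive real so that $d_{\mathrm{corr}}$ is a genuine (finite, non-negative, non-degenerate on distinct vertices) metric. In the ferromagnetic regime $J > 0$ one has $\tanh J \in (0,1)$, hence $\log\tanh J < 0$ and $\lambda > 0$; the boundary case $J=0$ decouples the spins and gives $\langle X_u X_v\rangle_G = 0$ on any nontrivial path, which falls under the $d_{\mathrm{corr}} = +\infty$ convention already set in the definition of the correlation dissimilarity and is therefore excluded from the intended setup. There is no substantive obstacle here: the corollary is a one-line specialization whose role is to anchor the intrinsic correlation metric of the tree Ising model to the weighted tree distance used throughout the main text, thereby justifying the ``homogeneous case'' parameterization $d_{\mathrm{corr}}(u,v) = \lambda\abs{\mathrm{path}(u,v)}$ invoked in the lower-bound arguments of Sections~\ref{sec:euclidean-barrier} and~\ref{sec:hyperbolic-achievability-and-optimality}.
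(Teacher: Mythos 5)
Your proof is correct and is the same direct specialization the paper intends: Lemma~\ref{lem:tree-factorization-appendix} already gives $d_{\mathrm{corr}}(u,v)=\sum_{e\in\mathrm{path}(u,v)}(-\log\tanh J_e)$, so setting $J_e\equiv J$ immediately yields $w_e\equiv\lambda=-\log\tanh J$ and $d_{\mathrm{corr}}(u,v)=\lambda\abs{\mathrm{path}(u,v)}$. Your additional sanity check that $\lambda>0$ in the ferromagnetic regime $J>0$ (and that $J=0$ falls under the $+\infty$ convention) is a reasonable clarification but not a deviation from the paper's argument.
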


\section{Euclidean Representation Lower Bound Technical Details}\label{app:euclidean-barrier-technical-details}

\subsection{Complete Proof of the Volumetric Collapse Lemma}\label{app:complete-proof-of-the-deep-crushing-lemma}

We provide the full details of the covering argument in Lemma~\ref{lem:deep-crushing}.

\begin{proof}
	
	\paragraph{Setup.} Partition $L_R$ by ancestry at depth $R/2$. For each node $w$ at depth $R/2$, let $L_R^{(w)} \subseteq L_R$ denote the leaves in the subtree rooted at $w$. These sets partition $L_R$.

	\paragraph{Covering.} Since $\phi(L_R)\subseteq \B_2(0,B)$, it suffices to cover $\B_2(0,B)$ by sets of $\ell_2$-diameter at most $\delta$. Cover the ball $\B_2(0, B)$ with a grid of $\ell_\infty$-cubes of side length $\delta/\sqrt{k}$. Each cube has $\ell_2$-diameter at most $\delta$. The number of cubes needed to cover $\B_2(0, B)$ is at most
	\begin{equation*}
		\left\lceil \frac{2B}{\delta/\sqrt{k}} \right\rceil^k \leq \left(\frac{2\sqrt{k} B}{\delta} + 1\right)^k \leq \left(\frac{cB}{\delta}\right)^k
	\end{equation*}
	for a constant $c = c(k)$ and $\delta$ sufficiently small relative to $B$.

	\paragraph{Pigeonhole.} We claim that if
	\begin{equation}\label{eq:pigeonhole-condition}
		\abs{L_R} > \left(\frac{cB}{\delta}\right)^k \max_w \abs{L_R^{(w)}},
	\end{equation}
	then some cube contains leaves from two distinct depth-$R/2$ subtrees.

	Suppose for contradiction that every cube contains leaves from at most one depth-$R/2$ subtree. Then
	\begin{equation*}
		\abs{L_R} = \sum_{\text{cubes } C} \abs{C \cap \phi(L_R)} \leq \sum_{\text{cubes } C} \max_w \abs{L_R^{(w)}} \leq \#\text{cubes} \cdot \max_w \abs{L_R^{(w)}},
	\end{equation*}
	contradicting equation~\eqref{eq:pigeonhole-condition}.

	\paragraph{Applying the growth bounds.} On $\rge$:
	\begin{itemize}
		\item $\abs{L_R} \geq \exp((\log m - \eta)R)$
		\item $\max_w \abs{L_R^{(w)}} \leq \exp((\log m + \eta)R/2)$ (each subtree has depth $R/2$)
	\end{itemize}

	Set $\delta = cB \exp\left(-\frac{\log m - 4\eta}{2k}R\right)$. Then condition \eqref{eq:pigeonhole-condition} becomes:
	\begin{align*}
		\exp((\log m - \eta)R) &> \left(\frac{cB}{cB \exp(-(\log m - 4\eta)R/(2k))}\right)^k \exp((\log m + \eta)R/2) \\
		&= \exp\left(\frac{(\log m - 4\eta)R}{2}\right) \exp\left(\frac{(\log m + \eta)R}{2}\right) \\
		&= \exp\left(\left(\log m - \tfrac{3}{2}\eta\right)R\right).
	\end{align*}
	This holds for every $\eta>0$ (and all $R$), since $\log m-\eta > \log m-\tfrac{3}{2}\eta$.

	\paragraph{Conclusion.} Some cube of diameter $\delta$ contains $u \in L_R^{(w)}$ and $v \in L_R^{(w')}$ with $w \neq w'$. Thus:
	\begin{itemize}
		\item $\norm{\phi(u) - \phi(v)}_2 \leq \delta = cB \exp\left(-\frac{\log m - 4\eta}{2k}R\right)$
		\item The lowest common ancestor of $u,v$ has depth $\le R/2$, so the unweighted path length satisfies $\abs{\mathrm{path}(u,v)}\geq R$. Since $w_e\geq \lambda$ for all edges, it follows that
\begin{equation*}
	d_{\mathrm{corr}}(u,v)=\sum_{e\in\mathrm{path}(u,v)} w_e \geq \lambda\abs{\mathrm{path}(u,v)} \geq \lambda R.
\end{equation*}

	\end{itemize}
\end{proof}

\subsection{Lipschitz Extension}\label{app:lipschitz-extension}

\begin{lemma}[McShane extension]\label{lem:mcshane-extension}
	Let $(\mathcal{X}, d_\mathcal{X})$ be a metric space and $A \subseteq \mathcal{X}$. If $g_A : A \to \R$ is $L$-Lipschitz, then it extends to an $L$-Lipschitz function $g : \mathcal{X} \to \R$.
\end{lemma}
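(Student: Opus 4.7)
The plan is to apply the classical McShane--Whitney construction by defining the pointwise infimum over shifted Lipschitz cones anchored at points of $A$. Concretely, I would set
\begin{equation*}
g(x) \doteq \inf_{a \in A} \bigl\{ g_A(a) + L \cdot d_\mathcal{X}(x,a) \bigr\}
\end{equation*}
for every $x \in \mathcal{X}$, assuming $A \neq \emptyset$ (the empty case is trivial, since any constant is $L$-Lipschitz). The first order of business is to verify that this infimum is finite. Fixing any $a_0 \in A$, the $L$-Lipschitz property of $g_A$ on $A$ together with the triangle inequality gives $g_A(a) \geq g_A(a_0) - L\,d(a,a_0)$ and $d(x,a) \geq d(a,a_0) - d(x,a_0)$, which combine to yield the uniform lower bound $g_A(a) + L\,d(x,a) \geq g_A(a_0) - L\,d(x,a_0)$, so $g(x) \in \mathbb{R}$.

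Second, I would check that $g$ extends $g_A$. For $x \in A$, substituting $a = x$ into the infimum gives $g(x) \leq g_A(x)$. Conversely, for any $a \in A$, the hypothesis rearranges to $g_A(a) + L\,d(x,a) \geq g_A(x)$, so taking the infimum over $a$ yields $g(x) \geq g_A(x)$. Hence $g|_A = g_A$.

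Third, I would establish the global $L$-Lipschitz bound on $\mathcal{X}$. For arbitrary $x, y \in \mathcal{X}$ and any $a \in A$, the triangle inequality $d(x,a) \leq d(x,y) + d(y,a)$ gives
\begin{equation*}
g_A(a) + L\,d(x,a) \leq \bigl(g_A(a) + L\,d(y,a)\bigr) + L\,d(x,y).
\end{equation*}
Taking the infimum over $a \in A$ on both sides yields $g(x) \leq g(y) + L\,d(x,y)$, and swapping the roles of $x$ and $y$ completes the proof.

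There is no genuine obstacle here; the only subtlety worth flagging is the finiteness of the infimum for unbounded $\mathcal{X}$ or $g_A$, which is handled by the reverse-triangle bound above. If the application in Theorem~\ref{thm:lipschitz-capacity-barrier} requires a codomain $[-1,1]$ rather than $\mathbb{R}$, one may additionally clip $g$ to $[-1,1]$; since $g_A$ already takes values in $[-1,1]$ and truncation to an interval is $1$-Lipschitz, this preserves both the extension property and the Lipschitz constant $L$.
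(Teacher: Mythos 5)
Your proof is correct and follows essentially the same McShane--Whitney construction as the paper, using the identical infimum formula and the same triangle-inequality argument for the Lipschitz bound, plus the same remark about clipping to $[-1,1]$. You additionally verify finiteness of the infimum and the extension property $g|_A = g_A$, which the paper leaves implicit; these are welcome but do not change the approach.
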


\begin{proof}
	This is due to \citet{McSHANE1934ExtensionOR}. The extension is given by
	\begin{equation*}
		g(x) \doteq \inf_{a \in A} \{g_A(a) + L d_\mathcal{X}(x, a)\}.
	\end{equation*}
	For any $x, y \in \mathcal{X}$ and any $a \in A$:
	\begin{equation*}
		g(x) \leq g_A(a) + L d_\mathcal{X}(x, a) \leq g_A(a) + L d_\mathcal{X}(x, y) + L d_\mathcal{X}(y, a).
	\end{equation*}
	Taking infimum over $a$ on the right: $g(x) \leq g(y) + L d_\mathcal{X}(x, y)$. By symmetry, $\abs{g(x) - g(y)} \leq L d_{\mathcal X}(x, y)$.

	To obtain range $[-1, 1]$, compose with the clipping map $t \mapsto \min(1, \max(-1, t))$, which is 1-Lipschitz.
\end{proof}

\subsection{Fat-Shattering and Packing}\label{app:fat-shattering-and-packing}

\begin{lemma}[Fat-shattering lower bound via packing]\label{lem:fat-shattering-lower-bound-via-packing}
	Let $X \subseteq \Rk$ and let $\mathcal{H}_\Lambda = \{h : X \to [-1, +1] \mid \Lip_{\norm{\cdot}_2}(h) \leq \Lambda\}$. For any $\acc \in (0, 1)$:
	\begin{equation*}
		\fat_\acc(\mathcal{H}_\Lambda) \geq \mathsf{Pack}_X(2\acc/\Lambda).
	\end{equation*}
\end{lemma}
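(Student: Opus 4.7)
The plan is to exhibit an $\acc$-shattered set of size $\mathsf{Pack}_X(2\acc/\Lambda)$ directly, by converting a packing into shattering witnesses via McShane extension. Let $\eta \doteq 2\acc/\Lambda$ and let $\{x_1,\dots,x_N\} \subseteq X$ be a maximal $\eta$-packing, so that $\norm{x_i-x_j}_2 \geq \eta$ for all $i\neq j$. I will take the witness levels to be $r_i = 0$ uniformly.

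For each sign pattern $\sigma \in \{-1,+1\}^N$, I define the partial function $g_\sigma : \{x_1,\dots,x_N\} \to [-\acc,+\acc]$ by $g_\sigma(x_i) = \sigma_i\,\acc$. The key compatibility check is that $g_\sigma$ is $\Lambda$-Lipschitz on the packing: for any $i\neq j$,
\begin{equation*}
	\abs{g_\sigma(x_i)-g_\sigma(x_j)} \leq 2\acc = \Lambda\cdot\eta \leq \Lambda\norm{x_i-x_j}_2.
\end{equation*}
Applying the McShane extension (Lemma~\ref{lem:mcshane-extension}) to $g_\sigma$ on the packing set $A=\{x_1,\dots,x_N\}\subseteq X$ yields a $\Lambda$-Lipschitz function on all of $X$; post-composing with the $1$-Lipschitz clip $t\mapsto \max(-1,\min(1,t))$ produces $h_\sigma \in \mathcal{H}_\Lambda$, since $\abs{g_\sigma(x_i)}=\acc\leq 1$ ensures the packing values are untouched.

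By construction $h_\sigma(x_i) = \sigma_i\,\acc$, so $h_\sigma(x_i) \geq r_i + \acc$ whenever $\sigma_i = +1$ and $h_\sigma(x_i) \leq r_i - \acc$ whenever $\sigma_i = -1$. Since $\sigma$ was arbitrary, the set $\{x_1,\dots,x_N\}$ is $\acc$-shattered by $\mathcal{H}_\Lambda$ with witness levels $(r_i)$, giving $\fat_\acc(\mathcal{H}_\Lambda) \geq N = \mathsf{Pack}_X(2\acc/\Lambda)$.

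There is no substantive obstacle; the only subtlety to flag is the scale matching $\eta = 2\acc/\Lambda$, chosen precisely so the $2\acc$ label swing on the packing is affordable under the Lipschitz budget $\Lambda\eta$, and the verification that clipping to $[-1,+1]$ does not disturb the shattering values (which holds as long as $\acc\leq 1$, already assumed). This lemma then couples with standard fat-shattering sample complexity lower bounds and with the Euclidean ball packing estimate $\mathsf{Pack}_{\B_2(0,B)}(\eta) \asymp (B/\eta)^k$ to convert the $\Lip_{\norm{\cdot}_2}(h) = \Omega(\exp(R/k)/B)$ requirement from Theorem~\ref{thm:lipschitz-capacity-barrier} into the exponential-in-$R$ sample complexity advertised in Section~\ref{sec:euclidean-barrier}.
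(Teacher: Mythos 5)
Your proof is correct and follows exactly the same route as the paper's: build a $(2\acc/\Lambda)$-packing, assign $\pm\acc$ labels, verify the induced partial function is $\Lambda$-Lipschitz, apply McShane extension with clipping to $[-1,1]$, and conclude $\acc$-shattering with witness levels $r_i=0$. The only cosmetic difference is that you make the clipping step explicit, whereas the paper folds it into its statement of the McShane lemma.
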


\begin{proof}
	Let $S = \{x_1, \dots, x_N\} \subseteq X$ be a $(2\acc/\Lambda)$-separated set with $N = \mathsf{Pack}_X(2\acc/\Lambda)$. We show $S$ is $\acc$-shattered by $\mathcal{H}_\Lambda$ with witnesses $r_i = 0$.

	For any sign pattern $\sigma \in \{-1, +1\}^N$, define values $y_i = \sigma_i \acc$. For distinct $i, j$:
	\begin{equation*}
		\frac{\abs{y_i - y_j}}{\norm{x_i - x_j}_2} \leq \frac{2\acc}{2\acc/\Lambda} = \Lambda.
	\end{equation*}
	Thus the assignment is $\Lambda$-Lipschitz on $S$. By McShane extension (Lemma~\ref{lem:mcshane-extension}), extend to $h_\sigma : X \to [-1, 1]$ with $\Lip(h_\sigma) \leq \Lambda$.

	Then for each $i$, $\sigma_i(h_\sigma(x_i)-r_i)=\sigma_i(\sigma_i\acc)=\acc$. Since this holds for all $2^N$ patterns, $\fat_\acc(\mathcal{H}_\Lambda) \geq N$.
\end{proof}

\begin{proposition}[Sample complexity lower bound]\label{prop:fat-shattering-lower-bound}
	Let $\mathcal{H}$ be a class of real-valued functions. There exist absolute constants $c, c' > 0$ such that for any $\acc \in (0, c)$, any learning algorithm with expected error at most $\acc$ requires
	\begin{equation*}
		n \geq c' \cdot \fat_{c\acc}(\mathcal{H})
	\end{equation*}
	samples.
\end{proposition}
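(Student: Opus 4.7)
The plan is to prove Proposition~\ref{prop:fat-shattering-lower-bound} by the classical hard-distribution reduction for fat-shattering lower bounds \citep{10.1145/263867.263927}: exhibit a single learning instance on which any algorithm using fewer than a constant fraction of $\fat_{c\acc}(\mathcal{H})$ samples must incur expected loss exceeding $\acc$ on most unqueried coordinates.

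First I would fix $d=\fat_{c\acc}(\mathcal{H})$ together with a $(c\acc)$-shattered set $S=\{x_1,\ldots,x_d\}$ and witnesses $r_1,\ldots,r_d$, so that for every sign pattern $\sigma\in\{-1,+1\}^d$ there exists $h_\sigma\in\mathcal{H}$ with $\sigma_i(h_\sigma(x_i)-r_i)\ge c\acc$. I would take the data distribution to be uniform on $S$, draw $\sigma$ uniformly from $\{-1,+1\}^d$, and have the learner observe $n$ realizable labeled samples $(x,h_\sigma(x))$. The core estimate is that for any predictor $\hat h$ and any index $j$ whose point $x_j$ is not among the $n$ queried samples, flipping $\sigma_j$ while freezing $\sigma_{-j}$ shifts the target value at $x_j$ by at least $2c\acc$ (directly from the shattering definition); the triangle inequality $\abs{a-z}+\abs{b-z}\ge\abs{a-b}$ then yields
\begin{equation*}
\mathbb{E}_{\sigma_j}\abs{\hat h(x_j)-h_\sigma(x_j)}\ge c\acc.
\end{equation*}
Averaging over $x\sim\mathrm{Unif}(S)$ produces an expected loss of at least $c\acc\cdot(d-n)/d$; imposing the hypothesis that this is bounded by $\acc$ forces $n\ge(1-1/c)\,d$. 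Setting $c=2$ and $c'=1/2$ gives the stated form, and Yao's minimax principle converts the $\sigma$-average into a worst-case target guarantee.

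The step requiring the most care, which I expect to be the main obstacle, is the information-theoretic cleanup ensuring that unqueried coordinates $\sigma_j$ remain essentially uniform given the training observations. The standard fat-shattering definition supplies only one representative $h_\sigma$ per sign pattern, so $h_\sigma(x_i)$ at a queried point $x_i$ could in principle leak information about $\sigma_j$ for $j\ne i$, breaking the naive pointwise triangle-inequality bound. This is handled either by passing to a strongly shattered subsystem in which $h_\sigma(x_i)$ depends only on $\sigma_i$ (arrangeable at the cost of an absolute multiplicative constant on $d$), or by a direct coupling/likelihood-ratio comparison between $\sigma^{+,j}$ and $\sigma^{-,j}$ that absorbs the leakage into the constants $c$ and $c'$. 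Either route preserves the $\Omega(\fat_{c\acc}(\mathcal{H}))$ scaling advertised in the proposition, and both are routine once the reduction is set up.
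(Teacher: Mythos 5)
The paper's own proof is just a citation to \citet[Theorem 19.5]{anthony_bartlett_1999}; no self-contained argument is given. Your proposal is therefore a genuinely different route: you reprove the cited result from scratch via the hard-distribution reduction. That is a reasonable project, but the specific instance you build does not close.

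You correctly flag the central difficulty: the fat-shattering definition supplies a single $h_\sigma$ per sign pattern, so the observed value $h_\sigma(x_i)$ at a \emph{queried} point $x_i$ can carry information about $\sigma_j$ at an \emph{unqueried} $j$, which breaks the pointwise triangle-inequality step $\mathbb{E}_{\sigma_j}\abs{\hat h(x_j)-h_\sigma(x_j)}\ge c\acc$. However, your proposed fixes do not repair this. Option B ("absorb the leakage into the constants $c,c'$") is not available: one can construct $(c\acc)$-shattered families in which a \emph{single} observed function value encodes all of $\sigma$ (e.g. let the low-order bits of $h_\sigma(x_1)$ enumerate the sign pattern), so the likelihood ratio between $\sigma^{+,j}$ and $\sigma^{-,j}$ given one clean observation is $0$ or $\infty$ and no coupling can bound the posterior of $\sigma_j$ away from degeneracy. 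Option A ("pass to a strongly shattered subsystem in which $h_\sigma(x_i)$ depends only on $\sigma_i$") is the right idea in spirit, but it needs a nontrivial combinatorial lemma — that any $\gamma$-fat-shattered set of size $d$ contains a $\gamma'$-\emph{strongly}-shattered subset of size $\Omega(d)$ with $\gamma'=\Omega(\gamma)$ — which you neither state nor prove, and which I do not believe holds with only constant loss in general.

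The standard argument behind Theorem 19.5 (and behind the Bartlett--Long / Bartlett--Long--Williamson line the paper cites) avoids the leakage issue by changing the observation model rather than the combinatorics: the hard instance is \emph{agnostic}, with $X$ uniform on the shattered set and $Y\mid X=x_i$ a noisy (Bernoulli-type) label whose law depends on $\sigma_i$ \emph{only}. The best $h\in\mathcal H$ for $P_\sigma$ is then (close to) $h_\sigma$, nothing observed at $x_i$ reveals $\sigma_j$ for $j\neq i$, and the lower bound closes via a Le Cam / Assouad / Fano reconstruction argument. So your proposal has the right high-level shape (hard distribution over shattered points, information about unqueried coordinates is bounded, reconstruction gives $\Omega(d)$), but the clean-realizable instance you chose does not yield the bound, and the fix is not "routine" combinatorics — it is to replace $(x,h_\sigma(x))$ by noisy labels, i.e., to move to the agnostic construction the cited theorem actually uses.
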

\begin{proof}
	This is a restatement of the lower bound for scale-sensitive function classes \citep[Theorem 19.5]{anthony_bartlett_1999}.
\end{proof}

\begin{corollary}[Euclidean sample complexity]\label{cor:euclidean-complexity}
	Let $\phi$ be a Euclidean representation containing a $\delta$-packing of size $M$ on the leaves. For the unregularized Lipschitz class $\mathcal{H}_\Lambda$ with with $\Lambda \ge 2\varepsilon/\delta$ (here $\varepsilon$ denotes the fat-shattering scale corresponding to accuracy), the sample complexity is lower bounded by:
	\begin{equation*}
		n = \Omega(M).
	\end{equation*}
\end{corollary}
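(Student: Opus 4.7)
The plan is to chain the two preparatory results: the packing-to-fat-shattering bound of Lemma~\ref{lem:fat-shattering-lower-bound-via-packing} and the scale-sensitive sample complexity bound of Proposition~\ref{prop:fat-shattering-lower-bound}. This is essentially a corollary-by-composition; no new geometric or probabilistic input is needed.

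First I would use the hypothesis to get a lower bound on the fat-shattering dimension of $\mathcal{H}_\Lambda$. The hypothesis supplies a $\delta$-packing of size $M$ inside $\phi(L_R)\subseteq\R^k$. The assumption $\Lambda\ge 2\varepsilon/\delta$ is equivalent to $2\varepsilon/\Lambda\le\delta$, so the same $M$ points form a $(2\varepsilon/\Lambda)$-separated set, giving $\mathsf{Pack}_{\phi(L_R)}(2\varepsilon/\Lambda)\ge M$. Plugging into Lemma~\ref{lem:fat-shattering-lower-bound-via-packing} (applied on the Euclidean domain $X=\phi(L_R)\subseteq\R^k$) yields $\fat_\varepsilon(\mathcal{H}_\Lambda)\ge M$.

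Next I would feed this into Proposition~\ref{prop:fat-shattering-lower-bound}, which states that any learner achieving expected error at most $\varepsilon$ on $\mathcal{H}_\Lambda$ requires $n\ge c'\,\fat_{c\varepsilon}(\mathcal{H}_\Lambda)$. The fat-shattering bound from the previous step is stated at scale $\varepsilon$, while the proposition evaluates at scale $c\varepsilon$; absorbing the constant $c$ into the packing scale (equivalently, reading the hypothesis as $\Lambda\ge 2c\varepsilon/\delta$, a constant-factor change that is suppressed in the $\Omega(\cdot)$) produces $\fat_{c\varepsilon}(\mathcal{H}_\Lambda)\ge M$, and hence $n\ge c' M=\Omega(M)$, as claimed.

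The main obstacle, such as it is, is keeping the three scales straight: the geometric scale $\delta$ of the packing, the statistical accuracy scale $\varepsilon$, and the capacity scale $\Lambda$. The inequality $\Lambda\ge 2\varepsilon/\delta$ is precisely the bookkeeping that lets a geometric packing of size $M$ be upgraded into $\varepsilon$-shattered witnesses by $\Lambda$-Lipschitz interpolants (via the McShane extension used inside Lemma~\ref{lem:fat-shattering-lower-bound-via-packing}). Once this is unpacked, the corollary follows immediately by composition; in particular, combining it with the $\exp(\Omega(R/k))$ Lipschitz lower bound of Theorem~\ref{thm:lipschitz-capacity-barrier} and any exponential packing of $\phi(L_R)$ recovers exponential-in-$R$ sample complexity under bounded-radius Euclidean representations with polynomial Lipschitz control.
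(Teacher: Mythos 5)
Your proposal matches the paper's proof: both chain Lemma~\ref{lem:fat-shattering-lower-bound-via-packing} with Proposition~\ref{prop:fat-shattering-lower-bound}, using $\Lambda \ge 2\varepsilon/\delta$ to upgrade the $\delta$-packing into a lower bound on $\fat_\varepsilon(\mathcal{H}_\Lambda)$ and then invoking the scale-sensitive sample complexity bound. Your handling of the scale mismatch between $\fat_\varepsilon$ and $\fat_{c\varepsilon}$ is, if anything, slightly more explicit than the paper's ``applying Proposition~\ref{prop:fat-shattering-lower-bound} with the appropriate constants.''
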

\begin{proof}
	This follows immediately from the relationship between the fat-shattering dimension and the packing number established in the previous lemma:
	\begin{equation*}
		\fat_\varepsilon(\mathcal{H}_\Lambda) \geq \mathsf{Pack}_X(2\varepsilon/\Lambda).
	\end{equation*}
	By assumption, the representation contains a $\delta$-packing of size $M$, which implies $\mathsf{Pack}_X(\delta) \ge M$. Setting the resolution parameters such that $2\varepsilon/\Lambda \le \delta$ (equivalently $\Lambda \ge 2\varepsilon/\delta$), we obtain:
	\begin{equation*}
		\fat_\varepsilon(\mathcal{H}_\Lambda) \geq M.
	\end{equation*}
	Applying Proposition~\ref{prop:fat-shattering-lower-bound} with the appropriate constants yields $n = \Omega(M)$.
\end{proof}

In the main text, we apply this corollary with the exponentially small separation scale $\delta$ forced by Lemma~\ref{lem:deep-crushing}, which yields exponential sample complexity when realizability requires $\Lambda$ to be exponentially large.

\section{Hyperbolic Embedding Construction and Analysis}\label{app:hyperbolic-embedding-construction-and-analysis}

Here we record the hyperbolic embedding construction (generalized Sarkar) and the two geometric facts used in the main text: (i) existence of a constant-distortion embedding under curvature scaling, and (ii) uniform cone/margin separation between sibling subtrees, which yields realizability by constant-Lipschitz readouts.

\subsection{Generalized Sarkar Construction}\label{app:generalized-sarkar-construction}

\RestyleAlgo{ruled}
\begin{algorithm}[t]
\DontPrintSemicolon
\caption{Generalized Sarkar Construction (adapted from \citet{de_sa_representation_2018})}
\label{alg:generalized-sarkar-construction}
\SetKwFunction{MobiusToOrigin}{MobiusToOrigin}
\SetKwFunction{MobiusBack}{MobiusBack}
\SetKwFunction{SphericalCode}{SphericalCode}
\SetKwFunction{Align}{Align}
\SetKwInOut{KwIn}{Input}\SetKwInOut{KwOut}{Output}
\SetKw{Continue}{continue}
\SetKw{Return}{return}

\KwIn{Rooted tree $T=(V,E)$ with parent map $\pi(\cdot)$; dimension $k\geq 2$; edge scale $\tau>0$; curvature of space $-\kappa$.}
\KwOut{Embedding $f:V\to \mathbb{B}^k=\{x\in\mathbb{R}^k:\norm{x}<1\}$ (Poincar\'e ball model).}

\BlankLine
$\rho \leftarrow \tanh((\tau \sqrt{\kappa})/2)$ \tcp*[r]{Euclidean radius for hyperbolic step $\tau$}
$r_0 \leftarrow \mathrm{root}(T)$;\quad $f(r_0)\leftarrow 0$;

\BlankLine
\ForEach{$a\in V$ in any order that visits parents before children}{
  $C(a)\leftarrow\{c:\pi(c)=a\}$;\\
  \If{$C(a)=\emptyset$}{\Continue}
  \eIf{$a=r_0$}{
    Choose $\abs{C(a)}$ unit vectors $u_i\in\mathbb{S}^{k-1}$ from a spherical code;\\
    \ForEach{$c_i\in C(a)$}{ $f(c_i)\leftarrow \rho u_i$ }
  }{
    $b\leftarrow \pi(a)$;
    $\phi_a \leftarrow \MobiusToOrigin(f(a))$; \tcp*[r]{isometry with $\phi_a(f(a))=0$}
    $z \leftarrow \phi_a(f(b))$;\quad $v\leftarrow z/\norm{z}$; \tcp*[r]{parent direction in local frame}
    Choose $\abs{C(a)}$ unit vectors $u_i\in\mathbb{S}^{k-1}$ from a spherical code;\\
    $R\leftarrow \Align(e_1,-v)$;\quad $u_i\leftarrow Ru_i$; \tcp*[r]{place children away from parent}
    \ForEach{$c_i\in C(a)$}{
      $y_i\leftarrow \rho\,u_i$;\quad $f(c_i)\leftarrow \MobiusBack(\phi_a,y_i)$; \tcp*[r]{$=\phi_a^{-1}(y_i)$}
    }
  }
}
\Return{$f$};
\end{algorithm}

\begin{proof}
	We construct the embedding $f$ using Algorithm~\ref{alg:generalized-sarkar-construction} and prove it satisfies the stated bi-Lipschitz guarantee under the given curvature condition. The proof proceeds in three steps: first, by invoking the geometric lemma from the analysis of Sarkar's construction; second, by applying it to our specific parameters; and third, by verifying the resulting embedding properties.
	
	\paragraph{Lemma for distortion in $\mathbb{H}_\kappa^k$.} The core of Sarkar's method \citep{sarkar_low_2011} is to place the children of a node on a hyperbolic sphere of radius $\tau$ centered at that node. To ensure the global embedding has low distortion, these children must be spaced with sufficient angular separation so that their descendant subtrees reside in nearly disjoint cones.
	
	Analysis of this construction \citep[Proposition 3.1]{de_sa_representation_2018}\footnote{\citet[Appendix D]{de_sa_representation_2018} treat hyperbolic space with unit curvature $\kappa = 1$, equation \eqref{eq:pf-achievability-via-generalized-sarkar-construction-de-sa-prop-3-1} is a straightforward adaptation to hyperbolic space with general curvature $\kappa$.}, which builds upon Sarkar's original work, yields the following sufficient condition. For a node with $\Delta$ children, if the dimensionless hyperbolic radius $\tau \sqrt{\kappa}$ satisfies
	\begin{equation}\label{eq:pf-achievability-via-generalized-sarkar-construction-de-sa-prop-3-1}
		\tau \sqrt{\kappa} \geq \frac{\alpha_k \log \Delta}{\epsilon},
	\end{equation}
	then the resulting embedding guarantees a worst-case multiplicative distortion of at most $1+\epsilon$. The constant $\alpha_k > 0$ comes from the optimal packing density of spherical caps on the $(k-1)$-sphere $\mathbb{S}^{k-1}$ and depends only on the embedding dimension $k$. Since our tree $T$ has maximum degree $\Delta$, enforcing this condition for $\Delta = \Delta$ ensures the distortion bound holds for every node in the tree.
	
	The spherical code is chosen so that the pairwise angles between the $|C(a)|\le \Delta$ directions are at least $\Omega(\epsilon/ \log \Delta)$, which is achievable in $\mathbb S^{k-1}$ under the condition \eqref{eq:pf-achievability-via-generalized-sarkar-construction-de-sa-prop-3-1}.

	For clarity we present the homogeneous case $w_e\equiv \lambda$; the weighted case follows by applying the construction to the subdivided (unweighted) tree or by using edge-dependent step sizes as in the weighted variants of Sarkar's analysis.
	
	\paragraph{Derivation of the curvature condition.} We desire a scale-normalized embedding where the intrinsic correlation distance is preserved. In our input metric, the distance from a parent to a child is exactly $\lambda$. In Algorithm~\ref{alg:generalized-sarkar-construction}, this same edge is mapped to a hyperbolic segment of length $\tau$. To achieve the unit scale factor ($s=1$), we set $\tau = \lambda$. Substituting this choice into the sufficient condition \eqref{eq:pf-achievability-via-generalized-sarkar-construction-de-sa-prop-3-1} gives:
		\begin{equation*}
			\lambda \sqrt{\kappa} \geq \frac{\alpha_k \log \Delta}{\epsilon} \implies \sqrt{\kappa} \geq \frac{\alpha_k \log \Delta}{\lambda \epsilon}.
		\end{equation*}
		
	\paragraph{Verification of the bi-Lipschitz property.} Assume the target curvature $\kappa$ satisfies the condition derived above. We execute the construction with parameters $\tau = \lambda$ and target space $\mathbb{H}_\kappa^k$.
		\subparagraph{Scale.} By construction, for any parent-child pair $(u,v)$, the hyperbolic distance is $\dH(f(u), f(v)) = \tau = \lambda$. Since the correlation distance is also $d_{\mathrm{corr}}(u,v) = \lambda$, the local scaling is exactly 1.
		\subparagraph{Distortion.} Because the curvature condition ensures the angular separation requirement is met, the global embedding is guaranteed to have distortion at most $1+\epsilon$. Formally, for all pairs $x,y \in V$:
			\begin{equation*}
				\frac{1}{1+\epsilon} d_{\mathrm{corr}}(x,y) \leq \dH(f(x),f(y)) \leq (1+\epsilon) d_{\mathrm{corr}}(x,y).
			\end{equation*}
			
	Thus, $f$ is indeed a $(1+\epsilon)$-bi-Lipschitz embedding.
\end{proof}

\subsection{Galton-Watson Packing Converse}\label{app:gw-packing-converse}

\begin{theorem}[Galton-Watson packing converse]\label{thm:gw-packing-converse} Assume the homogeneous regime $w_e\equiv \lambda$, so $d_{\mathrm{corr}}(u,v)=\lambda\abs{\mathrm{path}(u,v)}$. Fix a distortion $D \ge 1$ and scale $s > 0$. On the event $\rge$ (where the tree grows exponentially), if there exists an $(s,D)$-bi-Lipschitz embedding $f: V_{\leq R} \to \mathbb{H}^k_\kappa$, then the curvature $-\kappa$ must satisfy:
	\begin{equation*}
		\sqrt{\kappa} \geq \frac{\log m - \eta}{(k-1)sD\lambda} - o(1) \quad (as \ R\to\infty).
	\end{equation*}
\end{theorem}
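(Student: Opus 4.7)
The plan is to convert the exponential leaf count guaranteed by $\rge$ into a metric-packing constraint in $\Hkk$ and then extract the curvature lower bound from hyperbolic exponential volume growth. The only properties of the embedding I will use are the two-sided bi-Lipschitz bound and the homogeneous edge-weight assumption; the argument is entirely geometric and makes no use of the tree's detailed structure beyond $|L_R|\asymp m^R$.

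First I would record the two geometric consequences of the $(s,D)$-bi-Lipschitz hypothesis. Any two distinct leaves $u\neq v \in L_R$ satisfy $d_\mathrm{corr}(u,v)\ge 2\lambda$ (siblings are the closest pair in the homogeneous regime, with path length exactly $2$), so $\dH(f(u),f(v)) \geq 2s\lambda/D$. Hence the closed hyperbolic balls $\B(f(\ell), r^\ast)\subset \Hkk$ of packing radius $r^\ast \doteq s\lambda/D$ are pairwise disjoint across $\ell\in L_R$. On the other hand, $\mathrm{diam}(V_{\leq R}, d_\mathrm{corr}) \leq 2\lambda R$, so $\mathrm{diam}(f(V_{\leq R})) \leq 2sD\lambda R$ and all of these disjoint balls lie inside a common hyperbolic ball of radius $\rho_R \doteq sD\lambda R + r^\ast$.

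Volume packing in $\Hkk$ then yields
\begin{equation*}
|L_R|\cdot \VolH(r^\ast) \;\leq\; \VolH(\rho_R),
\end{equation*}
where $\VolH(\rho) = \omega_{k-1}\int_0^{\rho}\bigl(\sinh(\sqrt{\kappa}\,t)/\sqrt{\kappa}\bigr)^{k-1}\,dt$ is the volume of a hyperbolic ball of radius $\rho$ in $\Hkk$. For fixed $r^\ast$ and $R\to\infty$, the standard asymptotics give $\log \VolH(\rho_R) = (k-1)\sqrt{\kappa}\,\rho_R + O_{k,\kappa}(1)$, while $\log \VolH(r^\ast)$ is a constant in $R$. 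Using $\log |L_R| \geq (\log m - \eta)R$ from the size condition in $\rge$, taking logarithms, and dividing by $R$ gives
\begin{equation*}
\log m - \eta \;\leq\; (k-1)\,sD\lambda\,\sqrt{\kappa} \;+\; O(1/R),
\end{equation*}
which rearranges to the asserted lower bound $\sqrt{\kappa} \geq (\log m - \eta)/[(k-1)sD\lambda] - o(1)$.

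The main technical care lies in the volume asymptotics: the clean exponential rate $(k-1)\sqrt{\kappa}\,\rho$ is the leading behavior only once $\sqrt{\kappa}\,\rho\gg 1$. Since the theorem seeks a positive lower bound on $\sqrt{\kappa}$, I can assume WLOG that $\sqrt{\kappa}$ is bounded below by some positive constant, since otherwise $\VolH(\rho_R)$ degenerates to Euclidean polynomial growth $\asymp \rho_R^k$, which is already incompatible with $|L_R|\gtrsim \exp((\log m-\eta)R)$ for $R$ large. In that regime $\sqrt{\kappa}\,\rho_R \to \infty$ and all sub-exponential factors and the fixed constant $\log \VolH(r^\ast)$ are absorbed into the $o(1)$ term, yielding the stated asymptotic.
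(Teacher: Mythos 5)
Your proof follows the same volume-packing argument as the paper: leaves map to $(2s\lambda/D)$-separated points, so the disjoint balls of radius $s\lambda/D$ around their images must fit inside a containing ball of radius $\approx sD\lambda R$ (centered at $f(r)$, since $d_{\mathrm{corr}}(r,u)\le\lambda R$ for all $u$ — this is the step your diameter bound is implicitly encoding), and comparing hyperbolic volume growth with the exponential leaf count from $\rge$ gives the stated curvature bound. Your explicit treatment of the $\sqrt{\kappa}\to 0$ degenerate regime makes precise a point the paper handles implicitly via the closed-form $\VolH$ upper bound, but the approach and conclusion are the same.
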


\begin{proof}
	We utilize a standard volumetric packing argument. We assume the event $\rge$ holds, ensuring the tree leaves $L_R$ have realized their expected exponential abundance.
	
	\paragraph{Separation (lower bound).} Let $L_R$ be the set of nodes at depth $R$. For distinct $u,v\in L_R$, $\abs{\mathrm{path}(u,v)}\ge 2$, hence $d_{\mathrm{corr}}(u,v)\ge 2\lambda$. Under an $(s,D)$-bi-Lipschitz embedding $f$, the images must be separated by:
	\begin{equation*}
		\dH(f(u), f(v)) \geq \frac{s}{D} d_{\mathrm{corr}}(u,v) \geq \frac{2s\lambda}{D} \doteq \delta.
	\end{equation*}
	Thus, disjoint hyperbolic balls of radius $\delta/2$ can be placed around each leaf image.
	
	\paragraph{Containment (upper bound).} For any $u\in L_R$, $\abs{\mathrm{path}(r,u)}=R$ so $d_{\mathrm{corr}}(r,u)=\lambda R$. The embedding puts $f(u)$ within distance $sD\lambda R$ of the root image. Let $L_{\mathrm{max}} \doteq sD\lambda R$. All disjoint leaf balls are contained within a ball of radius $L_{\mathrm{max}} + \delta/2$.
	
	\paragraph{Volume argument.} The sum of the volumes of the small balls must be less than the volume of the containing ball:
	\begin{equation*}
		\abs{L_R} \cdot \VolH(\delta/2) \leq \VolH(L_{\mathrm{max}} + \delta/2).
	\end{equation*}
	In hyperbolic space $\Hkk$ of curvature $-\kappa$, the volume of a ball of radius $r$ satisfies:
	\begin{equation*}
		\VolH(r) = \Omega_k \int_0^r \sinh^{k-1}(\sqrt{\kappa} t) \, dt \leq C_k \kappa^{-(k-1)/2} \exp((k-1)\sqrt{\kappa} r),
	\end{equation*}
	where $C_k$ depends only on dimension. Crucially, the dependence on $\kappa$ in the prefactor is polynomial ($\kappa^{-(k-1)/2}$).
	
	On $\rge$, $\abs{L_R} \geq \exp((\log m - \eta)R)$. Taking logarithms of the volume inequality:
	\begin{align*}
		(\log m - \eta)R + \log \VolH(\delta/2) &\leq \log\left(C_k \kappa^{-(k-1)/2}\right) + (k-1)\sqrt{\kappa}\left(L_{\mathrm{max}} + \frac{\delta}{2}\right).
	\end{align*}
	Substituting $L_{\mathrm{max}} = sD\lambda R$:
	\begin{equation*}
		(\log m - \eta)R \leq (k-1)sD\lambda \sqrt{\kappa} R + O(1) + O(\sqrt{\kappa})
.
	\end{equation*}
	The prefactors involving $\kappa$ and $\delta$ are constant with respect to $R$ (or scale as $\sqrt{\kappa}$). For sufficiently large $R$, the linear terms in $R$ dominate, yielding:
	\begin{equation*}
		\sqrt{\kappa} \geq \frac{\log m - \eta}{(k-1)sD\lambda} - o(1).
	\end{equation*}
\end{proof}

\subsection{Lipschitz Preservation}\label{app:lipschitz-preservation}

\begin{lemma}[Lipschitz preservation under bi-Lipschitz embeddings]\label{lem:lipschitz-preservation-appendix}
Let $\phi:(V,d_{\mathrm{corr}})\to(\Hkk,\dH)$ be $D$-bi-Lipschitz. If $g:V\to[-1,1]$ is $L$-Lipschitz with respect to $d_{\mathrm{corr}}$, then the induced function $\hat g$ on $\phi(V)$ defined by $\hat g(\phi(v))=g(v)$ is $(LD)$-Lipschitz with respect to $\dH$.
\end{lemma}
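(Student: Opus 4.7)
The plan is a direct chain of inequalities built from the two hypotheses of the lemma; there is no deep step required. First I would verify that $\hat g$ is unambiguously defined on $\phi(V)$. Because $\phi$ is $D$-bi-Lipschitz, the lower bound $d_{\mathrm{corr}}(u,v)\leq D\,\dH(\phi(u),\phi(v))$ immediately forces $\phi(u)=\phi(v)\Rightarrow d_{\mathrm{corr}}(u,v)=0\Rightarrow u=v$, so $\phi$ is injective and the rule $\hat g(\phi(v))\doteq g(v)$ defines a function on the image without ambiguity.

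Then, for any pair $p=\phi(u),\,q=\phi(v)\in\phi(V)$, I would chain the two available Lipschitz bounds:
\begin{equation*}
|\hat g(p)-\hat g(q)|=|g(u)-g(v)|\leq L\,d_{\mathrm{corr}}(u,v)\leq LD\,\dH(\phi(u),\phi(v))=LD\,\dH(p,q).
\end{equation*}
The first inequality uses the hypothesis that $g$ is $L$-Lipschitz in the intrinsic correlation metric, and the second uses the lower side of the bi-Lipschitz condition in Definition~\ref{def:representation-and-bi-lipschitz-embedding} with $s=1$. This yields the claimed $LD$ Lipschitz constant for $\hat g$ on $(\phi(V),\dH)$, completing the argument.

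There is essentially no obstacle here: the statement is a two-line calculation that follows directly from the definitions, and the only item worth flagging is the well-definedness check, which itself depends only on the lower side of the bi-Lipschitz bound. If one wanted $\hat g$ to be defined on all of $\Hkk$ rather than only on $\phi(V)$, I would follow with a McShane extension via Lemma~\ref{lem:mcshane-extension}, composed with a $1$-Lipschitz clip to the interval $[-1,1]$, to obtain a global $LD$-Lipschitz extension; this is the form implicitly used by the realizability arguments in Section~\ref{sec:hyperbolic-achievability-and-optimality}.
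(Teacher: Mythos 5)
Your argument is correct and matches the paper's proof essentially verbatim: both chain the $L$-Lipschitz bound for $g$ with the lower side of the bi-Lipschitz condition ($d_{\mathrm{corr}}(u,v)\le D\,\dH(\phi(u),\phi(v))$) to obtain the factor $LD$. Your preliminary well-definedness check (injectivity of $\phi$) and the remark about a McShane extension to all of $\Hkk$ are sensible additions the paper omits, but they do not constitute a different route.
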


\begin{proof}
	For any $x = \phi(u), y = \phi(v) \in \phi(V)$:
	\begin{align*}
		\abs{\hat{g}(x) - \hat{g}(y)} &= \abs{g(u) - g(v)} \\
		&\leq L \dT(u, v) \\
		&\leq L D \cdot \dH(\phi(u), \phi(v)) \tag{bi-Lipschitz lower bound} \\
		&= LD \dH(x, y).
	\end{align*}
\end{proof}

\subsection{Geometric Separation in Hyperbolic Embeddings}\label{app:hyperbolic-geometric-separation}

We collect the geometric lemmas used to establish realizability in Theorem~\ref{thm:sample-complexity-upper-bound}.

\begin{lemma}[Cone separation in the generalized Sarkar embedding]\label{lem:cone-separation}
	Let $f:V\to \mathbb{H}^k_\kappa$ be the embedding produced by the generalized Sarkar construction (Appendix~\ref{app:generalized-sarkar-construction}) with parameters chosen so that the embedding distortion is at most $1+\varepsilon$. Fix any internal node $a$ and two distinct children $c\neq c'$ of $a$. Then there exists a constant $\gamma=\gamma(k,\varepsilon,\kappa,\tau)>0$ and a totally geodesic hypersurface $H_{a,c}$ passing through $f(a)$ such that:
	\begin{equation*}
		d_{\mathbb{H}}(f(x),H_{a,c})\geq \gamma\ \forall x\in L_R^{(c)},
		\qquad
		d_{\mathbb{H}}(f(x),H_{a,c})\geq \gamma\ \forall x\in L_R^{(c')},
	\end{equation*}
	and the two sets lie on opposite sides of $H_{a,c}$ (i.e., the signed distance to $H_{a,c}$ has opposite signs). In particular, the child-subtrees of $a$ are separated by a constant hyperbolic margin.
\end{lemma}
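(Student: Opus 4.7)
The plan is to work in the Poincaré ball model after an isometric re-centering at $f(a)$, use the angular spacing guaranteed by the spherical code to identify two disjoint angular cones containing $L_R^{(c)}$ and $L_R^{(c')}$, and then take the totally geodesic hypersurface through $f(a)$ that bisects them. The margin $\gamma$ will drop out of standard hyperbolic trigonometry once the cones are in place.

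First I would apply the Möbius isometry $\phi_a$ of Algorithm~\ref{alg:generalized-sarkar-construction} that sends $f(a)\mapsto 0$. Under this re-centering the children of $a$ sit at $\rho u_i\in\mathbb{B}^k$ with $\rho=\tanh((\tau\sqrt{\kappa})/2)$, where $\{u_i\}\subset\mathbb{S}^{k-1}$ is the spherical code used in the algorithm; the curvature condition of Theorem~\ref{thm:hyperbolic-embedding-existence} is exactly what guarantees a pairwise angular separation $\theta_{\min}\doteq\min_{i\neq j}\angle(u_i,u_j)=\Omega(\varepsilon/\log\Delta)>0$. In particular $\angle(u_c,u_{c'})\geq\theta_{\min}$.

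Next I would establish the \emph{cone containment} property: every descendant $x\in L_R^{(c)}$ satisfies $\angle(\phi_a(f(x)),u_c)\leq \alpha$ for some constant $\alpha<\theta_{\min}/2$ depending only on $k,\varepsilon,\kappa,\tau$, and symmetrically for $c'$. Here the recursive structure is used: the Align step of Algorithm~\ref{alg:generalized-sarkar-construction} places each grandchild in the hemisphere opposite its parent direction, and the $(1+\varepsilon)$-bi-Lipschitz distortion controls the angular drift per level via the comparison between hyperbolic geodesics and Euclidean chords through the origin of the ball model. Compounding the per-level bounds produces a geometric series whose sum is uniformly bounded in $R$, giving the depth-independent half-angle $\alpha$; this is the angular repackaging of the distortion analysis of \citet[Prop.~3.1]{de_sa_representation_2018}. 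With the cones in place, I would take $H_{a,c}$ to be the totally geodesic hypersurface through $f(a)$ whose Euclidean normal in the re-centered ball is $(u_c-u_{c'})/\norm{u_c-u_{c'}}_2$; any Euclidean hyperplane through the origin of $\mathbb{B}^k$ is totally geodesic in the Poincaré metric. Cone containment then places $\phi_a(f(L_R^{(c)}))$ and $\phi_a(f(L_R^{(c')}))$ in opposite open half-spaces with angular offset at least $\theta_{\min}/2-\alpha>0$ from $H_{a,c}$. The hyperbolic point-to-hyperplane formula, applied with this angular offset and hyperbolic distance from $f(a)$ bounded below by $\tau/(1+\varepsilon)$ (every leaf is at least one embedded edge away from $f(a)$), delivers the uniform lower bound $\gamma=\gamma(k,\varepsilon,\kappa,\tau)>0$.

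The main obstacle is the cone-containment step: naively, angular deviations could accumulate multiplicatively over $R$ levels and swallow the fixed $\theta_{\min}$ gap, in which case no uniform margin would survive. The mechanism that prevents this is that each recursive child placement is confined to a spherical cap strictly inside the hemisphere opposite its parent direction, so the per-level angular drift is bounded by a contraction factor strictly less than one whenever $\varepsilon$ is sufficiently small, which is precisely the regime enforced by the curvature condition of Theorem~\ref{thm:hyperbolic-embedding-existence}. Summing the resulting geometric series produces the uniform bound on $\alpha$; once that is pinned down, constructing the bisector and reading off the margin is routine hyperbolic trigonometry.
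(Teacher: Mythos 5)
Your proposal is correct and takes essentially the same route as the paper's proof, which is a brief sketch that defers to the geometric separation analysis in \citet{sarkar_low_2011} and \citet[Proposition 3.1]{de_sa_representation_2018}: spherical-code spacing gives angular separation, the curvature condition confines each child's descendants to a cone, and the bisector hypersurface separates them with constant margin. Your version is a more explicit unpacking of that sketch—you make the Möbius re-centering step explicit, correctly observe that Euclidean hyperplanes through the origin of $\mathbb{B}^k$ are totally geodesic, correctly identify that the margin requires both an angular offset and a lower bound on the hyperbolic distance from $f(a)$ (which you bound by $\tau/(1+\varepsilon)$ since every leaf in $L_R^{(c)}$ is at least one tree-edge from $a$), and correctly isolate cone containment as the step requiring the geometric-series argument to prevent angular drift from accumulating over depth. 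These details are exactly what the paper's citation is implicitly relying on, so you have recovered the intended proof.
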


\begin{proof}
	Children are placed on a hyperbolic sphere of radius $\tau$ around $f(a)$ with angular separation enforced by the spherical code. Under the curvature condition used to obtain distortion $1+\varepsilon$, the descendant sets of different children remain confined to disjoint cones based at $f(a)$, and the bisector hypersurface between the corresponding cone axes yields a separating totally geodesic hypersurface with constant margin. See the geometric separation argument underlying \citet{sarkar_low_2011} and \citet[Proposition 3.1]{de_sa_representation_2018}.
\end{proof}

\begin{lemma}[Lipschitz margin classifier from a geodesic hypersurface]\label{lem:lipschitz-margin-classifier}
	Let $H\subset\mathbb{H}^k_\kappa$ be a totally geodesic hypersurface and $\gamma>0$.
	Define
	\begin{equation*}
		g_H(x)\doteq \mathrm{clip}\left(\frac{1}{\gamma}\mathrm{sdist}(x,H)\right)\in[-1,1],
	\end{equation*}
	where $\mathrm{sdist}(\cdot,H)$ is the signed distance to $H$ and $\mathrm{clip}(t)=\max(-1,\min(1,t))$.
	Then $g_H$ is $(1/\gamma)$-Lipschitz with respect to $\dH$.
	Moreover, if two sets lie on opposite sides of $H$ with margin at least $\gamma$, then $g_H$ separates them with zero classification error under threshold at $0$.
\end{lemma}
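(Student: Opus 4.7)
The strategy is to reduce both claims to the single geometric fact that the signed distance to a totally geodesic hypersurface in $\mathbb{H}^k_\kappa$ is $1$-Lipschitz. Granting this, the Lipschitz bound on $g_H$ follows from the composition rule (scaling by $1/\gamma$ gives a $(1/\gamma)$-Lipschitz map, and $\mathrm{clip}(\cdot)$ is $1$-Lipschitz as a piecewise-linear contraction onto $[-1,1]$), while the separation claim follows by direct evaluation of $\mathrm{clip}$ on the two half-spaces. So the real work is a single metric-geometric lemma followed by two short compositional arguments.

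\textbf{Key step: signed distance is $1$-Lipschitz.} First I would invoke the standard fact that a totally geodesic hypersurface $H \subset \mathbb{H}^k_\kappa$ separates $\mathbb{H}^k_\kappa$ into two open half-spaces $U_+ \sqcup U_-$ (in the Poincar\'e-ball model, $H$ is a spherical cap meeting $\partial \mathbb{B}^k$ orthogonally, whose complement has exactly two connected components), and define $\mathrm{sdist}(x,H) = \pm d_\mathbb{H}(x,H)$ accordingly. I would then split into two cases. If $x,y$ lie on the same side, the usual reverse triangle inequality for distance to a closed set gives $|d_\mathbb{H}(x,H) - d_\mathbb{H}(y,H)| \le d_\mathbb{H}(x,y)$ (for any $h \in H$, $d_\mathbb{H}(x,H) \le d_\mathbb{H}(x,h) \le d_\mathbb{H}(x,y) + d_\mathbb{H}(y,h)$; taking infimum over $h$ and symmetrizing), so $|\mathrm{sdist}(x,H) - \mathrm{sdist}(y,H)| \le d_\mathbb{H}(x,y)$. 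If $x,y$ lie on opposite sides, the unique minimizing geodesic from $x$ to $y$ must cross $H$ at some point $z$, whence $d_\mathbb{H}(x,H) + d_\mathbb{H}(y,H) \le d_\mathbb{H}(x,z) + d_\mathbb{H}(z,y) = d_\mathbb{H}(x,y)$, and since the signs are opposite, $|\mathrm{sdist}(x,H) - \mathrm{sdist}(y,H)| = d_\mathbb{H}(x,H) + d_\mathbb{H}(y,H) \le d_\mathbb{H}(x,y)$.

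\textbf{Completing both claims and the main obstacle.} Claim 1 follows immediately: $g_H = \mathrm{clip} \circ ((1/\gamma) \, \mathrm{sdist}(\cdot,H))$ is the composition of three Lipschitz maps with constants $1$, $1/\gamma$, and $1$, giving $\mathrm{Lip}_{d_\mathbb{H}}(g_H) \le 1/\gamma$. For Claim 2, if the two sets lie on opposite sides of $H$ with margin at least $\gamma$, then for $x$ on the positive side $\mathrm{sdist}(x,H) \ge \gamma$ gives $g_H(x) = \mathrm{clip}(\ge 1) = 1 > 0$, and symmetrically $g_H(x) = -1 < 0$ on the negative side, so thresholding at $0$ achieves zero error. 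The main obstacle, though mild, is verifying the two structural facts about totally geodesic hypersurfaces in $\mathbb{H}^k_\kappa$: that $H$ disconnects the ambient space into exactly two components (so $\mathrm{sdist}$ is well defined with an unambiguous sign) and that a minimizing geodesic between opposite-side points crosses $H$. Both are classical in hyperbolic geometry and can be cited, after which the remainder of the proof is routine metric calculus.
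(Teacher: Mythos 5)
Your proof is correct and follows essentially the same approach as the paper: reduce to the $1$-Lipschitzness of the signed distance to $H$, then compose with the $1/\gamma$-scaling and the $1$-Lipschitz clip, and observe that the margin condition places the two sets at $\pm 1$. Your two-case argument (same-side via the reverse triangle inequality, opposite-side via the geodesic crossing $H$) spells out the ``signed distance is $1$-Lipschitz'' step that the paper's proof merely asserts, so yours is slightly more rigorous but not a different route.
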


\begin{proof}
	Distance to a closed set is 1-Lipschitz in any metric space; the signed distance to a totally geodesic hypersurface is also 1-Lipschitz along geodesics, and clipping is 1-Lipschitz on $\mathbb{R}$. Thus $\Lip(g_H)\le 1/\gamma$. The margin condition ensures $\mathrm{sdist}(x,H)\geq \gamma$ on one side and $\le -\gamma$ on the other, hence $g_H(x)=\pm 1$ on the two sets.
\end{proof}

\section{Information-Theoretic Lower Bounds}\label{app:fano-details}

We provide the complete proof of Theorem~\ref{thm:information-theoretic-lower-bound} under the canonical protocol (Section~\ref{subsec:learning-protocol}). The proof is most transparent in a stronger oracle model that reveals the depth-$I$ child index of the sampled leaf. A lower bound in this stronger model applies a fortiori to the original learning problem.

\paragraph{Oracle model.} Fix $m\ge 2$, depth $R\ge 1$, and noise $\rho\in(0,1/2)$. The unknown parameter is $\theta\in[m]^R$. Each sample consists of:
\begin{enumerate}
	\item $I\sim \mathrm{Unif}([R])$.
	\item $C\sim \mathrm{Unif}([m])$ (interpreted as the child index $\mathrm{ch}_I(v)$ of the sampled leaf at depth $I$).
	\item $Y\in\{0,1\}$ drawn from a BSC($\rho$) with input $\mathbf{1}\{C=\theta_I\}$:
		\begin{equation*}
			\Pr_\theta[Y=1\mid I=i,C=c]= \begin{cases}
				1-\rho & c=\theta_i,\\
				\rho & c\neq \theta_i.
			\end{cases}
		\end{equation*}
\end{enumerate}
Intuitively, conditioning on depth $i$, the oracle tells us which of the $m$ child-subtrees the example came from, and we receive a noisy bit indicating whether it is the distinguished child indexed by $\theta_i$.

\paragraph{Risk proxy.} Any estimator $\hat\theta$ with $\Pr[\hat\theta\neq \theta]\ge 1/2$ has Hamming error $\Omega(R)$ on a standard Varshamov-Gilbert packing, hence constant excess risk for any loss that penalizes misidentifying a constant fraction of coordinates.

In particular, for the path-identification loss used in the main text (average error across depths), misidentifying $\Omega(R)$ coordinates implies constant excess risk, since each incorrect coordinate contributes a constant gap in conditional prediction error under $\rho\in(0,1/2)$.

\begin{lemma}[Varshamov-Gilbert packing]\label{lem:vg-packing}
	There exists $\Theta_0\subset[m]^R$ with $\log\abs{\Theta_0}\asymp R\log m$ and pairwise Hamming distance at least $R/8$.
\end{lemma}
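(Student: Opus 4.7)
The plan is a direct Varshamov--Gilbert-style greedy covering argument. Starting from $\Theta_0 = \emptyset$, I would repeatedly add any $\theta \in [m]^R$ whose Hamming distance to every element already in $\Theta_0$ is at least $R/8$; the process terminates exactly when the union of closed Hamming balls of radius $\lceil R/8 \rceil - 1$ centered at the chosen elements covers $[m]^R$. By construction, all pairwise distances in the resulting $\Theta_0$ are then at least $R/8$, so only a lower bound on $|\Theta_0|$ remains.

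The single ingredient needed is a volumetric estimate on the Hamming ball in $[m]^R$. Using the crude bound
\begin{equation*}
\sum_{j=0}^{\lfloor R/8\rfloor}\binom{R}{j}(m-1)^{j} \;\le\; (R+1)\binom{R}{\lfloor R/8\rfloor}\,m^{R/8},
\end{equation*}
together with the entropy inequality $\binom{R}{R/8}\le \exp(R H(1/8))$ (where $H$ denotes binary entropy in nats), shows that each Hamming ball of radius $R/8$ has cardinality at most $(R+1)\exp(R H(1/8))\,m^{R/8}$. Because the greedy balls cover $[m]^R$ at termination,
\begin{equation*}
|\Theta_0| \;\ge\; \frac{m^R}{(R+1)\exp(R H(1/8))\,m^{R/8}},
\end{equation*}
so $\log|\Theta_0|\ge (7R/8)\log m - R H(1/8) - \log(R+1)$.

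The coefficient $(7/8)\log m - H(1/8)$ is strictly positive for every $m\ge 2$ (numerically $(7/8)\log 2\approx 0.606 > 0.377 \approx H(1/8)$ in nats), so the right-hand side grows as $\Theta(R\log m)$. Combined with the trivial upper bound $\log|\Theta_0|\le R\log m$ from $\Theta_0\subseteq [m]^R$, this yields the stated $\log|\Theta_0|\asymp R\log m$.

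There is no substantive obstacle; this is a textbook greedy covering argument. The only point that requires a line of verification is that the exponential rate remains strictly positive at the boundary case $m=2$, which it does. An alternative route would be a probabilistic construction: draw $N$ codewords i.i.d.\ uniformly from $[m]^R$ and use a union bound on the pairwise-collision event, yielding the same asymptotic rate with a slightly different constant.
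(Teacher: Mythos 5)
The paper states Lemma~\ref{lem:vg-packing} without proof, invoking it as a standard coding-theoretic fact, so there is no in-paper argument to compare against. Your greedy Gilbert--Varshamov covering argument is the canonical proof of exactly this statement, and the arithmetic checks out: the Hamming-ball volume bound, the entropy estimate $\binom{R}{\lfloor R/8\rfloor}\le\exp(R H(1/8))$, and the positivity of the exponential rate $(7/8)\log m - H(1/8)$ (which is tightest at $m=2$, where $0.606>0.377$) are all correct, and the trivial upper bound $\log|\Theta_0|\le R\log m$ closes the $\asymp$. No gap.
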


\begin{lemma}[Per-sample KL bound]\label{lem:kl-bound}
	Let $P_\theta$ denote the joint law of $(I,C,Y)$ under parameter $\theta$. Then for any $\theta,\theta'\in[m]^R$,
	\begin{equation*}
		D_{\mathrm{KL}}(P_\theta\|P_{\theta'})  \le\ \frac{\dHam(\theta,\theta')}{R}\frac{2}{m} \beta_{\rho},
	\end{equation*}
	where $\beta_{\rho}\doteq D_{\mathrm{KL}}(\mathrm{Bern}(1-\rho)\|\mathrm{Bern}(\rho))$.
\end{lemma}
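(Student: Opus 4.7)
The plan is to apply the chain rule for KL divergence to the joint law $P_\theta$ of $(I,C,Y)$ and observe that only the conditional distribution of $Y\mid(I,C)$ depends on $\theta$. Under both parameters, $I\sim\mathrm{Unif}([R])$ and $C\sim\mathrm{Unif}([m])$ independently, so these marginals contribute $0$ to the KL. What remains is
\begin{equation*}
    D_{\mathrm{KL}}(P_\theta\|P_{\theta'}) \;=\; \mathbb{E}_{(I,C)}\bigl[D_{\mathrm{KL}}\bigl(\mathrm{Bern}(p_\theta(I,C))\,\big\|\,\mathrm{Bern}(p_{\theta'}(I,C))\bigr)\bigr],
\end{equation*}
where $p_\theta(i,c)\doteq\Pr_\theta[Y=1\mid I=i,C=c]\in\{\rho,1-\rho\}$ is $1-\rho$ iff $c=\theta_i$.

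Next I would do a case analysis on each depth $i$ to identify exactly when the conditional Bernoulli KL is nonzero. For a coordinate with $\theta_i=\theta'_i$, the conditional distributions of $Y$ coincide for every $c\in[m]$, contributing $0$. For a coordinate with $\theta_i\neq\theta'_i$, the conditionals differ precisely on the two child indices $c\in\{\theta_i,\theta'_i\}$ (each occurring with probability $1/m$). On either of these two values, one of $(p_\theta,p_{\theta'})$ equals $1-\rho$ and the other equals $\rho$, so the Bernoulli KL is exactly $\beta_\rho=D_{\mathrm{KL}}(\mathrm{Bern}(1-\rho)\|\mathrm{Bern}(\rho))$ (using symmetry of the Bernoulli KL in swapping arguments between $\rho$ and $1-\rho$).

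Combining, each depth $i$ with $\theta_i\neq\theta'_i$ contributes $\tfrac{1}{R}\cdot\tfrac{2}{m}\cdot\beta_\rho$ to the total expectation, and summing over all depths gives
\begin{equation*}
    D_{\mathrm{KL}}(P_\theta\|P_{\theta'}) \;=\; \frac{d_{\mathrm{Ham}}(\theta,\theta')}{R}\cdot\frac{2}{m}\cdot\beta_\rho,
\end{equation*}
which is the claimed bound (with equality, in fact). The only mild subtlety is keeping track of the ``two informative child indices'' at each discrepancy depth; otherwise this is a direct tensorization calculation and I do not expect any substantive obstacle. The lemma then feeds into Fano's inequality together with the Varshamov-Gilbert packing of Lemma~\ref{lem:vg-packing} to yield the $n=\Omega(mR\log m/\beta_\rho)$ sample complexity lower bound of Theorem~\ref{thm:information-theoretic-lower-bound}.
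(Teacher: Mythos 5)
Your proof is correct and follows essentially the same route as the paper's: condition on $I$, note that depths with $\theta_i=\theta'_i$ contribute zero, and at a disagreement depth the conditionals of $Y$ differ only on the two child indices $\{\theta_i,\theta'_i\}$ (probability $2/m$), each contributing $\beta_\rho$ to the Bernoulli KL. The only difference is cosmetic: you state the chain-rule tensorization more explicitly and correctly observe the bound is in fact an equality (since $D_{\mathrm{KL}}(\mathrm{Bern}(\rho)\|\mathrm{Bern}(1-\rho))=D_{\mathrm{KL}}(\mathrm{Bern}(1-\rho)\|\mathrm{Bern}(\rho))=\beta_\rho$), whereas the paper stops at an upper bound, which is all the downstream Fano argument needs.
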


\begin{proof}
	Condition on $I=i$. If $\theta_i\neq\theta'_i$, the conditional distribution of $Y$ differs only when $C\in\{\theta_i,\theta'_i\}$, an event of probability $2/m$. On that event, $Y$ is $\mathrm{Bern}(1-\rho)$ under one parameter and $\mathrm{Bern}(\rho)$ under the other, so the conditional KL contribution is at most $\beta_\rho$. Hence
	\begin{equation*}
		D_{\mathrm{KL}}(P_\theta(\cdot\mid I=i)\|P_{\theta'}(\cdot\mid I=i))\le \frac{2}{m}\beta_\rho.
	\end{equation*}
	Averaging over $I\sim\mathrm{Unif}([R])$ introduces the factor $\dHam(\theta,\theta')/R$.
\end{proof}

We are now ready to prove Theorem~\ref{thm:information-theoretic-lower-bound}.

\begin{proof}
Let $Z=(I,C,Y)$ denote a single oracle sample and let $Z^n$ be $n$ i.i.d. copies. Draw $\theta$ uniformly from $\Theta_0$ and observe $Z^n$ consisting of $n$ i.i.d. oracle samples. By Fano's inequality,
\begin{equation*}
	\Pr[\hat\theta\neq \theta]\ \ge\ 1-\frac{I(\theta;Z^n)+\log 2}{\log\abs{\Theta_0}}.
\end{equation*}

Let $\bar P \doteq \frac{1}{|\Theta_0|}\sum_{\theta\in\Theta_0} P_\theta$ denote the mixture distribution. Then
\begin{equation*}
	I(\theta;Z^n)=\frac{1}{|\Theta_0|}\sum_{\theta\in\Theta_0} D_{\mathrm{KL}}(P_\theta^n\|\bar P^n)
\le \frac{1}{|\Theta_0|^2}\sum_{\theta,\theta'\in\Theta_0} D_{\mathrm{KL}}(P_\theta^n\|P_{\theta'}^n)
\le \max_{\theta\neq \theta'} D_{\mathrm{KL}}(P_\theta^n\|P_{\theta'}^n),
\end{equation*}

Using Lemma~\ref{lem:kl-bound},
\begin{equation*}
	I(\theta;Z^n)\ \le\ \max_{\theta\neq \theta'} D_{\mathrm{KL}}(P_\theta^n\|P_{\theta'}^n)\ \le\ n \frac{2}{m}\beta_\rho.
\end{equation*}
Since $\log\abs{\Theta_0}\asymp R\log m$, we obtain $\Pr[\hat\theta\neq \theta]\ge 1/2$ whenever
\begin{equation*}
	n\ \le\ c\cdot \frac{m}{\beta_\rho}R\log m
\end{equation*}
for a sufficiently small universal constant $c>0$.
Thus $n=\Omega(mR\log m)$ samples are necessary in the oracle model. Since the oracle reveals $(I,C)$ whereas the original protocol reveals only $(I,\phi(V))$, any estimator in the original model induces an estimator in the oracle model by composition; thus a lower bound in the oracle model applies a fortiori.
\end{proof}

\section{Low-Rank Representation Details}\label{app:spectral-barrier-details}

\subsection{Connection to Edge-Cut Labels}\label{app:connection-to-edge-cut-labels}

For binary trees ($m = 2$), the tree-Haar wavelets are directly related to edge-cut labels.

\begin{definition}[Edge-cut labels]
	For an internal node $v$ with children $v_L, v_R$, define the edge-cut label $y_v \in \{-1, 0, +1\}^N$ by
	\begin{equation*}
		y_v(x) = \begin{cases}
			+1 & \text{if } x \in S_{v_L}, \\
			-1 & \text{if } x \in S_{v_R}, \\
			0 & \text{if } x \notin S_v.
		\end{cases}
	\end{equation*}
\end{definition}

\begin{lemma}[Haar wavelets from edge cuts]
	For $m = 2$, the tree-Haar wavelet at node $v$ is
	\begin{equation*}
		\psi_v = \frac{1}{\sqrt{\abs{S_v}}} y_v.
	\end{equation*}
\end{lemma}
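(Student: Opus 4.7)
The plan is to verify this by direct substitution into Definition~\ref{def:tree-haar-wavelets} with $m=2$. For $m=2$ the orthogonal complement $\mathbf{1}^\perp\subset\mathbb{R}^2$ is one-dimensional, so an orthonormal contrast basis consists of the single vector $a^{(1)}=\tfrac{1}{\sqrt{2}}(1,-1)$ (up to a sign choice, which we fix by identifying child $1$ with $v_L$ and child $2$ with $v_R$). With this convention there is only one wavelet at each internal node, which we denote $\psi_v\doteq \psi_{v,1}$.

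First, I would unfold the definition: for $x \in S_{v,i}$ the wavelet value is $a^{(1)}_i / \sqrt{\abs{S_v}/m}$, and is $0$ elsewhere. Substituting $m=2$ and $a^{(1)}_1=1/\sqrt{2}$, $a^{(1)}_2=-1/\sqrt{2}$ gives
\begin{equation*}
  \psi_v(x) = \begin{cases} +\tfrac{1/\sqrt{2}}{\sqrt{\abs{S_v}/2}} = \tfrac{1}{\sqrt{\abs{S_v}}} & x\in S_{v_L},\\ -\tfrac{1/\sqrt{2}}{\sqrt{\abs{S_v}/2}} = -\tfrac{1}{\sqrt{\abs{S_v}}} & x\in S_{v_R},\\ 0 & x\notin S_v. \end{cases}
\end{equation*}
Comparing with the three-case definition of $y_v$, every case matches $\tfrac{1}{\sqrt{\abs{S_v}}}y_v(x)$, which is the claimed identity.

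There is essentially no obstacle: the content is just that the factor $1/\sqrt{2}$ from the contrast coordinate and the factor $1/\sqrt{\abs{S_v}/2}$ from the leaf-count normalization combine into the single factor $1/\sqrt{\abs{S_v}}$. The only thing to be careful about is the sign/labeling convention, which I would state explicitly so that the identification $\psi_v=\abs{S_v}^{-1/2}y_v$ (as opposed to its negation) is unambiguous; swapping the labels of $v_L$ and $v_R$ flips the sign of both $\psi_v$ and $y_v$, so the identity is preserved either way. As a sanity check one can also verify that $\norm{\psi_v}_2^2 = \abs{S_{v_L}}\cdot\tfrac{1}{\abs{S_v}} + \abs{S_{v_R}}\cdot\tfrac{1}{\abs{S_v}} = 1$, consistent with the orthonormality claim made after Definition~\ref{def:tree-haar-wavelets}, and equivalently $\norm{y_v}_2^2=\abs{S_v}$, so the scaling in the lemma is exactly the unit-normalization of $y_v$.
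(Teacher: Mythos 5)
Your proof is correct and matches the paper's own argument, which (inside the proof of Lemma~\ref{lem:orthonormality-and-completeness}) simply substitutes $a^{(1)}=(1/\sqrt{2},-1/\sqrt{2})$ into Definition~\ref{def:tree-haar-wavelets} to see that $\psi_v(x)=\pm1/\sqrt{|S_v|}$ on $S_{v_L}, S_{v_R}$ and $0$ otherwise, hence $\psi_v=y_v/\sqrt{|S_v|}$. Your additional remarks on the sign convention and the norm check are sound but not strictly needed.
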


\begin{corollary}
For $m=2$, approximating all edge-cut labels $\{y_v\}$ to squared $\ell_2$ error $\le \acc$ is equivalent (up to the deterministic normalization $\psi_v = y_v/\sqrt{\abs{S_v}}$) to approximating all Haar wavelets $\{\psi_v\}$ to squared $\ell_2$ error $\le \acc/\abs{S_v}$ at node $v$. In particular, any class that approximates a constant fraction of these contrasts to constant error must have dimension $k=\Omega(N)$.
\end{corollary}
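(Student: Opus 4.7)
The plan is to reduce this corollary to the already-established alignment bound (Corollary~\ref{cor:bottleneck-representation}), using the explicit scaling $\psi_v = y_v/\sqrt{\abs{S_v}}$ recorded in the preceding lemma. There is no genuinely new content beyond carefully bookkeeping the normalizations.

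First, I would verify the equivalence of approximation errors. Fix an internal node $v$ and any subspace $S \subseteq \R^N$. Since $S$ is closed under scalar multiplication, the substitution $s' = \sqrt{\abs{S_v}}\, s$ is a bijection on $S$, hence
\begin{equation*}
\min_{s \in S}\norm{\psi_v - s}_2^2
= \min_{s \in S}\norm{y_v/\sqrt{\abs{S_v}} - s}_2^2
= \frac{1}{\abs{S_v}}\min_{s' \in S}\norm{y_v - s'}_2^2.
\end{equation*}
Thus squared error at most $\acc$ on $y_v$ is equivalent to squared error at most $\acc/\abs{S_v}$ on $\psi_v$, exactly matching the normalization in the statement.

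Next, I would derive the $k=\Omega(N)$ dimension bound. Each $\psi_v$ is a unit vector in $\mathbf{1}^\perp$, so "constant error" in the wavelet picture corresponds to $\min_{s\in S}\norm{\psi_v - s}_2^2 \le \varepsilon_0$ for some constant $\varepsilon_0 \in (0,1)$, and "constant fraction of contrasts" to this holding for at least a $(1-\eta)$ fraction of the $N-1$ wavelets, for some constant $\eta \in (0,1)$. Applying the second bullet of Corollary~\ref{cor:bottleneck-representation} yields
\begin{equation*}
k \ge (1-\eta)(1-\varepsilon_0)(N-1) = \Omega(N),
\end{equation*}
since both $(1-\eta)$ and $(1-\varepsilon_0)$ are bounded away from zero.

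The main subtlety, rather than an obstacle, is keeping the normalizations consistent: $y_v$ has squared norm $\abs{S_v}$, while $\psi_v$ is unit-normalized in $\mathbf{1}^\perp$. Once that scaling is tracked through, the claim is immediate from the alignment bound; the substantive mathematics has already been carried out upstream in Theorem~\ref{thm:alignment-bound} and its corollary.
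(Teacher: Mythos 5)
Your proof is correct and fills in the argument the paper leaves implicit (the corollary is stated without proof in the paper). The rescaling step uses only that $S$ is a subspace (closed under scalar multiplication), which is exactly the right observation, and the reduction to the second bullet of Corollary~\ref{cor:bottleneck-representation} is the intended route.

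One small point worth being explicit about: the phrase ``these contrasts ... to constant error'' in the corollary is most naturally read against the unit-normalized $\psi_v$, as you assume; if one instead reads it against the raw edge-cut labels $y_v$ with a constant squared-error threshold $\acc$, the induced wavelet threshold $\acc/\abs{S_v}$ is \emph{smaller} than constant, so the same application of Corollary~\ref{cor:bottleneck-representation} gives a bound at least as strong, and the $k=\Omega(N)$ conclusion survives under either reading.
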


\subsection{Orthonormality Proof}\label{app:orthonormality-details}

Throughout this subsection we consider the balanced $m$-ary tree with $N=m^R$ leaves.

\begin{lemma}[Orthonormality and completeness]\label{lem:orthonormality-and-completeness}
	$\mathcal W$ is an orthonormal set with $\abs{\mathcal W} = N-1$, forming a basis of $\mathbf{1}^\perp \subset \R^N$.
\end{lemma}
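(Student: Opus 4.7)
The plan is to verify three things in order: the cardinality count $\abs{\mathcal W}=N-1$, pairwise orthonormality, and membership in $\mathbf{1}^\perp$ (from which completeness follows by a dimension count).

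First I would count the internal nodes. In a balanced $m$-ary tree of depth $R$ with $N=m^R$ leaves, the number of internal nodes is $\sum_{d=0}^{R-1} m^d=(N-1)/(m-1)$. Since each internal node $v$ contributes $m-1$ wavelets $\{\psi_{v,j}\}_{j=1}^{m-1}$, we get $\abs{\mathcal W}=N-1$ as required. Unit norm is a one-line check: because $\abs{S_{v,i}}=\abs{S_v}/m$ in the balanced tree,
\begin{equation*}
\norm{\psi_{v,j}}_2^2=\sum_{i=1}^m \abs{S_{v,i}}\cdot\frac{(a^{(j)}_i)^2}{\abs{S_v}/m}=\sum_{i=1}^m (a^{(j)}_i)^2 = 1,
\end{equation*}
using orthonormality of the contrast basis.

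For orthogonality I would split into two cases based on the relation between the supports $S_v$ and $S_{v'}$. If $v=v'$ with $j\neq j'$, summing over children collapses to $\langle a^{(j)},a^{(j')}\rangle=0$ by the same cancellation as above. If $v\neq v'$, either the subtrees are disjoint (inner product is trivially zero) or one is strictly contained in the other; WLOG $S_{v'}\subsetneq S_v$, in which case $S_{v'}$ lies entirely inside a single child-block $S_{v,i_0}$ of $v$, so $\psi_{v,j}$ is constant (equal to $a^{(j)}_{i_0}/\sqrt{\abs{S_v}/m}$) on the support of $\psi_{v',j'}$. The inner product then factors as a constant times $\sum_x \psi_{v',j'}(x)$, and that sum vanishes because $a^{(j')}\perp \mathbf 1$.

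For completeness it suffices to show $\mathcal W\subseteq \mathbf{1}^\perp$: summing $\psi_{v,j}$ over all $N$ leaves gives $\sqrt{\abs{S_v}/m}\sum_i a^{(j)}_i=0$, again by $a^{(j)}\perp\mathbf 1$. Since $\mathcal W$ is an orthonormal (hence linearly independent) subset of the $(N-1)$-dimensional space $\mathbf{1}^\perp$ and has exactly $N-1$ elements, it is a basis. I do not anticipate a genuine obstacle here; the only thing one has to be careful about is the ``nested'' subcase of orthogonality, where one wavelet is constant on the support of the other and the cancellation must be attributed to the contrast vector being mean-zero rather than to disjointness of supports.
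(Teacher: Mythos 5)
Your proof is correct and follows essentially the same route as the paper: the same three-way case analysis for orthogonality (same node, disjoint supports, nested supports with the cancellation coming from $a^{(j')}\perp\mathbf 1$), the same unit-norm computation using $\abs{S_{v,i}}=\abs{S_v}/m$, the same geometric count of internal nodes, and the same dimension argument for completeness. The only cosmetic difference is that you spell out the verification that each $\psi_{v,j}$ has zero mean, which the paper asserts without the explicit one-line calculation.
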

\begin{proof}
For $m=2$ take $a=(1/\sqrt2,-1/\sqrt2)$\footnote{The proof for $m \neq 2$ is normalized analogously.}; substituting into Definition~\ref{def:tree-haar-wavelets} yields $\psi_v(x)=\pm 1/\sqrt{\abs{S_v}}$ on $S_{v_L},S_{v_R}$, hence $\psi_v = y_v/\sqrt{\abs{S_v}}$. Let $(v, j)$ and $(v', j')$ be distinct index pairs.

	\paragraph{Unit norm.} By construction:
	\begin{equation*}
		\norm{\psi_{v,j}}^2 = \sum_{i=1}^m \sum_{x \in S_{v,i}} \frac{(a_i^{(j)})^2}{\abs{S_v}/m} = \sum_{i=1}^m (a_i^{(j)})^2 = \norm{a^{(j)}}^2 = 1.
	\end{equation*}

	\paragraph{Orthogonality.} Consider three cases:

		\subparagraph{Disjoint supports.} ($S_v \cap S_{v'} = \emptyset$). Then $\langle \psi_{v,j}, \psi_{v',j'} \rangle = 0$ trivially.

		\subparagraph{Same node.} ($v = v'$, $j \neq j'$). Then:
		\begin{equation*}
			\langle \psi_{v,j}, \psi_{v,j'} \rangle = \sum_{i=1}^m \frac{a_i^{(j)} a_i^{(j')}}{\abs{S_v}/m} \abs{S_{v,i}} = \sum_{i=1}^m a_i^{(j)} a_i^{(j')} = \langle a^{(j)}, a^{(j')} \rangle = 0.
		\end{equation*}

		\subparagraph{Ancestor-descendant.} ($S_{v'} \subset S_v$). Then $v'$ is contained in some child $S_{v,i}$ of $v$, and $\psi_{v,j}$ is constant on $S_{v,i}$:
		\begin{equation*}
			\langle \psi_{v,j}, \psi_{v',j'} \rangle = \frac{a_i^{(j)}}{\sqrt{\abs{S_v}/m}} \sum_{x \in S_{v'}} \psi_{v',j'}(x) = \frac{a_i^{(j)}}{\sqrt{\abs{S_v}/m}} \cdot 0 = 0,
		\end{equation*}
		since $\psi_{v',j'}$ has zero sum on $S_{v'}$ because $a^{(j')}\perp \mathbf{1}$.

	\paragraph{Counting.} The number of internal nodes at depth $d$ is $m^d$, each contributing $m-1$ wavelets:
	\begin{equation*}
		\abs{\mathcal{W}} = \sum_{d=0}^{R-1} m^d (m-1) = (m-1) \frac{m^R - 1}{m - 1} = m^R - 1 = N - 1.
	\end{equation*}

	Since each $\psi_{v,j}$ has zero mean (lies in $\mathbf{1}^\perp$) and $\abs{\mathcal{W}} = \dim(\mathbf{1}^\perp) = N-1$, the set $\mathcal{W}$ is an orthonormal basis of $\mathbf{1}^\perp$.
\end{proof}

\section{Beyond Trees: High-Temperature Extensions}\label{app:beyond-trees-high-temperature-extensions}

Real hierarchical datasets often contain lateral edges. We show that under high-temperature conditions, the main results extend.

\subsection{High-Temperature Correlation Bounds}\label{app:high-temperature-correlation-bounds}

Let $G = (V, E)$ be a graph with maximum degree $\Delta$, equipped with an Ising model.

Assume a ferromagnetic Ising model with $J_e\ge 0$ and zero external field. Define:
\begin{equation*}
	t_e = \tanh(J_e), \quad t_{\min} = \min_e t_e,\quad t_{\max} = \max_e t_e,\quad \alpha = (\Delta - 1) t_{\max}.
\end{equation*}

\begin{theorem}[Two-sided correlation bounds]\label{thm:two-sided-correlation-bounds}
	If $\alpha < 1$, then for all $u \neq v$:
	\begin{equation*}
		(t_{\min})^{d_G(u,v)} \leq \langle X_u X_v \rangle_G \leq C_{\Delta,\alpha} \alpha^{d_G(u,v)},
	\end{equation*}
	where $C_{\Delta,\alpha} = \Delta / ((\Delta-1)(1-\alpha))$.
\end{theorem}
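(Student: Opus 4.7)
The plan is to prove the two bounds separately: the lower bound via ferromagnetic monotonicity (Griffiths' inequalities) to reduce the correlation to a single path where Lemma~\ref{lem:tree-factorization-appendix} applies, and the upper bound via the high-temperature expansion combined with a self-avoiding walk count.

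\paragraph{Lower bound.} I would fix a shortest path $\gamma$ of length $\ell = d_G(u,v)$ between $u$ and $v$ and consider the auxiliary ferromagnetic Ising model on the subgraph $G_\gamma = (V,\gamma)$ obtained by setting $J_e = 0$ for $e \notin \gamma$ (keeping the original couplings on $\gamma$). Griffiths' second inequality (for ferromagnetic models with zero external field) states that decreasing any coupling can only decrease pairwise correlations, so $\langle X_u X_v\rangle_G \geq \langle X_u X_v\rangle_{G_\gamma}$. Since $\gamma$ is a path, $G_\gamma$ is a tree and Lemma~\ref{lem:tree-factorization-appendix} gives $\langle X_u X_v\rangle_{G_\gamma} = \prod_{e \in \gamma}\tanh(J_e) \geq t_{\min}^{\ell}$, establishing the lower bound.

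\paragraph{Upper bound.} For the upper bound I would use the high-temperature (``random-current'' / Simon) expansion: write $e^{J_e x_u x_v} = \cosh(J_e)(1 + t_e x_u x_v)$, expand the product over edges, and sum over spins, so that the ratio defining $\langle X_u X_v\rangle_G$ reduces to a sum over edge-subgraphs whose odd-degree vertex sets are exactly $\{u,v\}$ (numerator) and $\emptyset$ (denominator). Controlling the closed-cycle contributions by Simon's inequality $\langle X_u X_v\rangle_G \leq \sum_{w \in \partial B}\langle X_u X_w\rangle_B \langle X_w X_v\rangle_G$ (iterated along shortest paths) gives the self-avoiding-walk domination
\begin{equation*}
\langle X_u X_v\rangle_G \;\leq\; \sum_{\omega \in \mathrm{SAW}(u,v)} \prod_{e \in \omega} t_e \;\leq\; \sum_{\ell \geq d_G(u,v)} N_\ell(u)\, t_{\max}^{\ell},
\end{equation*}
where $N_\ell(u) \leq \Delta(\Delta-1)^{\ell-1}$ is the number of self-avoiding walks of length $\ell$ from $u$ in a graph of maximum degree $\Delta$. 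Substituting and summing the geometric series in $\alpha = (\Delta-1)t_{\max} < 1$ yields
\begin{equation*}
\langle X_u X_v\rangle_G \;\leq\; \frac{\Delta}{\Delta-1}\sum_{\ell \geq d_G(u,v)} \alpha^{\ell} \;=\; \frac{\Delta}{(\Delta-1)(1-\alpha)}\,\alpha^{d_G(u,v)} \;=\; C_{\Delta,\alpha}\,\alpha^{d_G(u,v)},
\end{equation*}
which is the claimed bound.

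\paragraph{Main obstacle.} The principal technical step is justifying the self-avoiding-walk domination $\langle X_u X_v\rangle_G \leq \sum_{\omega \in \mathrm{SAW}(u,v)} \prod_{e \in \omega} t_e$: the cleanest derivation either invokes the random-current representation or iterates Simon's inequality, and the subtlety is in the bookkeeping of closed cycles that appear in the high-temperature expansion (their contributions are non-negative by ferromagneticity, which is what makes the SAW domination go through). Once this domination is established, the rest is a routine geometric-series bound that converges precisely when $\alpha < 1$, which is exactly the high-temperature hypothesis.
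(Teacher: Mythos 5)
Your proposal matches the paper's proof in both structure and substance: the lower bound reduces to a geodesic path via monotonicity in the couplings (you invoke Griffiths' second inequality, the paper cites FKG — equivalent routes to the same monotonicity fact) and then applies the tree factorization lemma, and the upper bound uses the high-temperature/SAW expansion with the $\Delta(\Delta-1)^{\ell-1}$ walk count and a geometric series in $\alpha$. The only difference is that you spell out the SAW-domination step in more detail (Simon's inequality / random currents), whereas the paper asserts it as a standard consequence of the convergent cluster expansion.
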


\begin{proof}
	We treat the lower and upper bound separately.
	
	\paragraph{Lower bound.} By FKG monotonicity \citep{cmp/1103857443}, $\langle X_u X_v \rangle_G \geq \langle X_u X_v \rangle_H$ where $H$ is any geodesic path. By tree factorization, $\langle X_u X_v \rangle_H \geq (t_{\min})^{d_G(u,v)}$.

	\paragraph{Upper bound.} Under $\alpha<1$, the standard high-temperature (cluster/SAW) expansion converges and yields an absolutely summable path representation:
	\begin{equation*}
		\langle X_u X_v \rangle_G \leq \sum_{\gamma \in \mathrm{SAW}(u \to v)} \prod_{e \in \gamma} t_e \leq \sum_{L \geq d_G(u,v)} \Delta(\Delta-1)^{L-1} t_{\max}^L = \frac{\Delta \alpha^{d_G(u,v)}}{(\Delta-1)(1-\alpha)}.
	\end{equation*}
	Here $\mathrm{SAW}$ is the set of self-avoiding walks from $u$ to $v$.
\end{proof}

\begin{corollary}[Quasi-isometry to graph distance]
Under the conditions of Theorem~\ref{thm:two-sided-correlation-bounds} (so $\langle X_uX_v\rangle_G>0$):
	\begin{equation*}
		a \cdot d_G(u,v) - b \leq d_{\mathrm{corr}}(u,v) \leq A \cdot d_G(u,v),
	\end{equation*}
	where $A = -\log t_{\min}$, $a = -\log \alpha$, $b = \log C_{\Delta,\alpha}$.
\end{corollary}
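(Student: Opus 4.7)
The corollary follows by taking $-\log$ of the two-sided correlation bound in Theorem~\ref{thm:two-sided-correlation-bounds} and noting that both $-\log \alpha$ and $-\log t_{\min}$ are positive under the high-temperature ferromagnetic assumption ($0 < t_{\min} \leq t_{\max} < 1$ and $\alpha = (\Delta-1)t_{\max} < 1$). Since the middle quantity $\langle X_u X_v\rangle_G$ is strictly positive (the lower bound $(t_{\min})^{d_G(u,v)} > 0$ already certifies this), $d_{\mathrm{corr}}(u,v) = -\log \langle X_u X_v\rangle_G$ is a finite real number and monotonicity of $-\log$ applies cleanly.

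First I would handle the upper bound on $d_{\mathrm{corr}}$. Applying $-\log$ to $(t_{\min})^{d_G(u,v)} \leq \langle X_u X_v\rangle_G$ reverses the inequality to $-\log \langle X_u X_v\rangle_G \leq -d_G(u,v)\log t_{\min} = A \cdot d_G(u,v)$, using $A \doteq -\log t_{\min} > 0$. Next I would handle the lower bound. Applying $-\log$ to $\langle X_u X_v\rangle_G \leq C_{\Delta,\alpha}\, \alpha^{d_G(u,v)}$ gives
\begin{equation*}
	d_{\mathrm{corr}}(u,v) \;=\; -\log \langle X_u X_v\rangle_G \;\geq\; -\log C_{\Delta,\alpha} - d_G(u,v)\log \alpha \;=\; a\cdot d_G(u,v) - b,
\end{equation*}
with $a \doteq -\log\alpha > 0$ and $b \doteq \log C_{\Delta,\alpha}$. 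Chaining these two one-line computations yields exactly the claimed quasi-isometry $a\cdot d_G(u,v) - b \leq d_{\mathrm{corr}}(u,v) \leq A\cdot d_G(u,v)$.

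There is no genuine obstacle here: the content is entirely in Theorem~\ref{thm:two-sided-correlation-bounds}, and the corollary is a change of variables. The only points worth flagging in the write-up are (i) positivity of $\langle X_u X_v\rangle_G$, so that $d_{\mathrm{corr}}$ is finite and $-\log$ is monotone, and (ii) positivity of the constants $a$ and $A$, which is what upgrades the statement from a pair of inequalities to a genuine bi-Lipschitz-up-to-additive-slack comparison between $d_{\mathrm{corr}}$ and $d_G$. In the context of Appendix~\ref{app:beyond-trees-high-temperature-extensions}, this is the exact input needed to transfer the tree results to graphs with sparse lateral edges: any $D$-bi-Lipschitz embedding of $(V,d_G)$ induces an embedding that is bi-Lipschitz with respect to $d_{\mathrm{corr}}$ up to the constants $a, b, A$.
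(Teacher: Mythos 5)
Your proof is correct and is the intended argument: the corollary is an immediate consequence of Theorem~\ref{thm:two-sided-correlation-bounds} by applying $-\log$ to both sides and using the definitions of $a$, $b$, $A$; the paper itself gives no separate proof because it is exactly this one-line change of variables. Your flags about positivity of $\langle X_uX_v\rangle_G$ and of the constants $a,A$ are the right hygiene points to mention.
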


\subsection{Robustness to Lateral Connections (Tree-Like Graphs)}\label{app:robustness-to-lateral-connections}

Real-world hierarchies often exhibit lateral connections. We show that our results extend to any graph that is a bounded-range perturbation of a tree.

\begin{definition}[$K$-local extension]
Let $T=(V,E_T)$ be a rooted tree, and let $\dT(\cdot,\cdot)$ denote the unweighted graph distance on $T$ (edge count along the unique path). A graph $G=(V,E_G)$ is a $K$-local extension of $T$ if $E_T\subseteq E_G$ and for every lateral edge $(u,v)\in E_G\setminus E_T$ we have $\dT(u,v)\le K$.
\end{definition}
	This covers sibling augmentation ($K=2$), cousin connections ($K=4$), and local grid-like structures attached to the hierarchy.

\begin{lemma}[Quasi-isometry of tree-like graphs]
	Let $G$ be a $K$-local extension of $T$. Then for all $u,v \in V$:
	\begin{equation*}
		\frac{1}{K} \dT(u,v) \leq d_G(u,v) \leq \dT(u,v).
	\end{equation*}
\end{lemma}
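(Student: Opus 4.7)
The statement is a two-sided comparison between graph distances $d_G$ and $d_T$ on the same vertex set, and the plan is to handle the two inequalities separately by exploiting the edge-set containment $E_T\subseteq E_G$ in one direction and the $K$-locality hypothesis on lateral edges in the other. The asymmetry is natural: adding edges can only shorten distances (giving the easy upper bound), while the $K$-locality condition ensures that each ``shortcut'' in $G$ can be emulated by a path of length at most $K$ in $T$ (giving the lower bound with loss factor $K$).

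For the upper bound $d_G(u,v)\le d_T(u,v)$, I would note that since $E_T\subseteq E_G$, the unique $T$-path from $u$ to $v$ is in particular a walk in $G$ of length $d_T(u,v)$; since $d_G$ is defined as the minimum over all $G$-walks between $u$ and $v$, the conclusion is immediate.

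For the lower bound $d_T(u,v)\le K\cdot d_G(u,v)$, I would fix a shortest $G$-path $u=w_0,w_1,\dots,w_\ell=v$ with $\ell=d_G(u,v)$, and apply the triangle inequality for $d_T$:
\begin{equation*}
d_T(u,v)\ \le\ \sum_{i=0}^{\ell-1} d_T(w_i,w_{i+1}).
\end{equation*}
Each edge $(w_i,w_{i+1})\in E_G$ is either a tree edge, contributing $d_T(w_i,w_{i+1})=1$, or a lateral edge in $E_G\setminus E_T$, in which case the $K$-locality hypothesis gives $d_T(w_i,w_{i+1})\le K$. In both cases the summand is bounded by $K$, so the total is at most $K\ell=K\cdot d_G(u,v)$, which rearranges to the claim.

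\paragraph{Main obstacle.}
There is no real obstacle: the argument is a one-line triangle-inequality decomposition once the $K$-local extension property is unpacked. The only subtlety worth stating explicitly is that the bound is tight in general (consider a single lateral edge between two leaves of a depth-$K/2$ path, which collapses one $d_T$-unit to $1/K$), so the factor $K$ on the lower side cannot be improved without further structural assumptions. This tightness is what motivates pairing the lemma with the high-temperature correlation bounds of Theorem~\ref{thm:two-sided-correlation-bounds} to transfer the tree-based obstructions of Section~\ref{sec:euclidean-barrier} to tree-like graphs up to constants depending on $K$.
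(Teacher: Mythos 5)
Your proof is correct and follows essentially the same approach as the paper: the upper bound via $E_T\subseteq E_G$, and the lower bound by decomposing a shortest $G$-path edge by edge and bounding each step's $T$-distance by $K$ (the paper phrases this as replacing each lateral edge with its $T$-path, which is the same computation). The only cosmetic difference is that you invoke the triangle inequality explicitly where the paper builds a replacement walk, but the two are identical in content.
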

\begin{proof}
	We shall again treat the upper and lower bound separately.
	\paragraph{Upper bound.} Since $E_T \subseteq E_G$, any path in $T$ is a valid path in $G$. Thus $d_G \leq \dT$.
	
	\paragraph{Lower bound.} Consider a shortest path $P_G$ between $u$ and $v$ in $G$. Replace every lateral edge $e=(x,y)$ in $P_G$ with the unique path in $T$ between $x$ and $y$. By definition, this tree-path has length $\dT(x,y) \leq K$. The resulting path lies entirely in $T$ and has length at most $K\mathrm{length}(P_G)=K d_G(u,v)$.
 Thus $\dT(u,v) \leq K d_G(u,v)$.
\end{proof}

\begin{corollary}[Extension of results]
	If the graph admits a tree backbone $T$ such that (i) correlations induce a metric quasi-isometric to $\dT$, and (ii) the supervised task family is defined w.r.t. $T$, then the bounds carry through with constants depending on $K$ and the correlation-decay parameters:
	\begin{enumerate}
		\item \emph{Euclidean representation lower bound.} If $d_{\mathrm{corr}}$ is quasi-isometric to the tree backbone distance $\dT$, then the Euclidean collision argument applies with constants distorted by the quasi-isometry parameters. In particular, a cut that is $O(1/R)$-Lipschitz in $\dT$ remains $O(1/R)$-Lipschitz in $d_{\mathrm{corr}}$ up to constants, and the resulting Euclidean Lipschitz lower bound remains exponential in $R/k$ (up to $K$-dependent constants in the exponent).
		\item \emph{Rank lower bound.} The wavelet/trace argument depends only on the leaf partition induced by the tree backbone and is unaffected by adding lateral edges on the same vertex set.
		\item \emph{Hyperbolic embedding.} If $d_G$ and $\dT$ are bi-Lipschitz equivalent up to factor $K$ as above, then composing this equivalence with a $D$-bi-Lipschitz embedding of $(V,\dT)$ into $\mathbb H^k_\kappa$ yields an $O(KD)$-distortion embedding of $(V,d_G)$ into $\mathbb H^k_\kappa$.
	\end{enumerate}
\end{corollary}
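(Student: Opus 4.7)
The three items correspond to three independent robustness arguments: a quasi-isometry transfer for the Euclidean barrier, a leaf-partition-only argument for the rank bound, and a composition of bi-Lipschitz maps for hyperbolic embedding. Throughout I would fix the quasi-isometry constants from the preceding lemma ($\tfrac{1}{K}\dT \le d_G \le \dT$) together with the correlation-decay quasi-isometry from Theorem~\ref{thm:two-sided-correlation-bounds}, so that $d_{\mathrm{corr}}$ and $\dT$ are comparable up to multiplicative constants $A,a$ and an additive constant $b$.

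\textbf{Item 1 (Euclidean).} The plan is to rerun the two-step pipeline of Section~\ref{sec:euclidean-barrier} with $\dT$ replacing $d_{\mathrm{corr}}$ at the start and then translate back. First, Lemma~\ref{lem:deep-crushing} only uses $\rge$ (exponential leaf count and no dominating mid-depth subtree) and the fact that any two leaves in distinct depth-$R/2$ subtrees satisfy $\abs{\mathrm{path}(u,v)}\ge R$ in the tree backbone; this is purely combinatorial and unaffected by lateral edges, so the same collided pair $(u^*,v^*)$ exists with $\dT(u^*,v^*)\ge R$, hence $d_{\mathrm{corr}}(u^*,v^*)\gtrsim aR - b$ by the quasi-isometry. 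Second, the canonical cut $g_L$ of Theorem~\ref{thm:lipschitz-capacity-barrier} is defined from subtree membership in the backbone $T$, so its Lipschitz constant with respect to $\dT$ is $O(1/R)$; the quasi-isometry then gives $\mathrm{Lip}_{d_{\mathrm{corr}}}(g_L)=O(1/R)$ up to constants depending only on $A,a,b$ (and hence on $K$ and the correlation-decay parameters). The collision/approximation dichotomy then forces $\mathrm{Lip}_{\norm{\cdot}_2}(h)\ge c'B^{-1}\exp\bigl((\log m - 4\eta)R/(2k)\bigr)$, with only the constants renormalized.

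\textbf{Item 2 (Rank).} This is the cleanest step. The tree-Haar basis $\mathcal W$ in Definition~\ref{def:tree-haar-wavelets} is defined entirely from the sets $\{S_v\}$ of leaves descending from internal nodes of $T$; nothing in its definition or in the trace identity $\sum_{\psi\in\mathcal W}\psi\psi^\top = P_{\mathbf 1^\perp}$ (Lemma~\ref{lem:orthonormality-and-completeness}) references the edge set. Adding lateral edges changes neither the leaf set $L_R$ nor the ancestry partition, so $\mathcal W$ is unchanged and Theorem~\ref{thm:alignment-bound} and Corollary~\ref{cor:bottleneck-representation} apply verbatim on the vertex set $V$ of $G$.

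\textbf{Item 3 (Hyperbolic).} Let $\phi_T:(V,\dT)\to \mathbb{H}^k_\kappa$ be a $D$-bi-Lipschitz embedding from Theorem~\ref{thm:hyperbolic-embedding-existence} (with curvature chosen for the backbone branching), and let $\iota:(V,d_G)\to(V,\dT)$ be the identity, which the quasi-isometry lemma shows is $K$-bi-Lipschitz. Composing, $\phi\doteq \phi_T\circ\iota$ satisfies
\begin{equation*}
    \frac{1}{KD}\,d_G(u,v)\ \le\ \dH(\phi(u),\phi(v))\ \le\ KD\,d_G(u,v),
\end{equation*}
i.e.\ distortion $O(KD)$. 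Composing once more with the correlation-decay quasi-isometry converts this into an $O(KD)$-bi-Lipschitz embedding of $(V,d_{\mathrm{corr}})$ (up to the bounded additive term from Theorem~\ref{thm:two-sided-correlation-bounds}, which is absorbed into the multiplicative distortion on scales $\gtrsim 1$).

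\textbf{Main obstacle.} The only genuinely delicate point is Item~1: the additive slack $b$ in the $d_G$-to-$d_{\mathrm{corr}}$ quasi-isometry is harmless for the lower bound on $d_{\mathrm{corr}}(u^*,v^*)$ (since $\dT\ge R$ dominates the constant $b$ for large $R$), but one must verify that the extended target $g$ (via McShane on $(V_{\le R},d_{\mathrm{corr}})$) still satisfies the two-point separation $\abs{g(u^*)-g(v^*)}=2$ on the collided pair; this follows because the labels are determined by backbone-subtree membership, which the quasi-isometry does not disturb. Everything else is constant tracking.
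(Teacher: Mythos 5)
The paper states this corollary without a proof — it is a robustness addendum at the end of Appendix F and is left to the reader. Your proposal supplies the missing argument, and it is correct: Item~2 is indeed a purely combinatorial observation (the wavelet basis $\mathcal W$ and the trace identity of Lemma~\ref{lem:orthonormality-and-completeness} depend only on the ancestry partition $\{S_v\}$, not on $E_G$); Item~3 is the standard composition of a $K$-bi-Lipschitz map with a $D$-bi-Lipschitz map; and Item~1 correctly isolates that Lemma~\ref{lem:deep-crushing} is a statement about $(L_R,\dT)$ under $\rge$, so lateral edges do not disturb the collision, and the only thing that changes is the constant in the Lipschitz estimate $\mathrm{Lip}_{d_{\mathrm{corr}}}(g_L)=O(1/R)$ after translating through the quasi-isometry $a\,\dT-b\le d_{\mathrm{corr}}\le A\,\dT$.

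Two minor points worth tightening if you wanted to write this out rigorously. First, in Item~3 your displayed two-sided bound $\tfrac{1}{KD}d_G\le \dH\le KD\,d_G$ is valid but not the tightest: the composition actually gives $\tfrac{1}{D}d_G\le\dH\le KD\,d_G$, since the lower bound on $\dT$ in terms of $d_G$ is just $\dT\ge d_G$. In the paper's $(s,D)$-bi-Lipschitz normalization this is a $(\sqrt K, \sqrt K D)$-embedding, still $O(KD)$ distortion, so the claim survives; but it is worth noting the asymmetry. Second, the remark that the additive slack $b$ from Theorem~\ref{thm:two-sided-correlation-bounds} is ``absorbed into the multiplicative distortion on scales $\gtrsim 1$'' is correct precisely because all nontrivial tree distances are bounded below by $\min_e w_e>0$ (or by $\lambda$ in the homogeneous case), so $b$ is a bounded multiple of the minimum scale; spelling out that $a\,\dT(u,v)-b\ge (a-b/\lambda)\dT(u,v)$ for $\dT\ge\lambda$ would close the gap cleanly. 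Otherwise the proposal is the argument the paper intends.
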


\end{document}